\newcommand{\Prb}{\mathbb{P}}
\newcommand{\E}{\mathbb{E}}
\newcommand{\eq}{\stackrel{d}{=}}
\newcommand{\tP}{\tilde \Prb}
\newcommand{\e}{\varepsilon}
\newcommand{\te}{\tilde \e}
\newcommand{\R}{\mathbb{R}}
\newcommand{\cW}{{{\mathcal W}}}
\newcommand{\bll}{\boldsymbol{1}}
\newcommand{\tE}{\tilde \E}
\newcommand{\abs}[1]{\left|#1\right|}
\newcommand{\norm}[1]{\left\|#1\right\|}
\newcommand{\ip}[1]{\left\langle#1\right\rangle}
\newtheorem{assumption}{Assumption}
\newtheorem{prop}{Proposition}
\newtheorem{defi}{Definition}
\begin{document}

\title{A Continuous-time Stochastic Gradient Descent Method for Continuous Data}

\author{\name Kexin Jin \email kexinj@math.princeton.edu \\
       \addr Department of Mathematics\\
       Princeton University\\
       Princeton, NJ 08544-1000, USA
       \AND
       \name Jonas Latz \email j.latz@hw.ac.uk \\
       \addr School of Mathematical and Computer Sciences\\
       Heriot-Watt University\\
       Edinburgh, EH14 4AS, United Kingdom
       \AND
       \name Chenguang Liu \email C.Liu-13@tudelft.nl \\
       \addr Delft Institute of Applied Mathematics\\
       Technische Universiteit Delft\\
       2628 Delft, The Netherlands
       \AND 
       \name Carola-Bibiane Sch\"onlieb \email cbs31@cam.ac.uk \\
       \addr Department of Applied Mathematics and Theoretical Physics\\
       University of Cambridge\\
       Cambridge, CB3 0WA, United Kingdom
       }

\editor{}

\maketitle

\begin{abstract}
Optimization problems with continuous data appear in, e.g., robust machine learning, functional data analysis, and variational inference. Here, the target function is given as an integral over a family of (continuously) indexed target functions -- integrated with respect to a probability measure. Such problems can often be solved by stochastic optimization methods:  performing optimization steps with respect to the indexed target function with randomly switched indices. 

In this work, we study a continuous-time variant of the stochastic gradient descent algorithm for optimization problems with continuous data. This so-called stochastic gradient process consists in a gradient flow minimizing an indexed target function that is coupled with a continuous-time index process determining the index.
Index processes are, e.g., reflected diffusions, pure jump processes, or other L\'evy processes on compact spaces. Thus, we study multiple sampling patterns for the continuous data space and allow for data simulated or streamed at runtime of the algorithm. We analyze the approximation properties of the stochastic gradient process and study its longtime behavior and ergodicity under constant and decreasing learning rates. We end with illustrating the applicability of the stochastic gradient process in a polynomial regression problem with noisy functional data, as well as in a physics-informed neural network.
\end{abstract}

\begin{keywords}
  Stochastic optimization, functional data analysis, robust learning, arbitrary data sources, Markov processes
\end{keywords}

\section{Introduction}
The training of a machine learning model is often represented through an optimization problem. The goal is to calibrate the model's parameters to optimize its goodness-of-fit with respect to training data. The goodness-of-fit is usually quantified through a loss function that sums up the losses from the misrepresentation of every single training data set, see, e.g., \citet{Goodfellow2016} or \citet{Hansen2010,LeCam90} for similar optimization problems in imaging and statistics.
In `big data' settings, the sum of these loss functions consists of thousands or millions of terms, making classical optimization methods computationally infeasible.
Thus, efficiently solving optimization problems of this form has been a focus of machine learning and optimization research in the past decades. Here, methods often  build upon the popular stochastic gradient descent method.

Originally, stochastic gradient descent was proposed by \citet{RobbinsMonro} to optimize not only sums of loss functions, but also expectations of randomized  functions.\footnote{Actually, the `stochastic approximation method' of \citet{RobbinsMonro} aims at finding roots of functions that are given as expectations of randomized functions. The method they construct resembles stochastic gradient descent for a least squares loss function.} Of course, a normalized sum is just a special case of an expected value, making stochastic gradient descent available for the kind of training problem described above. Based on stochastic gradient descent ideas, improved algorithms have been proposed for optimizing sums of loss functions, such as \citet{Chambolle2018,Defazio2014,Duchi2011}. Unfortunately, these methods often specifically target sums of loss functions and are often infeasible to optimize general expected values of loss functions.

The optimization of expected values of loss functions appears in the presence of countably infinite and continuous data in functional data analysis and non-parametric statistics (e.g., \citet{Sinova2018}), physics-informed deep learning (e.g., \citet{PINN}), inverse problems  (e.g., \citet{Bredies2018}), and continuous data augmentation/adversarial robustness (e.g., \citet{cohen,Shorten2019,AdversarialRobustRL}). Some of these problems are usually studied after discretising the data. Algorithms for discrete data sometimes deteriorate at the continuum limit, i.e. as the number of data sets goes to infinity. Thus, we prefer studying the continuum case immediately. Finally, `continuous data' can also refer to general noise models. Here, expected values are minimized in robust optimization (e.g.,  \citet{Nemirovski2009}), variational Bayesian inference (e.g., \citet{Cherief2019}), and optimal control (e.g., \citet{May2013}). Overall,  the  optimization of general expected values is a very important task in modern data science, machine learning, and related fields.

In this work, we study stochastic gradient descent for general expected values in a continuous-time framework. We now proceed with the formal introduction of the optimization problem, the stochastic gradient descent algorithm, its continuous-time limits, and current research in this area.

\subsection{Problem setting and state of the art} \label{subsec_problemsett}
We study optimization problems of the form
\begin{equation} \label{Eq:OptProb}
    \min_{\theta \in X} \Phi(\theta) :=  \int_S f(\theta, y) \pi(\mathrm{d}y),
\end{equation}
where $X := \mathbb{R}^K$, $S$ is a Polish space,  $f: S \times X \rightarrow \mathbb{R}$ is a measurable function that is continuously differentiable in the first variable, and $\pi$ is a probability measure on $S$. Moreover, we assume that the integral above always exists.
We refer to $X$ as \emph{parameter space}, $S$ as \emph{index set}, $\Phi$ as \emph{full target function}, and $f$ as \emph{subsampled target function}. In these optimization problems, it is usually impossible or intractable to evaluate the integral $\Phi(\theta)$ or its gradient $\nabla \Phi(\theta)$ for $\theta \in X$. 
Hence, traditional optimization algorithms, such as steepest gradient descent or Newton methods are not applicable.

As mentioned above, it is possible to employ stochastic optimization methods, such as the \emph{stochastic gradient descent (SGD)} method, see \citet{KushnerYin,RobbinsMonro}. The stochastic gradient descent method for \eqref{Eq:OptProb} proceeds through the following discrete-time dynamic that iterates over $n \in \mathbb{N}:= \{1,2,\ldots \}$:
\begin{align} \label{Eq:SGD_discrete_time}
    \theta_{n} = \theta_{n-1} - \eta_n \nabla_{\theta}f(\theta_{n-1}, y_n), 
\end{align}
where $y_1, y_2, \ldots \sim \pi$ independent and identically distributed (i.i.d.), $(\eta_n)_{n=1}^\infty \in (0, \infty)^\mathbb{N}$ is a non-increasing sequence of \emph{learning rates}, and $\theta_0 \in X$ is an appropriate initial value. 
Hence, SGD is an iterative method that employs only the gradient of the integrand $f$, but not $\Phi$.  
SGD converges to the minimizer of $\Phi$, if $\eta_n \rightarrow 0$, as $n \rightarrow \infty$, sufficiently slowly, and $f(\cdot, y)$ $(y \in S)$ is strongly convex; see, e.g., \citet{Bubeck}. Additionally, SGD is used in practice also for non-convex optimization problems and with constant learning rate. The constant learning rate setting is popular especially due to its regularizing properties; see \citet{Ali20a,smith2021on}.

To understand, improve, and study discrete-time dynamical systems, it is sometimes advantageous to represent them in continuous time, see, e.g. the works by \citet{deWiljes2018,Kovachki21,Trillos20}.  Continuous-time models allow us to concentrate on the underlying dynamics and omit certain numerical considerations. Moreover, they give us natural ways to construct new, efficient algorithms.

The discrete-time dynamic in \eqref{Eq:SGD_discrete_time} is sometimes represented through a continuous-time diffusion process, see \citet{Ali20a,Li2019, Weinan2, Mandt2016, Mandt2017,wojtowytsch2021stochastic}:
$$
\mathrm{d} \theta_t = -\nabla \Phi(\theta_t) \mathrm{d}t + \sqrt{\eta(t)} \Sigma(\theta_t)^{1/2} \mathrm{d}W_t, 
$$
where $\Sigma(\theta) = \int (\nabla_\theta f(\theta; y)- \nabla_\theta\Phi(\theta)) \otimes (\nabla_\theta f(\theta; y)- \nabla_\theta \Phi(\theta)) \pi(\mathrm{d}y)$, $(W_t)_{t \geq 0}$ is a $K$-dimen\-sional Brownian motion, and $(\eta(t))_{t \geq 0}$ is an interpolation of the learning rate sequence.  While this diffusion approach is suitable to describe the dynamic of the moments of SGD, it does not immediately allow us to construct new stochastic optimization algorithms, as the system depends on the inaccessible $\nabla \Phi$.

\begin{figure}
    \centering
    \input{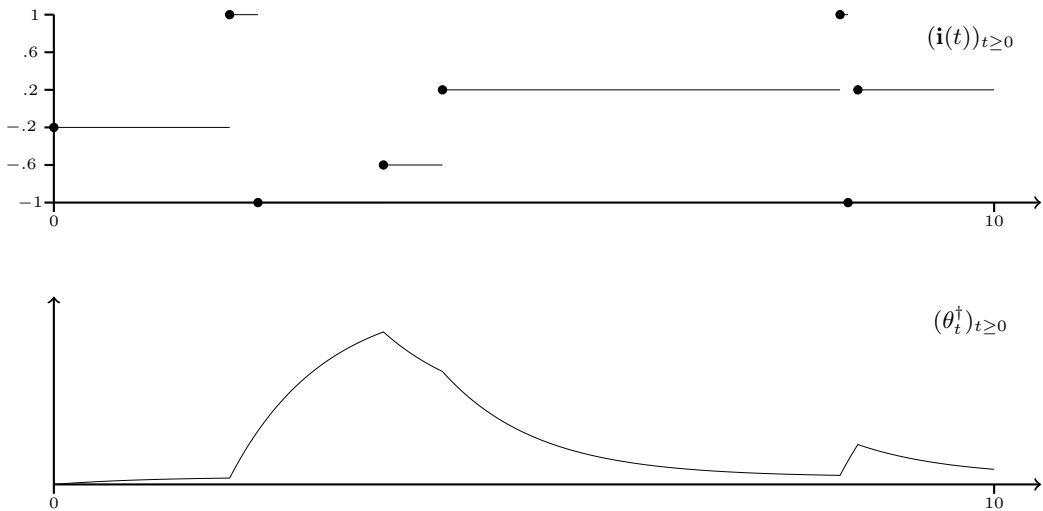}
   \caption{Cartoon of the stochastic gradient process $(\theta^\dagger_t)_{t \geq 0}$ with index process $(\mathbf{i}(t))_{t \geq 0}$ on the discrete index set $S:= \{-1, -0.6,\ldots,1\}$. The index process is a Markov pure jump process on $S$. The process $(\theta^\dagger_t)_{t \geq 0}$ aims at optimizing the $\mathrm{Unif}(S)$-integral of the subsampled target functional is $f(\theta,y) := \frac{1}{2}(\theta - y^2)^2$ $(\theta \in X:=\mathbb{R}, y \in S)$.}
    \label{fig:cartoon_discrete}
\end{figure}

A continuous-time representation of stochastic gradient descent that does not depend on $\Phi$ has recently been proposed by \citet{Latz}. This work only considers the discrete data case, i.e., $S$ is finite and $\pi := \mathrm{Unif}(S)$. SGD is represented by the \emph{stochastic gradient process} $(\theta^\dagger_t)_{t \geq 0}$. It is defined through the coupled dynamical system
\begin{equation}\label{eq_SGPC_discrete}
    \mathrm{d} \theta_t^\dagger = - \nabla_{\theta} f(\theta_t^\dagger; \mathbf{i}(t)) \mathrm{d}t,
\end{equation}
where $(\mathbf{i}(t))_{t \geq 0}$ is a suitable continuous-time Markov process on $S$, which we call \emph{index process}. Hence, the process $(\theta_t^\dagger)_{t \geq 0}$ represents gradient flows with respect to the subsampled target functions that are switched after random waiting times. The random waiting times are controlled by the continuous-time Markov process $(\mathbf{i}(t))_{t \geq 0}$. 
We show an example of the coupling of exemplary process $(\mathbf{i}(t))_{t \geq 0}$ and $(\theta_t^\dagger)_{t \geq 0}$ in Figure~\ref{fig:cartoon_discrete}. The setting is $S := \{-1, -0.6,\ldots,1\}$, $\pi := \mathrm{Unif}(S)$, $X:= \mathbb{R}$, and $f(\theta, y) := \frac{1}{2}(\theta - y^2)^2$ $(\theta \in X, y \in S)$. There, we see that the sample path of $(\theta_t^\dagger)_{t \geq 0}$ is piecewise smooth, with non-smooth behavior at the jump times of   $(\mathbf{i}(t))_{t \geq 0}$.

If the process $(\mathbf{i}(t))_{t \geq 0}$ is homogeneous-in-time, the dynamical system represents a constant learning rate. Inhomogeneous $(\mathbf{i}(t))_{t \geq 0}$ with decreasing mean waiting times, on the other hand, model a decreasing learning rate. Under certain assumptions, the process $(\theta_t^\dagger)_{t \geq 0}$ converges to a unique stationary measure when the learning rate is constant or to the minimizer of $\Phi$ when the learning rate decreases.

\subsection{This work.} \label{Subsec_Intro_thisWork}
We now briefly introduce the continuous-time stochastic gradient descent methods that we study throughout this work. Then, we summarize our main contributions and give a short paper outline.

In the present work, we aim to generalize the dynamical system \eqref{eq_SGPC_discrete} to include more general spaces $S$ and probability measures $\pi$  -- studying the more general optimization problems of type \eqref{Eq:OptProb}.
We proceed as follows: We define a stationary continuous-time Markov process $(V_t)_{t \geq 0}$ on $S$ that is geometrically ergodic and has $\pi$ as its stationary measure. This process $(V_t)_{t \geq 0}$ is now our \emph{index process}. Similarly to \eqref{eq_SGPC_discrete}, we then couple $(V_t)_{t \geq 0}$ with the following gradient flow:
\begin{equation} \label{Eq_SGPC}
    \mathrm{d} \theta_t = - \nabla_{\theta} f(\theta_t, V_t) \mathrm{d}t.
\end{equation}
Note that the index process $(V_t)_{t \geq 0}$ can be considerably more general than the Markov jump processes studied by \citet{Latz}; we discuss  examples below and in Section~\ref{mainprocess}.
As the dynamical system \eqref{Eq_SGPC} contains the discrete version \eqref{eq_SGPC_discrete} as a special case, we refer to $(\theta_t)_{t \geq 0}$ also as \emph{stochastic gradient process}.

We give an example for $(\theta_t, V_t)_{t \geq 0}$ in Figure~\ref{fig:cartoon_continuous}. There, we consider $S :=[-1,1]$,  $\pi := \mathrm{Unif}[-1, 1]$, $X:= \mathbb{R}$, and $f(\theta,y) := \frac{1}{2}(\theta - y^2)^2$ $(\theta \in X, y \in S)$. A suitable choice for  $(V_t)_{t \geq 0}$ is  a reflected Brownian motion on $[-1, 1]$.  Although it is coupled with $(V_t)_{t \geq 0}$, the process  $(\theta_t)_{t \geq 0}$ appears to be relatively smooth. This may be due to the smoothness of the subsampled target function $f$. 
Moreover, we note that the example in Figure~\ref{fig:cartoon_discrete} is a discretized data version of the example here in Figure~\ref{fig:cartoon_continuous}.

\begin{figure}
    \centering
    \input{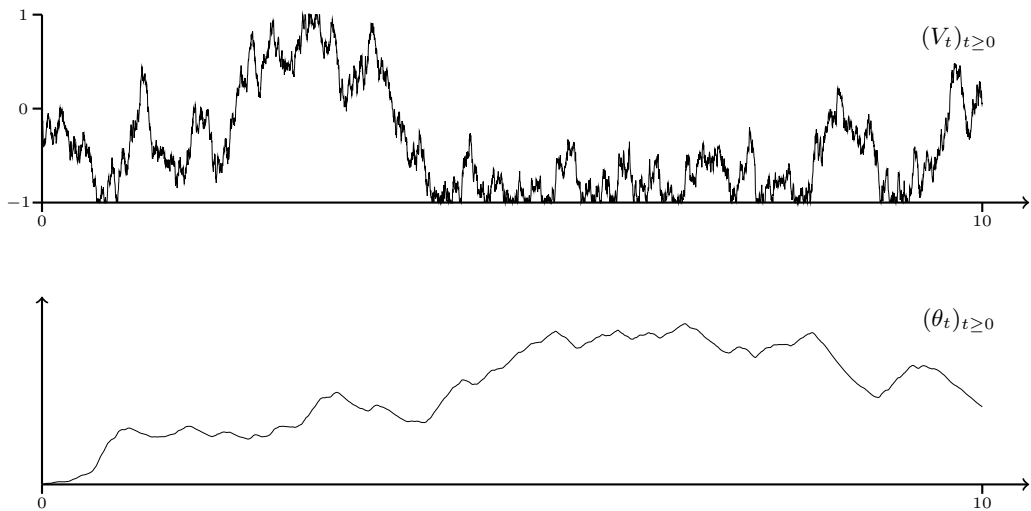}
   \caption{Cartoon of the stochastic gradient process $(\theta_t)_{t \geq 0}$ with index process $(V_t)_{t \geq 0}$, with continuous index set $S:= [-1,1]$. The index process is a reflected Brownian motion on $S$.}
    \label{fig:cartoon_continuous}
\end{figure}

More similarly to the discrete data case \eqref{eq_SGPC_discrete}, one could also choose $(V_t)_{t \geq 0}$ to be a Markov pure jump process on $S$ that has $\pi$ as a stationary measure.
Indeed, the reflected Brownian motion was constructed rather artificially. Sampling from $\mathrm{Unif}[-1, 1]$ is not actually difficult in practice and we just needed a way to find a continuous-time Markov process that is stationary with respect to $\mathrm{Unif}[-1, 1]$. However, there are cases, where one may not be able to sample independently from $\pi$. For instance, $\pi$ could be the measure of interest in a statistical physics simulation or Bayesian inference.  In those cases, Markov chain Monte Carlo methods are used to approximate $\pi$ through a Markov chain stationary with respect to it, see, e.g., \citet{Robert2004}. In other cases, the data might be time series data that is streamed at runtime of the algorithm -- a related problem has been studied by \citet{Sirignano2017SIFIN}.
Hence, in the work, we also discuss stochastic optimization in those cases or -- more generally -- stochastic optimization with respect to data from arbitrary sources.

As the index process $(V_t)_{t \geq 0}$ is stationary, the stochastic gradient process  as defined above would, again, represent the situation of a constant learning rate $(\eta_n)_{n =1}^\infty$. 
However, as  before  we usually cannot hope for convergence to a stationary point if there is not a sense of a decreasing learning rate.
Hence, we need to introduce an inhomogeneous variant of $(V_t)_{t \geq 0}$ that represents a decreasing learning rate.

We start with the stochastic process $(V_t^{\rm dc})_{t \geq 0}$ that represents the index process associated to the discrete-time stochastic gradient descent dynamic \eqref{Eq:SGD_discrete_time} with constant learning rate parameter $\eta = 1$. This index process is given by
$$
V_t^{\rm dc} = \sum_{n=1}^\infty y_n\mathbf{1}[t \in [n-1,n) ] = \sum_{n=1}^\infty y_n\mathbf{1}[t - n +1 \in [0,1) ] \qquad (t \geq 0),
$$
where $y_1, y_2, \ldots \sim \pi$ i.i.d.\ and  $\mathbf{1}[\cdot]$ is the indicator function: $\mathbf{1}[{\rm true}] = 1$ and $\mathbf{1}[{\rm false}] = 0$.
We now want to turn the process $(V_t^{\rm dc})_{t \geq 0}$ into the index process $(V_t^{\rm dd})_{t \geq 0}$ that represents a decreasing learning rate $(\eta_n)_{n=1}^\infty$.  It is defined through:
$$(V_t^{\rm dd})_{t \geq 0}  = \sum_{n=1}^\infty y_n\mathbf{1}\left[t \in \left[H_{n-1}, H_n\right) \right] =\sum_{n=1}^\infty y_n\mathbf{1}\left[ \frac{t- H_{n-1}}{\eta_n} \in \left[0, 1\right) \right], $$
where we denote $H_n := \sum_{m = 1}^n \eta_m$.
Hence, we can represent $(V_t^{\rm dd})_{t \geq 0} := (V_{\beta(t)}^{\rm dc})_{t \geq 0}$, where $\beta: [0, \infty) \rightarrow [0, \infty)$ is given by
\begin{equation} \label{eq_beta_and_eta}
    \beta(t) = \sum_{n= 1}^\infty \frac{t+n-1-H_{n-1}}{\eta_n}\mathbf{1}\left[t \in \left[H_{n-1}, H_n\right) \right] \qquad (t \geq 0)
\end{equation}
is a piecewise linear, non-decreasing function with $\beta(t) \rightarrow \infty,$ as $t \rightarrow \infty$. 

Following this idea, we turn our homogeneous index process $(V_t)_{t \geq 0}$ that represents a constant learning rate into an inhomogeneous process with decreasing learning rate using a suitable rescaling function $\beta$.
In that case, we obtain a stochastic gradient process of type 
\begin{equation*}
    \mathrm{d} \xi_t = - \nabla_{\xi} f(\xi_t, V_{\beta(t)}) \mathrm{d}t,
\end{equation*}
which we will use to represent the stochastic gradient descent algorithm with decreasing learning rate.
Note that while we require $\beta$ to satisfy certain conditions that ensure the well-definedness of the dynamical system, it is not strictly necessary for it to be of the form \eqref{eq_beta_and_eta}. Actually, we later  assume that $\beta$ is smooth.

The main contributions of this work are the following:

\begin{itemize}
\item We study stochastic gradient processes for optimization problems of the form \eqref{Eq:OptProb} with finite, countably infinite, and continuous index sets $S$. 
    \item We give conditions under which the stochastic gradient process with constant learning rate is well-defined and that it can approximate the full gradient flow  $\mathrm{d} \zeta_t = - \nabla \Phi(\zeta_t) \mathrm{d}t$ at any accuracy. In addition, we study the geometric ergodicity of the stochastic gradient process and properties of its stationary measure.
    
    \item We study the well-definedness of the stochastic gradient process with decreasing learning rate and give conditions under which the process converges to the minimizer of $\Phi$ in the optimization problem \eqref{Eq:OptProb}.
    
    \item In numerical experiments, we show the suitability of our stochastic gradient process for (convex) polynomial regression with continuous data and  the (non-convex) training of physics-informed neural networks with continuous sampling of function-valued data.
\end{itemize}

This work is organized as follows. In Section~\ref{mainprocess}, we study the index process $(V_t)_{t \geq 0}$ and give examples for various combinations of index spaces $S$ and probability measures $\pi$. Then, in Sections~\ref{Sec_SGPC} and \ref{Sec_SGPD}, we analyze the stochastic gradient process with constant and decreasing learning rate, respectively. In Section~\ref{Sec_PracticalOptimisation}, we review discretization techniques that allow us to turn the continuous dynamical systems into practical optimization algorithms. We employ these techniques in Section~\ref{Sec_NumExp}, where we present numerical experiments regarding polynomial regression and the training of physics-informed neural networks. We end with conclusions and outlook in Section~\ref{Sec_conclusions}.

\section{The index process: Feller processes and geometric ergodicity}\label{mainprocess}
Before we define the stochastic gradient flow, we introduce and study the class of stochastic processes $(V_t)_{t\ge 0}$ that can be used for the data switching in (\ref{Eq_SGPC}). Moreover, we give an overview of appropriate processes for various measures $\pi$. For more background material on (continuous-time) stochastic processes, we refer the reader to the book by \citet{RY}, the book by \citet{Liggett}, and other standard literature.

Let $\mathcal{S}=(S,m)$ be a compact Polish space and
\begin{align*}
    \Omega=\{\omega: [0,\infty)\to S\ |\ \omega\ \text{is right continuous with left limits.}\}.
\end{align*}
We consider a filtered probability space $(\Omega, \mathcal{F},(\mathcal{F}_t)_{t\ge 0},(\Prb_x)_{x\in S})$, where  $\mathcal{F}$  is the smallest  $\sigma$-algebra on $\Omega$ such that the mapping $\omega \to \omega(t)$ is measurable for any $t\ge 0$ and the filtration $\mathcal{F}_t$ is right continuous. Let  $(V_t)_{t\ge 0}$ be a $(\mathcal{F}_t)_{t\ge 0}$ adapted stochastic process from $\Omega$  to $S$.  We assume that $(V_t)_{t\ge 0}$ is Feller with respect to  $(\mathcal{F}_t)_{t\ge 0}.$ $(\Prb_x)_{x\in S}$ is a collection of probability measures on $\Omega$ such that $\Prb_x(V_0=x)=1.$
For any probability measure $\mu$ on $S$, we define
$$
\Prb_\mu(\cdot):=\int_S \Prb_y(\cdot)\mu(\mathrm{d}y)
$$
and denote expectations with respect to $\Prb_x$ and $\Prb_\mu$ by $\E_x$ and $\E_\mu$, respectively.





Below we give a set of assumptions on the process $(V_t)_{t\ge 0}$. We need those to ensure that  a certain coupling property holds. We comment on these assumptions after stating them.

\begin{assumption}\label{as1.0}
Let $(V_t)_{t\ge 0}$ be a Feller process on $(\Omega, \mathcal{F}, (\mathcal{F}_t))_{t\ge 0},(\Prb_x)_{x\in S})$. We assume the following:
\begin{itemize}
    \item[(i)]  $(V_t)_{t\ge 0}$ admits a unique invariant measure $\pi$.

\item[(ii)]For any $x\in S$, there exist a family $(V^x_t)_{t\ge 0}$ and a stationary version $(V^\pi_t)_{t\ge 0}$  defined on the same probability space $(\tilde\Omega, \mathcal{\tilde F},\tilde\Prb)$  such that, $(V^x_t)_{t\ge 0}\eq (V_t)_{t\ge 0}$ in $\Prb_x$ and  $(V^\pi_t)_{t\ge 0}\eq (V_t)_{t\ge 0}$ in $\Prb_\pi$, i.e. for any $0\leq t_1< \cdots< t_n$,
$$\tilde\Prb(V_{t_1}^x\in A_1, \cdots, V_{t_n}^x \in A_n) = \Prb_x(V_{t_1}\in A_1, \cdots, V_{t_n} \in A_n),$$
$$\tilde\Prb(V_{t_1}^\pi\in A_1, \cdots, V_{t_n}^\pi \in A_n) = \Prb_\pi(V_{t_1}\in A_1, \cdots, V_{t_n} \in A_n),$$
where $A_1, \cdots, A_n \in \mathcal{B}(S)$.
\item[(iii)] Let $T^x:=\inf{\{t\ge 0\ |\ V^x_t=V^\pi_t \}}$ be a stopping time. There exist constants $C, \delta>0$ such that for any $t\geq 0$, 
$$
\sup_{x\in S}\tilde\Prb(T^x\ge t)\le C\exp({-\delta t}).
$$
\end{itemize}

\end{assumption}
First, we assume that $(V_t)_{t\geq 0}$ has a stationary measure $\pi$. 
Second, we assume that for the process $(V_t)_{t\ge 0}$ that starts from $x$ with probability $1$, we can find a coupled process $(V^x_t)_{t\ge 0}$. Also, given that the process $(V_t)_{t\ge 0}$ starts with its invariant measure $\pi$, we can find a stationary version $(V^\pi_t)_{t\ge 0}$ of $(V_t)_{t\ge 0}$. Here, the processes $(V^x_t)_{t\ge 0}$ and $(V^\pi_t)_{t\ge 0}$ are defined on the same probability space. 
Third, we assume that the processes $(V^x_t)_{t\ge 0}$ and $(V^\pi_t)_{t\ge 0}$ intersect exponentially fast. The exponential rate can be chosen uniformly in $x$ since $S$ is compact. 
With Assumption \ref{as1.0}, we have the following lemma.

\begin{lemma}[Geometric Ergodicity]\label{lemma1}
Under Assumption \ref{as1.0}, there exist constants $C, \delta>0$ such that for any $x\in S$ and $t\geq 0$,
\begin{align*}
     \sup_{A\in \mathcal{B}(S)} |\Prb_x(V_t\in A)-\pi(A)|\le C\exp({-\delta t}),
\end{align*}
 where $\mathcal{B}(S)$ is the set of all Borel measurable sets of $S$.
\end{lemma}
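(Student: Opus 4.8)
The plan is to prove the lemma by a coupling argument, converting the total-variation bound into an estimate on the meeting time of the two processes supplied by Assumption~\ref{as1.0}(ii)--(iii). First I would rewrite both quantities inside the supremum as probabilities on the common space $(\tilde\Omega,\tilde{\mathcal F},\tP)$. Assumption~\ref{as1.0}(ii) gives $\Prb_x(V_t\in A)=\tP(V^x_t\in A)$ for every $A\in\mathcal{B}(S)$. For the second term, invariance of $\pi$ yields $\Prb_\pi(V_t\in A)=\pi(A)$ for all $t\ge 0$, and the stationary version satisfies $\tP(V^\pi_t\in A)=\Prb_\pi(V_t\in A)=\pi(A)$. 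Hence
$$\sup_{A\in\mathcal{B}(S)}\abs{\Prb_x(V_t\in A)-\pi(A)}=\sup_{A\in\mathcal{B}(S)}\abs{\tP(V^x_t\in A)-\tP(V^\pi_t\in A)}.$$

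Second, I would apply the coupling inequality. On the event $\{V^x_t=V^\pi_t\}$ the indicators of $\{V^x_t\in A\}$ and $\{V^\pi_t\in A\}$ coincide, so splitting each probability according to whether the two processes agree at time $t$ leaves only the contributions from $\{V^x_t\ne V^\pi_t\}$, each of which lies in $[0,\tP(V^x_t\ne V^\pi_t)]$. This gives, uniformly in $A$,
$$\abs{\tP(V^x_t\in A)-\tP(V^\pi_t\in A)}\le \tP(V^x_t\ne V^\pi_t).$$

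Third, and this is the step I expect to require the most care, I must relate $\{V^x_t\ne V^\pi_t\}$ to the meeting time $T^x$, i.e. establish the \emph{stickiness} inclusion $\{V^x_t\ne V^\pi_t\}\subseteq\{T^x>t\}$. This is exactly the ``coupling property'' the text alludes to, and it is where the Feller hypothesis enters: since $T^x$ is a stopping time and a Feller process is strong Markov, I would replace $(V^x_t)$ by the spliced process that follows $V^x$ on $[0,T^x)$ and $V^\pi$ on $[T^x,\infty)$. By the strong Markov property this spliced process has the same finite-dimensional laws as $(V^x_t)$ -- so it is a legitimate substitute that does not disturb the marginal used in the first display -- and by construction it coincides with $(V^\pi_t)$ for all $t\ge T^x$. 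Consequently $\{V^x_t\ne V^\pi_t\}\subseteq\{T^x>t\}$ and therefore $\tP(V^x_t\ne V^\pi_t)\le\tP(T^x>t)\le\tP(T^x\ge t)$.

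Finally, Assumption~\ref{as1.0}(iii) provides $\sup_{x\in S}\tP(T^x\ge t)\le C\exp(-\delta t)$. Chaining the three displayed estimates and taking the supremum over $x\in S$ yields the assertion with the same constants $C,\delta$. The only genuinely delicate point is the splicing in the third step -- verifying that forcing the two copies to move together after $T^x$ preserves both marginal laws -- while the remaining steps are the standard coupling inequality and bookkeeping with the compactness of $S$ ensuring the rate is uniform in the starting point.
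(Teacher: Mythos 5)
Your proposal is correct and follows essentially the same route as the paper's proof: splice the coupled process at the meeting time $T^x$, invoke the strong Markov property to preserve the law of $(V^x_t)_{t\ge 0}$, apply the coupling inequality, and bound the result by $\tP(T^x\ge t)$ using Assumption~\ref{as1.0}(iii). The only cosmetic difference is that your coupling inequality gives the bound without the factor of $2$ that appears in the paper's computation, which is immaterial since the constant is absorbed into $C$.
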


\begin{proof} 
For any given $x\in S$,
we construct the following process by coupling $(V^x_t)_{t\ge 0}$ and $(V^\pi_t)_{t\ge 0}$:
\begin{align*}
    \tilde V^x_t=\left\{
\begin{aligned}
  &V^x_t,\ \ \ 0\le t\le T^x,\\
 &V^\pi_t,\ \ \  t> T^x.
\end{aligned}
\right.
\end{align*}
By the strong Markov property, $(\tilde V^x_t)_{t\ge 0}\eq  (V^x_t)_{t\ge 0}$. For any $A\in \mathcal{B}(S)$, notice that 
\begin{align*}
    \abs{\Prb_x(V_t\in A)-\pi(A)}\\
    =& |\tP(V^x_t\in A)-\tP(V^\pi_t\in A)|\\
    =&|\tP(\tilde V^x_t\in A)-\tP(V^\pi_t\in A)|\\
    =& |\tP(\tilde V^x_t\in A, \tilde V^x_t\ne V^\pi_t) + \tP(\tilde V^x_t\in A, \tilde V^x_t = V^\pi_t) \\
    &\ \ \ -(\tP(V^\pi_t\in A, \tilde V^x_t\ne V^\pi_t)+\tP(V^\pi_t\in A, \tilde V^x_t= V^\pi_t))|\\
     =& |\tP(\tilde V^x_t\in A, \tilde V^x_t\ne V^\pi_t)   -\tP(V^\pi_t\in A, \tilde V^x_t\ne V^\pi_t)|\\
    \le& 2 \tP(\tilde V^x_t\ne V^\pi_t)\\
    \le& 2 \tP(T^x\ge t)\le C\exp({-\delta t}).
\end{align*}
From the third assumption in Assumption \ref{as1.0}, $C$ and $\delta$ are independent of $x$ and this completes the proof.
\end{proof}

In the lemma above, we have shown geometric ergodicity of $(V_t)_{t \geq 0}$ in the total variation distance. Next, we show that the same rate of convergence of $(V_t)_{t \geq 0}$ holds in the weak topology.

\begin{corollary}\label{corergodic}
 Under Assumption \ref{as1.0}, there exist constants $C, \delta>0$ such that for any $h\in \mathcal{C}(S),$ i.e. the set of all continuous function on $S$, we have
\begin{align*}
    \sup_{x\in S}\abs {\E_x[h(V_t)]- \int_S h(y)\pi(\mathrm{d}y)} \le C\norm{h}_{\infty}\exp({-\delta t})
\end{align*}
where $\norm{h}_{\infty}:=\sup_{x\in S}|h(x)|$
\end{corollary}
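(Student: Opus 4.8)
The plan is to derive this weak-topology estimate directly from the total-variation bound already established in Lemma~\ref{lemma1}, using the standard duality between the total variation distance and integration against bounded functions. Write $\mu_t^x(\cdot) := \Prb_x(V_t \in \cdot)$ for the law of $V_t$ under $\Prb_x$, which is a probability measure on $S$. Then for $h \in \mathcal{C}(S)$ we have $\E_x[h(V_t)] = \int_S h(y)\,\mu_t^x(\mathrm{d}y)$, so the quantity to be controlled is exactly
\[
    \E_x[h(V_t)] - \int_S h(y)\pi(\mathrm{d}y) = \int_S h(y)\,(\mu_t^x - \pi)(\mathrm{d}y),
\]
the integral of $h$ against the signed measure $\eta_t^x := \mu_t^x - \pi$, which has total mass zero.

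The key step is to bound $\abs{\int_S h\,\mathrm{d}\eta_t^x}$ by $\norm{h}_{\infty}$ times the total variation of $\eta_t^x$, and then to identify that total variation with the set-based quantity appearing in Lemma~\ref{lemma1}. Concretely, I would invoke the Hahn--Jordan decomposition: there is a Borel set $P \in \mathcal{B}(S)$ with $\eta_t^x(A \cap P) \ge 0$ and $\eta_t^x(A \cap P^c) \le 0$ for all $A \in \mathcal{B}(S)$, giving
\[
    \abs{\int_S h\,\mathrm{d}\eta_t^x} \le \norm{h}_{\infty}\, \abs{\eta_t^x}(S) = \norm{h}_{\infty}\big(\eta_t^x(P) - \eta_t^x(P^c)\big).
\]
Since $\eta_t^x$ has total mass zero, both $\eta_t^x(P) = \mu_t^x(P) - \pi(P)$ and $-\eta_t^x(P^c) = \pi(P^c) - \mu_t^x(P^c)$ are bounded by $\sup_{A \in \mathcal{B}(S)} \abs{\Prb_x(V_t\in A) - \pi(A)}$, so $\abs{\eta_t^x}(S) \le 2 \sup_{A \in \mathcal{B}(S)} \abs{\Prb_x(V_t \in A) - \pi(A)}$. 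Applying Lemma~\ref{lemma1} and absorbing the factor $2$ into the constant $C$ then yields $\abs{\E_x[h(V_t)] - \int_S h\,\mathrm{d}\pi} \le C\norm{h}_{\infty} \exp(-\delta t)$. The supremum over $x \in S$ causes no difficulty, since the constants $C,\delta$ supplied by Lemma~\ref{lemma1} are already uniform in $x$ (ultimately via the uniform coupling bound in Assumption~\ref{as1.0}(iii)).

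I do not expect a genuine obstacle here: the result is essentially a restatement of Lemma~\ref{lemma1} under the $L^1$--$L^\infty$ duality for measures, and the only points requiring a little care are the constant factor $2$ (harmlessly absorbed into $C$) and the observation that continuity of $h$ plays no role in the estimate---boundedness alone suffices. The restriction to $h \in \mathcal{C}(S)$ is therefore only to phrase the conclusion as convergence in the weak topology of measures, which on the compact space $S$ is generated by integration against continuous functions. If one preferred to avoid the Hahn--Jordan decomposition, an alternative is to approximate $h$ uniformly by simple functions $\sum_i c_i \mathbf{1}[A_i]$ and apply Lemma~\ref{lemma1} termwise; but the signed-measure argument is cleaner and delivers the sharp factor directly.
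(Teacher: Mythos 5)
Your argument is correct and follows essentially the same route as the paper: bound the difference by $\norm{h}_{\infty}$ times the total variation of the signed measure $\Prb_x(V_t\in\cdot)-\pi$, identify that total variation with $2\sup_{A\in\mathcal{B}(S)}\abs{\Prb_x(V_t\in A)-\pi(A)}$, and invoke Lemma~\ref{lemma1}. The only difference is that you spell out the Hahn--Jordan decomposition justifying the identity, which the paper simply asserts.
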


\begin{proof}
Rewrite $\E_x[h(V_t)]$ as
\begin{align*}
    \E_x[h(V_t)]= \int_S  h(y) \Prb_x(V_t\in \mathrm{d}y).
\end{align*}
Then we have
\begin{align*}
    \abs {\E_x[h(V_t)]- \int_S h(y)\pi(\mathrm{d}y)}&=\abs { \int_S  h(y) [\Prb_x(V_t\in \mathrm{d}y)-\pi(\mathrm{d}y)]}\\
    &\le \norm{h}_\infty\abs { \Prb_x(V_t)-\pi}( S),
\end{align*}
where $\frac{1}{2}\abs { \Prb_x(V_t)-\pi}( S)$ is the total variation of the measure $\Prb_x(V_t)-\pi.$ 
Notice that $$\abs { \Prb_x(V_t)-\pi}( S)=2\sup_{A\in\mathcal{B} (S)} |\Prb_x(V_t\in A)-\pi(A)|.$$
By Lemma  \ref{lemma1}, we have
\begin{align*}
    \sup_{x\in S}\abs {\E_x[h(V_t)]- \int_S h(y)\pi(\mathrm{d}y)}\le& \norm{h}_\infty\sup_{x\in S}\abs { \Prb_x(V_t)-\pi}( S)\\
    \le& 2\norm{h}_\infty\sup_{x\in S}\sup_{A\in\mathcal{B}(S)}|\Prb_x(V_t\in A)-\pi(A)|\\
    \le& C\norm{h}_{\infty}\exp({-\delta t}),
\end{align*}
which completes the proof.
\end{proof}

We now study four examples for processes that satisfy our assumptions: L\'evy processes with two-sided reflections on a compact interval,  continuous-time Markov processes on finite and countably infinite spaces, and processes on rectangular sets with independent coordinates.
\subsection{Example 1: L\'evy processes with two-sided reflection} \label{Subsec_Ex1_Levy_refle} For any $b>0,$ we say a triplet $((X_t)_{t\ge 0},(L_t)_{t\ge 0},(U_t)_{t\ge 0})$ is a solution to the Skorokhod problem of the L\'evy process $(X_t)_{t \geq 0}$ on the space $S := [0,b]$ if for all $t\geq0,$
\begin{align}\label{BMR}
    V_t= X_t +L_t-U_t,
\end{align}
where $(L_t)_{t \geq 0},\ (U_t)_{t \geq 0}$ are non-decreasing right continuous processes such that
\begin{align*}
    \int_0^\infty V_t\mathrm{d}L_t=\int_0^\infty (b-V_t)\mathrm{d}U_t=0.
\end{align*}
In other words, $(L_t)_{t \geq 0}$ and $(U_t)_{t \geq 0}$ can only increase when $(V_t)_{t \geq 0}$ is at the lower boundary $0$ or the upper boundary $b$. From \citet[Proposition~5.1]{Andersen1}, we immediately have that the process $(V_t)_{t \geq 0}$ in (\ref{BMR}) satisfies Assumption \ref{as1.0}. The geometric ergodicity follows from \citet[Remark~5.3]{Andersen1}.
As an example, the standard Brownian Motion (BM) reflected at 0 and 1 can be written as 
$$V_t = B_t + \tilde{L}_t^0 - \tilde{L}_t^1,$$
where $(B_t)_{t \geq 0}$ is a standard BM and $(\tilde{L}^a_t)_{t \geq 0}$ is the symmetric local time of $(V_t)_{t \geq 0}$ at $a$. Intuitively, a local time describes the time spent at a given point of a continuous stochastic process. The formal definition of symmetric local time of continuous semimartingales can be found, for example, in \citet[Chapter~VI]{RY}. For the optimization problem (\ref{Eq:OptProb}) with $S = [0, 1]$ and $\pi$ being the uniform measure on $S$, the corresponding stochastic process in (\ref{Eq_SGPC}) can be chosen to be this Brownian Motion with two-sided refection since its invariant measure is the uniform measure on $[0, 1]$. 
To see this, for $x\in[0, 1]$, from \citet[Theorem~5.4]{Andersen1}, 
\begin{align*}
    \pi([x,1])=\Prb(B_{\tau_x\land \tau_{x-1}}=x)=\Prb(\tau_x<\tau_{x-1})=1-x,
\end{align*}
where $\tau_a = \inf\{t\ge0| B_t = a\}$.

\subsection{Example 2: Continuous-time Markov processes} 
We consider a continuous-time Markov process $V_t$ on state space $I=\{1,2,\ldots,N\}$ with transition rate matrix
$$\mathbf{A}_N=\mathbf{\Lambda}_N-N\lambda  \mathbf{I}_{N},$$
where $\lambda>0$, $\mathbf{\Lambda}_N$ is a $N\times N$ matrix whose entry is always $\lambda$, and $\mathbf{I}_{N}$ is the identity matrix.
From \citet{Latz}, we know that the transition probability is given by
 \begin{align*}
     \Prb(V_{t+s}=i|V_s=j)=\frac{1-\exp(- \lambda Nt)}{N}+\exp(- \lambda Nt)\mathbf{1}[{i=j}].
 \end{align*}
The invariant measure $\pi$ of $V_t$ is the uniform measure on $I$, i.e. $\pi(i)=1/N$ for $i\in \{1,...,N\}$. To see that $(V_t)_{t \geq 0}$ satisfies the rest of Assumption \ref{as1.0}, consider a stationary version $(\hat V_t)_{t\ge 0}$ that is independent of $(V_t)_{t\ge 0}$. 
Let $V_0=1$. We define $T=\inf\{t\ge 0, V_t=\hat V_t\}$. Moreover, for $i,\ j\in \mathbb{N}$, we denote the $i$-th and $j$-th jump time of $(V_t)_{t\ge 0}$ and $(\hat V_t)_{t\ge 0}$ by $T_i$ and $\hat T_j$, respectively. Then we have
$$
\Prb(T=0)=\Prb(V_0=1,\hat V_0=1)=\Prb(V_0=1)\Prb(\hat V_0=1)=\frac{1}{N}.
$$
For any $i,\ j\in \mathbb{N}$, since $T_i$ and $\hat T_j$ are independent, $\Prb(T_i=\hat T_j)=0$. Let $Y_t=(V_t, \hat V_t)$ be a Markov process on $I\times I$ with transition probability:
\begin{align*}
     \Prb(Y_{t+s}=(i,j)|Y_s=(i_0,j_0))&=\frac{1-\exp(- 2\lambda Nt)}{2N}\Big(\mathbf{1}[{i=i_0}]+\mathbf{1}[{j=j_0}]\Big) \\ &\qquad +\exp(- 2\lambda Nt)\mathbf{1}[{(i,j)=(i_0,j_0)}].
\end{align*}
Thus, $T$ is the first time when $Y_t$ hits $\{(i,i)|i=1,...,N\}.$ Let the n-th jump time of $Y_t$ be $\tau_n$, for $t>0$, we have
\begin{align*}
    \Prb(T\ge t)=& \sum_{n\ge 1}\Prb(T=\tau_n,\tau_n\ge t)\\
    =&  \sum_{n\ge 1}\exp({-2(N-1)n\lambda t})\frac{N-1}{N}\frac{1}{N-1}\Big(\frac{N-2}{N-1}\Big)^{n-1}\\
    \le& C\exp({-2(N-1)\lambda t}),
\end{align*}
where the second equality follows from $$\Prb(T=\tau_n)= \frac{N-1}{N}\frac{1}{N-1}\Big(\frac{N-2}{N-1}\Big)^{n-1}$$  since there are $2N-4$ states available for the next jump.
Thus, $(V_t)_{t \geq 0}$ satisfies Assumption~\ref{as1.0}.


\subsection{Example 3: Continuous-time Markov processes with countable states}
We consider a continuous-time Markov process $(V_t)_{t \geq 0}$ on state space $\mathbb{N}_0 := \mathbb{N} \cup \{0\}$ with exponential jump times. At time $t$, if $V_t\in\mathbb{N},$ it jumps to $0$ with probability $1$ at the next jump time. Otherwise, if $V_t = 0$, it jumps to $i$ with probability $1/2^i.$   
It is easy to verify that the invariant measure $\pi$ of $V_t$ is  $\pi(\{i\})=1/2^{i+1}$. One may consider $\mathbb{N}$ as one state and view $V_t$ as a Markov process with two states.

To verify that $(V_t)_{t \geq 0}$ satisfies the rest of Assumption \ref{as1.0}, similarly to the previous example, we consider a stationary version $(\hat V_t)_{t\ge 0}$ that is independent of $(V_t)_{t\ge 0}$. 
Let $V_0=0$ and $T=\inf\{t\ge 0, V_t=\hat V_t=0\}$. For $i,\ j\in \mathbb{N}_0$, we denote the $i$-th and $j$-th jump time of $(V_t)_{t\ge 0}$ and $(\hat V_t)_{t\ge 0}$ by $T_i$ and $\hat T_j$, respectively. Then we have
$$
\Prb(T=0)=\Prb(V_0=0,\hat V_0=0)=\Prb(V_0=0)\Prb(\hat V_0=0)=\frac{1}{2}.
$$
For any $i,\ j\in \mathbb{N}_0$, since $T_i$ and $\hat T_j$ are independent, $\Prb(T_i=\hat T_j)=0$. Let $Y_t=(V_t, \hat V_t)$ be a Markov process on $\mathbb{N}_0\times \mathbb{N}_0$.
Notice that $T$ is the first time when $Y_t$ hits $(0,0).$ Let the n-th jump time of $Y_t$ be $\tau_n$, for $t>0$, we have
\begin{align*}
    \Prb(T\ge t)=& \sum_{n\ge 1}\Prb(T=\tau_n,\tau_n\ge t)\\
    =& \sum_{k\ge 0}\Prb(T=\tau_{2k+1},\tau_{2k+1}\ge t)\\
    =& \sum_{k\ge 0}\exp({-2(2k+1) t})\frac{1}{2^{k+2}}
    \le \exp({-t}),
\end{align*}
where the second and the third equality follows from $\Prb(T=\tau_{2k+1})= 2^{-k-1} $ and    $\Prb(T=\tau_{2k})=0$ for $k\ge 1$. Since $\inf\{t\ge 0, V_t=\hat V_t\}$ is upper bounded by $T$, $(V_t)_{t \geq 0}$ satisfies Assumption~\ref{as1.0}.

\subsection{Example 4: Multidimensional processes}
For multidimensional processes, Assumption \ref{as1.0} is satisfied if each component satisfies Assumption \ref{as1.0} and all components are mutually independent. We illustrate this by discussing the 2-dimensional case -- higher-dimensional processes can be constructed inductively. Multidimensional processes arise, e.g., when the underlying space $S$ is multidimensional. They also arise when the $S$ is one-dimensional, but we run multiple processes in parallel to obtain a mini-batch SGD instead of single-draw SGD.

Let $(S^1,m^1)$ and $(S^2,m^2)$ be two compact Polish spaces. We consider the probability triples $(\Omega^1, (\mathcal{F}^1_t))_{t\ge 0},(\Prb^1_a)_{a\in S^1})$ and $(\Omega^2, (\mathcal{F}^2_t))_{t\ge 0}, (\Prb^2_b)_{b\in  S^2})$ with $\Prb^1_a((V^1_0=a)=\Prb^2_b(V^2_0=b)=1$. Let  $(V^1_t)_{t\ge 0}$ and $(V^2_t)_{t\ge 0}$ be  $(\mathcal{F}^1_t))_{t\ge 0}$ and
$(\mathcal{F}^2_t))_{t\ge 0}$ adapted and from $\Omega^1$ to $S^1$ and  $\Omega^2$ to $S^2$ respectively. 
In the following proposition, we construct a 2-dimensional process $(V^1_t,V^2_t)_{t\ge 0}$ from $\Omega^1\times \Omega^2$  to $(S^1\times S^2,m^1+m^2)$ with a family of probability measures $(\Prb_{(a,b)})_{(a,b)\in S^1\times S^2}$ such that $\Prb_{(a,b)}(A\times B)=\Prb^1_a(A)\Prb^2_b(B)$ for $A\in \mathcal{F}^1$ and $B\in \mathcal{F}^2$. 

We now show that the joint process $(V^1_t,V^2_t)_{t\ge 0}$ is Feller and satisfies Assumption \ref{as1.0}, if the marginals do.
\begin{prop}
  Let $(V^1_t)_{t\ge 0}$ and $(V^2_t)_{t\ge 0}$ be c\`adl\`ag and  Feller with respect to  $(\mathcal{F}^1_t)_{t\ge 0}$ and  $(\mathcal{F}^2_t)_{t\ge 0},$ respectively, and satisfy Assumption \ref{as1.0} with probability $(\Prb^1_a)_{a\in  S^2}$ and $(\Prb^2_b)_{b\in  S^2}$, respectively.  Then  $(V^1_t,V^2_t)_{t\ge 0}$ is also c\`adl\`ag and  Feller with respect to  $\sigma(\mathcal{F}^1_t\times\mathcal{F}^2_t)_{t\ge 0}$ and satisfies Assumption \ref{as1.0} with $(\Prb_{(a,b)})_{(a,b)\in S^1\times S^2}$. 
\end{prop}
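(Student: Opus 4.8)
The plan is to exploit the independence of the two coordinates, which makes the joint transition semigroup a tensor product $P_t = P^1_t \otimes P^2_t$ of the marginal semigroups. First I would record the c\`adl\`ag property: since $S^1\times S^2$ carries the metric $m^1+m^2$, a path $t\mapsto(V^1_t,V^2_t)$ is right continuous with left limits precisely when both components are, so this is inherited from the marginals, as is adaptedness to $\sigma(\mathcal{F}^1_t\times\mathcal{F}^2_t)_{t\ge 0}$. For the Feller property I would verify the two defining conditions on functions of product form $f=g\otimes h$ (with $g\in\mathcal{C}(S^1)$, $h\in\mathcal{C}(S^2)$), where $P_t(g\otimes h)(a,b)=(P^1_tg)(a)\,(P^2_th)(b)$ by independence. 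Such $f$ lie in $\mathcal{C}(S^1\times S^2)$ and $P_t f$ does too by the marginal Feller property; strong continuity at $0$ follows from the splitting $P_t(g\otimes h)-g\otimes h=(P^1_tg-g)\otimes(P^2_th)+g\otimes(P^2_th-h)$ together with the triangle inequality in sup-norm. Since finite linear combinations of product functions are dense in $\mathcal{C}(S^1\times S^2)$ by Stone--Weierstrass and each $P_t$ is a sup-norm contraction, both properties extend to all continuous $f$ by a standard $\varepsilon/3$ argument.

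Next I would identify the invariant measure as $\pi:=\pi^1\otimes\pi^2$, where $\pi^1,\pi^2$ are the invariant measures of the marginals supplied by Assumption \ref{as1.0}(i). Invariance is checked on product functions, where $\int P_t(g\otimes h)\,\mathrm{d}\pi=\left(\int P^1_tg\,\mathrm{d}\pi^1\right)\left(\int P^2_th\,\mathrm{d}\pi^2\right)=\left(\int g\,\mathrm{d}\pi^1\right)\left(\int h\,\mathrm{d}\pi^2\right)$, and then extended to all of $\mathcal{C}(S^1\times S^2)$ by density; uniqueness I would defer and recover as a by-product of the coupling below.

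The crux is verifying Assumption \ref{as1.0}(ii)--(iii) for the joint process, and this is where I expect the main obstacle: even if each coordinate's coupled and stationary versions meet, they need not agree simultaneously, so the naive first-intersection time of the joint process is not controlled by the marginal meeting times. I would circumvent this exactly as in the proof of Lemma \ref{lemma1}, by gluing each coordinate first. For fixed $x=(a,b)$, the marginal assumption furnishes $(V^{1,a}_t,V^{1,\pi^1}_t)$ with meeting time $T^a_1$ and $(V^{2,b}_t,V^{2,\pi^2}_t)$ with meeting time $T^b_2$ on two probability spaces; on their product I set $\tilde V^{1,a}_t:=V^{1,a}_t$ for $t\le T^a_1$ and $\tilde V^{1,a}_t:=V^{1,\pi^1}_t$ for $t>T^a_1$, and analogously $\tilde V^{2,b}_t$. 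By the strong Markov property $\tilde V^{1,a}\eq V^{1,a}$ and $\tilde V^{2,b}\eq V^{2,b}$, and each glued coordinate coincides with its stationary version for all $t\ge T^a_1$ (resp.\ $t\ge T^b_2$). Defining $V^x_t:=(\tilde V^{1,a}_t,\tilde V^{2,b}_t)$ and $V^\pi_t:=(V^{1,\pi^1}_t,V^{2,\pi^2}_t)$, independence gives $V^x\eq(V^1,V^2)$ under $\Prb_{(a,b)}$ and $V^\pi\eq(V^1,V^2)$ under $\Prb_\pi$, so (ii) holds.

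For (iii), the decisive observation is that both coordinates have glued by time $\max(T^a_1,T^b_2)$, whence the joint meeting time satisfies $T^x\le\max(T^a_1,T^b_2)$; as a hitting time of the closed diagonal by a c\`adl\`ag adapted process it is a stopping time, and a union bound yields
\begin{align*}
\sup_{(a,b)}\tP(T^x\ge t)\le \sup_{a}\tP(T^a_1\ge t)+\sup_{b}\tP(T^b_2\ge t)\le C_1e^{-\delta_1 t}+C_2e^{-\delta_2 t}\le Ce^{-\delta t},
\end{align*}
with $\delta=\min(\delta_1,\delta_2)$ and $C=C_1+C_2$, uniformly in $x$. Finally, feeding (ii)--(iii) into the computation of Lemma \ref{lemma1} gives $\sup_{A}|\Prb_{(a,b)}(V_t\in A)-\pi(A)|\le Ce^{-\delta t}\to 0$; integrating against any invariant $\mu$ forces $\mu=\pi$, which closes the uniqueness gap in (i). I expect the only genuinely delicate point to be the gluing step and the resulting domination $T^x\le\max(T^a_1,T^b_2)$; everything else is routine tensorization.
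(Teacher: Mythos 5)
Your proposal is correct and follows essentially the same route as the paper's proof: the Feller property via product functions and Stone--Weierstrass, the product invariant measure $\pi^1\times\pi^2$, and the coordinate-wise gluing construction with the union bound $\tP(T^{1,a}\lor T^{2,b}\ge t)\le \tP(T^{1,a}\ge t)+\tP(T^{2,b}\ge t)$. Your additional remarks on strong continuity of the semigroup and on recovering uniqueness of the invariant measure from the ergodicity estimate are refinements the paper leaves implicit, but they do not change the structure of the argument.
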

\begin{proof}
It is obvious that the process $(V^1_t,V^2_t)_{t\ge 0}$ is c\`adl\`ag and Markovian. To verify the Feller property, we show that for any continuous function $F$ on $S^1\times S^2$, $\E_{(x,y)}[F(V^1_t,V^2_t)]$ is continuous in $(x, y)$. We shall prove this by showing this property for separable $F$ and approximate general continuous functions using this special case. Let $f$ and $g$ be continuous functions on $S^1$ and $S^2$ respectively, then we have
\begin{align}\label{app1}
    \E_{(x,y)}[f(V^1_t)g(V^2_t)]=\E^1_x[f(V^1_t)]\E^2_y[g(V^2_t)],
\end{align}
which implies $\E_{(x,y)}[f(V^1_t)g(V^2_t)]$ is continuous in $(x,y)$ since $(V_t^1)_{t\ge 0}$ and $(V_t^2)_{t\ge 0}$ are Feller. By the Stone–Weierstrass theorem, for any $k\ge1$, any continuous function $F$ on $S^1\times S^2$ can be approximated as the following, 
\begin{align*}
    \sup_{(x,y)\in S^1\times S^2}\abs{F(x,y)-\sum_{i=1}^{n_k}f_i^k(x)g_i^k(y)}\le \frac{1}{k}
\end{align*}
where $f_i^k$ and $g_i^k$ are continuous. 
From (\ref{app1}), this implies $\E_{(x,y)}[F(V^1_t,V^2_t)]$ is continuous on $S^1\times S^2$.

Next, we prove that $(V^1_t,V^2_t)_{t\ge 0}$ satisfies Assumption \ref{as1.0}.
Let $\pi^1$ and $\pi^2$ be the invariant measures of $(V^1_t)_{t\ge 0}$ and $(V_t^2)_{t\ge 0}$, respectively. Then $\pi^1\times \pi^2$ is the invariant measure of $(V^1_t,V^2_t)_{t\ge 0}$ since $(V^1_t)_{t\ge 0}$ and $(V^2_t)_{t\ge 0}$ are independent. 
From Assumption  \ref{as1.0}, we know there exist $(\tilde\Omega^1, \mathcal{\tilde F}^1,\tP^1),$ $(\tilde\Omega^2, \mathcal{\tilde F}^2,\tP^2),$  such that  for any $a\in S^1$ and $b\in S^2$ , $(V^{1,a}_t)_{t\ge 0}\eq (V^1_t)_{t\ge 0}$ in $\Prb^1_a$ and $(V^{2,b}_t)_{t\ge 0}\eq (V^2_t)_{t\ge 0}$ in $\Prb^2_b$. We define $\tP$ on $\tilde\Omega^1\times \tilde\Omega^2$ such that
$$
\tP(A\times B)=\tP^1(A)\tP^2(B), \ \  (A\in \mathcal{\tilde F}^1, \ B\in \mathcal{\tilde F}^2).
$$
Then we have that $((V^{1,a}_t)_{t\ge 0})_{a\in S^1}$ and $(V^{1,\pi^1}_t)_{t\ge 0}$ are independent of $((V^{2,b}_t)_{t\ge 0})_{b\in S^2}$ and $(V^{2,\pi^2}_t)_{t\ge 0}$ under $\tP.$ 
Similar to the proof of Lemma \ref{lemma1}, we construct the following processes by the coupling method:
\begin{align*}
    \tilde V^{1,a}_t=\left\{
\begin{aligned}
  &V^{1,a}_t,\ \ \ 0\le t\le T^{1,a},\\
 &V^{1,\pi^1}_t,\ \ \  t> T^{1,a},
\end{aligned}
\right.
\end{align*}
and 
\begin{align*}
    \tilde V^{2,b}_t=\left\{
\begin{aligned}
  &V^{2,b}_t,\ \ \ 0\le t\le T^{2,b},\\
 &V^{2,\pi^2}_t,\ \ \  t> T^{2,b}.
\end{aligned}
\right.
\end{align*}
Then the distribution of $(\tilde V^{1,a}_t,\tilde V^{2,b}_t)_{t\ge 0}$ under $\tP$ is the same as the distribution of  $(V^1_t,V^2_t)_{t\ge 0}$ under $\Prb_{(a,b)};$
the distribution of $(V^{1,\pi^1}_t, V^{2,\pi^2}_t)_{t\ge 0}$ under $\tP$ is the same as the distribution of $(V^1_t,V^2_t)_{t\ge 0}$ under $\Prb_{\pi^1\times\pi^2}.$ Moreover, $(\tilde V^{1,a}_t,\tilde V^{2,b}_t)_{t\ge 0}$ intersects the invariant state  $(V^{1,\pi^1}_t, V^{2,\pi^2}_t)_{t\ge 0}$ at time $T^{1,a}\lor T^{1,b}.$ For any $(a,b)\in S^1\times S^2$,
$$
\tP(T^{1,a}\lor T^{1,b}\ge t)\le\tP(T^{1,a}\ge t)+ \tP(T^{1,b}\ge t)\le C\exp({-\delta t}).
$$
\end{proof}
We have now discussed various properties of potential index processes and move on to study the stochastic gradient process.

\section{Stochastic gradient processes with constant learning rate} \label{Sec_SGPC}

We now define and study the stochastic gradient process with constant learning rate. 
Here, the switching between data sets is performed in a homogeneous-in-time way. Hence, it models the discrete-time stochastic gradient descent algorithm when employed with a constant learning rate. Although, one can usually not hope to converge to the minimizer of the target functional in this case, this setting is popular in practice.

To obtain the stochastic gradient process with constant learning rate, we will couple the gradient flow \eqref{Eq_SGPC} with the an appropriate process $(V_{t/\e})_{t\ge0}$. Here, $(V_t)_{t\ge0}$ is a Feller process introduced in Section \ref{mainprocess} and $\e > 0$ is a scaling parameter that allows us to uniformly control a switching rate parameter. To define the stochastic process associated with this stochastic gradient descent problem, we first introduce the following assumptions that guarantee the existence and uniqueness of the solution of the associated stochastic differential equation. After its formal definition and the proof of well-definedness, we move on to the analysis of the process. Indeed, we show that the process approximates the full gradient flow \eqref{eq:AS:th}, as $\e \downarrow 0$. Moreover, we show that the process has a unique stationary measure to which it converges in the longtime limit at geometric speed. 

We commence with regularity properties of the subsampled target function $f$ that are necessary to show the well-definedness of the stochastic gradient process.
\begin{assumption}\label{asSGPf}
Let $f(\theta, y)\in \mathcal{C}^2(\R^K\times S,\R)$.

1. $\nabla_{\theta} f,$ $H_\theta f$ are continuous. 

2. $\nabla_{\theta} f(\theta,y)$ is Lipschitz in $x$ and the Lipschitz constant is uniform for $y\in S.$   

3. For $\theta\in \mathbb{R}^K$, $f(\theta,\cdot)$ and $\nabla_{\theta} f$ are integrable w.r.t to the probability measure $\pi(\cdot)$.

\end{assumption}
Now, we move on to the formal definition of the stochastic gradient process.

\begin{defi}
For $\e>0$, the \emph{stochastic gradient process with} \emph{constant learning rate} \emph{(SGPC)} is a solution of the following stochastic differential equation,
\begin{equation}\label{eq:AS:theta}
\left\{ \begin{array}{l}
\mathrm{d}\theta^\e_t = - \nabla_{\theta} f(\theta^\e_t, V_{ t/\e})\mathrm{d}t, \\
\theta_0^\e = \theta_0,
\end{array} \right.
\end{equation}
where f satisfies Assumption \ref{asSGPf} and $(V_t)_{t\ge0}$ is a Feller process that satisfies Assumption \ref{as1.0}.
\end{defi}
Given these two assumptions, we can indeed show that the SGPC is a well-defined Markov process. Moreover, we show that the stochastic gradient process is Markovian, a property it shares with the discrete-time stochastic gradient descent method.

\begin{prop}\label{thwk30} Let Assumptions \ref{as1.0} and \ref{asSGPf} hold. Then, equation (\ref{eq:AS:theta}) has a unique  strong solution, i.e. the solution $(\theta^\e_t)_{t \geq 0}$ is measurable with respect to $\mathcal{F}^\e_t:=\mathcal{F}_{t/\e}$ for any $t\ge 0$. For $y\in S$,  $(\theta_t^\e,V_{t/\e})_{t\ge 0} $ is a Markov process under $\Prb_y$ with respect to $(\mathcal{F}^\e_t)_{t\ge 0}$.
 \end{prop}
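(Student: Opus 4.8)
The plan is to treat \eqref{eq:AS:theta} pathwise as a non-autonomous random ordinary differential equation and to read off existence, uniqueness, adaptedness, and the Markov property from the structure of its right-hand side. For fixed $\omega$, the driving term $g_\omega(t,\theta):=-\nabla_\theta f(\theta,V_{t/\e}(\omega))$ is measurable in $t$ — since $t\mapsto V_{t/\e}(\omega)$ is c\`adl\`ag, hence Borel, and $\nabla_\theta f$ is continuous by Assumption~\ref{asSGPf} — and Lipschitz in $\theta$ with a constant $L$ uniform in $(t,\omega)$, by Assumption~\ref{asSGPf}(2). Compactness of $S$ and continuity of $\nabla_\theta f$ give $C_0:=\sup_{y\in S}|\nabla_\theta f(0,y)|<\infty$, so $g_\omega$ satisfies the linear growth bound $|g_\omega(t,\theta)|\le C_0+L|\theta|$ uniformly in $(t,\omega)$. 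This places us in the Carath\'eodory setting, and I would run a Picard iteration
\begin{equation*}
\theta^{(0)}_t\equiv\theta_0,\qquad \theta^{(n+1)}_t=\theta_0-\int_0^t\nabla_\theta f\bigl(\theta^{(n)}_s,V_{s/\e}\bigr)\,\mathrm{d}s,
\end{equation*}
which converges uniformly on every compact time interval to an absolutely continuous solution; uniqueness follows from the uniform Lipschitz bound through Gr\"onwall's inequality, and the linear growth bound rules out finite-time blow-up, yielding a global solution $(\theta^\e_t)_{t\ge0}$.

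Adaptedness (the strong-solution claim) I would obtain directly from the iteration. By induction each $\theta^{(n)}_t$ is $\mathcal{F}_{t/\e}$-measurable: the integrand $s\mapsto\nabla_\theta f(\theta^{(n)}_s,V_{s/\e})$ is built from $V_{s/\e}$ with $s\le t$, which is $\mathcal{F}_{s/\e}\subseteq\mathcal{F}_{t/\e}$-measurable, and integration over $[0,t]$ preserves $\mathcal{F}_{t/\e}$-measurability. Passing to the uniform limit shows that $\theta^\e_t$ is $\mathcal{F}^\e_t=\mathcal{F}_{t/\e}$-measurable; joint measurability in $(\omega,t)$, hence progressive measurability, then follows from continuity of $t\mapsto\theta^\e_t$.

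For the Markov property I would exploit that the solution map is a deterministic flow. Writing $\Psi_{s,t}(\theta;(v_u)_{s\le u\le t})$ for the value at time $t$ of the ODE started at $\theta$ at time $s$ and driven by a fixed c\`adl\`ag path $(v_u)$, one has $\theta^\e_t=\Psi_{s,t}(\theta^\e_s;(V_{u/\e})_{s\le u\le t})$ for $t\ge s$, and $\Psi$ is jointly measurable. Since $\theta^\e_s$ is $\mathcal{F}_{s/\e}$-measurable and $(V_{t/\e})_{t\ge0}$ is a time-homogeneous Markov process with respect to $(\mathcal{F}_{t/\e})_{t\ge0}$ — a linear time change preserves the Feller/Markov property of $(V_t)_{t\ge0}$ — the conditional law of $(V_{u/\e})_{u\ge s}$ given $\mathcal{F}_{s/\e}$ depends only on $V_{s/\e}$. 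Hence for bounded measurable $\phi$ on $\R^K\times S$,
\begin{equation*}
\E_y\bigl[\phi(\theta^\e_t,V_{t/\e})\mid\mathcal{F}^\e_s\bigr]=\E_y\bigl[\phi\bigl(\Psi_{s,t}(\theta^\e_s;(V_{u/\e})_{s\le u\le t}),V_{t/\e}\bigr)\mid\mathcal{F}_{s/\e}\bigr]
\end{equation*}
is a function of $(\theta^\e_s,V_{s/\e})$ alone, which is exactly the Markov property of the pair $(\theta^\e_t,V_{t/\e})_{t\ge0}$.

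I expect the main obstacle to be this last step rather than existence, uniqueness, or adaptedness. It requires carefully isolating the deterministic flow operator $\Psi$, verifying its measurability in the path argument in an appropriate (product or Skorokhod) sense, and then invoking the Markov property of the time-changed index process to collapse the conditioning on $\mathcal{F}_{s/\e}$ down to conditioning on $V_{s/\e}$. The remaining steps are comparatively routine once the uniform Lipschitz and linear growth bounds have been extracted from Assumption~\ref{asSGPf} together with the compactness of $S$.
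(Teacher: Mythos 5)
Your proposal is correct and follows essentially the same route as the paper: the Markov property is obtained in both cases by restarting the equation at time $s$ and observing that the solution on $[s,t]$ is a deterministic functional of $\theta^\e_s$ and the path $(V_{u/\e})_{s\le u\le t}$, so that the homogeneous Markov property of the index process collapses the conditioning to $(\theta^\e_s,V_{s/\e})$. The only difference is that you prove existence, uniqueness, and adaptedness by hand via a Carath\'eodory/Picard argument, whereas the paper outsources this step to a citation of Kushner's book; your version is a self-contained instance of the same standard ODE theory.
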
 
 
 \begin{proof}
The existence and the uniqueness of the strong solution to the equation (\ref{eq:AS:theta}) can be found in \citet[Chapter 2, Theorem 4.1]{Kushner1}.
To prove the Markov property, we define the operator $(Q^\e_t)_{t\ge 0}$ such that 
 $$
  Q^\e_t h(x,y):=\E_y[h(\theta^\e_t,V_{t/\e})|\theta^\e_0=x ],
 $$
 for any function $h$ bounded and measurable on $\R^K\times S$. For any $ s,t\ge 0$, we want to show 
 \begin{align*}
     \E[h(\theta^\e_{t+s},V_{(t+s)/\e})|\mathcal{F}^\e_s]=Q^\e_t h(\theta^\e_s,V_{s/\e}).
 \end{align*}
 We set
$ \hat \theta^\e_t:=\theta^\e_{t+s},\ \mathcal{\hat F}_t:=\mathcal{F}^\e_{t+s},\ \hat V_{t/\e}:=V_{(t+s)/\e}.$
 Since
 \begin{align*}
     \theta^\e_{t+s}=\theta^\e_s-\int_s^{t+s}\nabla_{\theta} f(\theta^\e_m, V_{m/\e}) \mathrm{d}m,
 \end{align*}
we have
 \begin{align*}
     \hat\theta^\e_t=\hat\theta^\e_0-\int_0^t\nabla_{\theta} f(\hat\theta^\e_m, \hat V_{m/\e}) \mathrm{d}m.
 \end{align*}
Hence $\hat\theta^\e_t$ is the solution of equation (\ref{eq:AS:theta}) with $\hat\theta^\e_0=\theta^\e_s$ and $ \hat V_0=V_{ s/ \e}.$ 
Moreover,
\begin{align*}
     \E[h(\theta^\e_{t+s},V_{(t+s)/\e})|\mathcal{F}^\e_s]&=\E[h(\hat \theta^\e_t,\hat V_{t/\e})|\hat \theta^\e_0=\theta^\e_s,\hat V_0=V_{s/\e}]\\
     &=\E_{\hat V_0}[h(\hat \theta^\e_t,\hat V_{t/\e})|\hat \theta^\e_0=\theta^\e_s]\\
     &= Q^\e_t h(\theta^\e_s,V_{s/\e}), 
\end{align*}
where the second equality and third equality follow from the homogeneous Markov property of $(V^\e_t)_{t\ge 0}$.

 \end{proof} 
 \subsection{Approximation of the full gradient flow} \label{Subsec:ApproxFullGradientFlow}
 We now let $\e\to0$ and study the limiting behavior of SGPC.  Indeed, we aim to show that here the SGPC converges to the \emph{full gradient flow} 
\begin{equation}\label{eq:AS:th}
  \mathrm{d}\zeta_t = -\Big[\int_{S} \nabla_{\zeta} f(\zeta_t, v)\pi(\mathrm{d}v)\Big]\mathrm{d}t.
\end{equation}
We study this topic for two reasons: First, we aim to understand the interdependence of $(V_t)_{t\ge0}$ and $(\theta_t^{\e})_{t\ge0}$. Second, we understand SGPC as an approximation to the full gradient flow \eqref{eq:AS:th}, as motivated in the introduction. Hence, we should show that SGPC can approximate the full gradient flow at any accuracy.

We now denote $g(\cdot):=\int_{S} \nabla_{\zeta} f(\cdot, v)\pi(\mathrm{d}v)\in \mathcal{C}^1(\R^K,\R^K).$ Then, we can define $(\zeta_t)_{t\ge 0}$ through the dynamical system $\mathrm{d}\zeta_t = - g(\zeta_t)\mathrm{d}t$. Moreover, let $\mathcal{C}([0,\infty):\R^K)$ be the space of continuous functions from $[0,\infty)$ to $\R^K$ equipped with the distance
$$
\rho\Big((\varphi_t)_{t\ge 0},(\varphi'_t)_{t\ge 0}\Big):= \int_0^\infty \exp({-t}) (1\land\sup_{0\le s\le t}\norm{\varphi_s-\varphi_s'})\mathrm{d}t,
$$ 
where $(\varphi_t)_{t\ge 0},(\varphi'_t)_{t\ge 0} \in \mathcal{C}([0,\infty):\R^K)$. {We study the weak limit of the system (\ref{eq:AS:theta}) as $\e\to0$. Similar problems have been discussed in, for example, \citet{Kushner1} and \citet{Kushner2}.}

\begin{theorem}\label{wcovtheta}
 Let $\theta^\e_0=\theta_0$ and $\zeta_0=\theta_0$. Moreover, let $(\theta^\e_t)_{t\ge 0}$ and $(\zeta_t)_{t\ge 0}$ solve (\ref{eq:AS:theta}) and (\ref{eq:AS:th}), respectively. Then $(\theta^\e_t)_{t\ge 0}$ under $\Prb_\pi$ converges weakly to $(\zeta_t)_{t\ge 0}$ in $\mathcal{C}([0,\infty):\R^K)$ as $\e\to 0$, i.e.
 for any bounded continuous function $F$ on $\mathcal{C}([0,\infty):\R^K)$,
 $$\E_\pi [F\big((\theta^\e_t)_{t\ge 0}\big)] \to \E_\pi [F\big((\zeta_t)_{t\ge 0}\big)]=F\big((\zeta_t)_{t\ge 0}\big).$$
\end{theorem}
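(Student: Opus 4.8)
The plan is to prove the weak convergence by the classical two-step scheme for averaging problems: first establish tightness of the laws of $(\theta^\e_t)_{t\ge0}$ in $\mathcal{C}([0,\infty):\R^K)$, and then identify every subsequential limit with the deterministic solution $(\zeta_t)_{t\ge0}$ of the averaged equation \eqref{eq:AS:th}. Convergence in the metric $\rho$ is equivalent to uniform convergence on every compact interval, so it suffices to work on $\mathcal C([0,T]:\R^K)$ for fixed $T$. Since the right-hand side $g$ of \eqref{eq:AS:th} is globally Lipschitz — indeed $\norm{g(\theta)-g(\theta')}\le\int_S\norm{\nabla_\theta f(\theta,v)-\nabla_\theta f(\theta',v)}\pi(\mathrm dv)\le L\norm{\theta-\theta'}$ by Assumption \ref{asSGPf}.2 — the averaged ODE has a unique solution. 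Hence if all subsequential limits coincide with $\zeta$, the whole family converges; and because $\zeta$ is deterministic, the limit law is a Dirac mass, which yields the stated identity $\E_\pi[F((\zeta_t))]=F((\zeta_t))$.

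First I would derive uniform-in-$\e$ a priori estimates. Because $\nabla_\theta f$ is continuous and $S$ is compact, $C_0:=\sup_{y\in S}\norm{\nabla_\theta f(0,y)}<\infty$, and the uniform Lipschitz bound gives $\norm{\nabla_\theta f(\theta,y)}\le C_0+L\norm{\theta}$. Inserting this into \eqref{eq:AS:theta} and applying Gr\"onwall's inequality yields a deterministic bound $\sup_{s\le T}\norm{\theta^\e_s}\le R_T$, uniform in $\e$ and in the realization of $(V_t)_{t\ge0}$. This then bounds $\norm{\dot\theta^\e_s}=\norm{\nabla_\theta f(\theta^\e_s,V_{s/\e})}$ uniformly on $[0,T]$, so the paths $t\mapsto\theta^\e_t$ are uniformly bounded and uniformly Lipschitz. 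Since these bounds do not depend on the path of $V$, the laws of $(\theta^\e_t)_{t\ge0}$ are supported on a fixed compact subset of $\mathcal C([0,T]:\R^K)$, and Arzel\`a--Ascoli gives tightness.

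The core is the identification step, and this is where I expect the main difficulty. Fix a subsequence along which $(\theta^\e_t)$ converges weakly to some $(\theta^0_t)$, and set $\psi(\theta,y):=\nabla_\theta f(\theta,y)-g(\theta)$, so that $\int_S\psi(\theta,y)\pi(\mathrm dy)=0$. Writing the integral form $\theta^\e_t=\theta_0-\int_0^t g(\theta^\e_s)\,\mathrm ds-\int_0^t\psi(\theta^\e_s,V_{s/\e})\,\mathrm ds$, the term $\int_0^t g(\theta^\e_s)\,\mathrm ds$ is a continuous functional and passes to the limit as $\int_0^t g(\theta^0_s)\,\mathrm ds$, so everything hinges on showing the fluctuation term vanishes, which I would establish by proving $\E_\pi[\,\lvert\int_0^t\psi(\theta^\e_s,V_{s/\e})\,\mathrm ds\rvert^2\,]\to0$. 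The strategy is a block-freezing argument: partition $[0,t]$ into intervals of length $\Delta=\Delta(\e)$ with $\e\ll\Delta\ll1$, replace $\theta^\e_s$ by its frozen value $\theta^\e_{t_k}$ on each block (the resulting error is $O(\Delta)$ by the Lipschitz bound and the equicontinuity from the a priori step), and control each frozen contribution $\int_{t_k}^{t_{k+1}}\psi(\theta^\e_{t_k},V_{s/\e})\,\mathrm ds=\e\int_{t_k/\e}^{t_{k+1}/\e}\psi(\theta^\e_{t_k},V_u)\,\mathrm du$. Conditioning on $\mathcal F_{t_k/\e}$ and using the Markov property together with the geometric ergodicity of Corollary \ref{corergodic}, applied to the centred continuous function $\psi(\theta^\e_{t_k},\cdot)$ with $\norm{\psi(\theta,\cdot)}_\infty$ uniform over $\theta$ in the compact set $\{\norm{\theta}\le R_T\}$, shows that each conditional mean is $O(\e)$, while the exponential decorrelation across well-separated blocks keeps the cross terms in the second moment small.

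The delicate point — the main obstacle — is precisely handling the dependence between the slow variable $\theta^\e_s$ and the fast driver $V_{s/\e}$: one must verify that the freezing error, the mixing rate $\delta$ from Lemma \ref{lemma1}, and the block size $\Delta$ can be balanced so that the freezing contribution $O(\Delta)$ and the accumulated ergodic contribution $O(\e/\Delta)$ tend to zero simultaneously. Once this gives $\theta^0_t=\theta_0-\int_0^t g(\theta^0_s)\,\mathrm ds$ almost surely, the uniqueness of \eqref{eq:AS:th} forces $\theta^0=\zeta$ for every subsequential limit, and the proof is complete.
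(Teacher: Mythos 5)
Your proposal is correct and follows essentially the same route as the paper: uniform a priori bounds and equicontinuity give tightness, and the limit is identified by freezing the slow variable on blocks of length $\Delta$ with $\e \ll \Delta \ll 1$ (the paper takes $\Delta \approx \sqrt{\e}$) and invoking the geometric ergodicity of Corollary~\ref{corergodic} to make the fluctuation term vanish. The only difference is one of formulation: you bound the second moment of the fluctuation integral directly, which additionally requires the cross-block decorrelation you mention, whereas the paper phrases the identification as a martingale problem with test functions $\varphi$, $h$ and only needs the conditional first moment of each frozen block; both close with the same balance $O(\Delta)+O(\e/\Delta)\to 0$.
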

\begin{proof}
We shall verify that $(\theta^\e_t)_{t\ge 0}$ is tight by checking: 
\begin{align*}
    1.&\ \sup_{0<\e<1}\norm{\theta^\e_0}<+\infty;\\
    2.&\ \textit{For any fixed } T>0,\  \lim_{\delta\to 0} \sup_{0<\e<1}\sup_{s,t\in[0,T],|s-t|\le \delta}\norm{\theta^\e_t-\theta^\e_s}\to 0.
\end{align*}
The first condition follows from $\theta^\e_0=\theta_0$. For the second condition, by Assumption \ref{asSGPf}, let $C_0=\sup_{y\in S}\norm{\nabla_{\theta} f(0, y)}$ and $L_f$ be the Lipschitz constant of $\nabla_{\theta} f(\cdot, y)$, we have
\begin{align*}
 \frac{\mathrm{d}\norm{\theta^\e_t}^2}{\mathrm{d}t}
    &=-2 \ip{\theta^\e_t,\nabla_{\theta} f(\theta^\e_t, V_{t/\e})}\\
    &=-2 \ip{\theta^\e_t,\nabla_{\theta} f(\theta^\e_t, V_{t/\e})-\nabla_{\theta} f(0, V_{t/\e})}-2\ip{\theta^\e_t,\nabla_{\theta} f(0, V_{t/\e})}\\
    &\le  2L_f \norm{\theta^\e_t}^2+ 2C_0\norm{\theta^\e_t}\\
    &\le 2L_f \norm{\theta^\e_t}^2+ \norm{\theta^\e_t}^2+C^2_0\\
    &= (2L_f+1) \norm{\theta^\e_t}^2+C^2_0.
\end{align*}
By Gr\"onwall's inequality,
\begin{align}
\norm{\theta^\e_t}^2\le (\norm{\theta_0}^2+C^2_0)e^{(2L_f+1) t}.
\end{align}
Therefore, $\theta^\e_t$ is bounded on any finite time interval.
For any fixed $T>0,$ let $$C_{T,f,\theta_0}=\sup_{\norm{x}\le (\norm{\theta_0}^2+C^2_0)e^{(2L_f+1) T},\ y\in S }\norm{\nabla_{\theta} f(x, y)}.$$
Then for any $s,t\in[0,T]$,
\begin{align*}
   \norm{\theta^\e_t-\theta^\e_s}\le  \int_s^t\norm{\nabla_{\theta} f(\theta^\e_m, V_{m/\e})} \mathrm{d}m \le C_{T,f,\theta_0} |t-s|. 
\end{align*}
Hence, $(\theta^\e_t)_{t\ge 0}$ is tight in $\mathcal{C}([0,\infty):\R^K)$. By Prokhorov's theorem, let $(\theta_t)_{t\ge 0}$ be a weak limit of $(\theta^\e_t)_{t\ge 0}$. We shall verify that $(\theta_t)_{t\ge 0}$ satisfies equation (\ref{eq:AS:th}), which is equivalent to show that for any bounded differentiable function $\varphi$, $h$ 
\begin{align*}
    \E_\pi\Big[\Big(\varphi(\theta_t)- \varphi(\theta_s)+\int_s^t\ip{\nabla_{\theta}\varphi(\theta_m),g(\theta_m)}\mathrm{d}m\Big)h\Big((\theta_{t_i})_{i=1,...,n}\Big)\Big]=0,
\end{align*}
$\forall\ 0\leq t_1 < \cdots < t_n\leq s.$ The case $t=0$ is obvious. Since $(\theta^\e_t)_{t\ge 0}$ is a strong solution to equation (\ref{eq:AS:theta}), for any $0\le s<t$,
\begin{align}\label{net}
    \varphi(\theta^\e_t)= \varphi(\theta^\e_s)-\int_s^t\ip{\nabla_{\theta}\varphi(\theta^\e_m),\nabla_{\theta} f(\theta^\e_m, V_{m/\e})}\mathrm{d}m.
\end{align}
Hence, we have
\begin{align*}
    \E_\pi\Big[\Big(\varphi(\theta^\e_t)- \varphi(\theta^\e_s)+\int_s^t\ip{\nabla_{\theta}\varphi(\theta^\e_s),\nabla_{\theta} f(\theta^\e_m, V_{m/\e})}\mathrm{d}m\Big)h\Big((\theta^\e_{t_i})_{i=1,...,n}\Big)\Big]=0,
\end{align*}
Moreover, when $\e\to 0,$
\begin{align*}
     \E_\pi\Big[\Big(\varphi(\theta^\e_t)- \varphi(\theta^\e_s)\Big)h\Big((\theta^\e_{t_i})_{i=1,...,n}\Big)\Big]\to \E_\pi\Big[\Big(\varphi(\theta_t)- \varphi(\theta_s)\Big)h\Big((\theta_{t_i})_{i=1,...,n}\Big)\Big].
\end{align*}
Hence, all we need to show is the following
\begin{align}
    \E_\pi\Big[\Big(\int_s^t\ip{\nabla_{\theta}\varphi(\theta^\e_m),\nabla_{\theta} f(\theta^\e_m, V_{m/\e})}\mathrm{d}m-\int_s^t\ip{\nabla_{\theta}\varphi(\theta^\e_m),g(\theta^\e_m)}\mathrm{d}m\Big)h\Big((\theta^\e_{t_i})_{i=1,...,n}\Big)\Big]\to 0,
\end{align}
which is equivalent to prove that 
\begin{align}
    \E_\pi\Big[\int_s^t\ip{\nabla_{\theta}\varphi(\theta^\e_m),\nabla_{\theta} f(\theta^\e_m, V_{m/\e})}\mathrm{d}m-\int_s^t\ip{\nabla_{\theta}\varphi(\theta^\e_m),g(\theta^\e_m)}\mathrm{d}m\Big|\mathcal{F}^\e_s\Big]\to 0. \label{important}
\end{align}
Let $\te:= 1/[1/\sqrt{\e}]$, where $[x]$ is the greatest integer less than or equal to $x$.  Then we have the following decomposition
\begin{align*}
    &\E_\pi\Big[\int_s^t\ip{\nabla_{\theta}\varphi(\theta^\e_m),\nabla_{\theta} f(\theta^\e_m, V_{m/\e})}\mathrm{d}m-\int_s^t\ip{\nabla_{\theta}\varphi(\theta^\e_m),g(\theta^\e_m)}\mathrm{d}m\Big|\mathcal{F}^\e_s\Big]\\
=& \te\sum^{1/\te}_{i=0}\te^{-1}\E_\pi\Big[\int_{s+i(t-s)\te}^{s+(i+1)(t-s)\te}\ip{\nabla_{\theta}\varphi(\theta^\e_m),\nabla_{\theta} f(\theta^\e_m, V_{m/\e})-g(\theta^\e_m)}\mathrm{d}m\Big|\mathcal{F}^\e_s\Big]\\
=&\te\sum^{1/\te}_{i=0}\E_\pi\Big[\te^{-1}\E_\pi\Big[\int_{s+i(t-s)\te}^{s+(i+1)(t-s)\te}\ip{\nabla_{\theta}\varphi(\theta^\e_m),\nabla_{\theta} f(\theta^\e_m, V_{m/\e})-g(\theta^\e_m)}\mathrm{d}m\Big|\mathcal{F}^\e_{s+i(t-s)\te}\Big]\Big|\mathcal{F}^\e_s\Big].
\end{align*}
We claim that as $\e\to 0$,
\begin{align}\label{correct}
   \sup_{0\le r<t} \te^{-1}\E_\pi\Big[\int_r^{r+(t-s)\te}G(\theta^\e_m,V_{m/\e})\mathrm{d}m\Big|\mathcal{F}^\e_r\Big]\to 0,
\end{align}
where $G(x,y):=\ip{\nabla_{\theta}\varphi(x),\nabla_{\theta} f(x, y)-g(x)}$.  
Notice that for any fixed $t>0$, $(\theta^\e_s)_{ 0\le s\le t}$ is uniformly equicontinuous. Hence, we have
\begin{align*}
   \sup_{0\le r\le t} \sup_{r\le m\le r+(t-s)\te}\norm{\theta^\e_m-\theta^\e_r}&= \sup_{0\le r\le t} \sup_{r\le m\le r+(t-s)\te}\int_r^{r+(t-s)\te}\norm{\nabla_{\theta} f(\theta^\e_m, V_{m/\e})} \mathrm{d}m\\
   &\le \te \sup_{0\le m\le t} \norm{\nabla_{\theta} f(\theta^\e_m, V_{m/\e})}.
\end{align*}
Therefore, as $\e \to 0,$
\begin{align*}
   \sup_{0\le r<t} \abs{ \te^{-1}\E_\pi\Big[\int_r^{r+(t-s)\te}G(\theta^\e_m,V_{m/\e})\mathrm{d}m\Big|\mathcal{F}^\e_r\Big]-\te^{-1}\E_\pi\Big[\int_r^{r+(t-s)\te}G(\theta^\e_r,V_{m/\e})\mathrm{d}m\Big|\mathcal{F}^\e_r\Big]}\to 0.
\end{align*}
Hence, (\ref{correct}) is equivalent to
\begin{align}\label{alsocorrect}
    \sup_{0\le r<t} \abs{ \te^{-1}\E_\pi\Big[\int_r^{r+(t-s)\te}G(\theta^\e_r,V_{m/\e})\mathrm{d}m\Big|\mathcal{F}^\e_r\Big]}\to 0.
\end{align}
By Corollary \ref{corergodic},  
\begin{align*}
    &\sup_{0\le r<t} \abs{ \te^{-1}\E_\pi\Big[\int_r^{r+(t-s)\te}G(\theta^\e_r,V_{m/\e})\mathrm{d}m\Big|\mathcal{F}^\e_r\Big]}\\
    =& \sup_{0\le r<t} \abs{ \te^{-1}\E_{V_{r/\e},x=\theta^\e_r}\Big[\int_r^{r+(t-s)\te}G(x,V_{(m-r)/\e})\mathrm{d}m\Big]}\\
    =& \sup_{0\le r<t} \abs{ \te^{-1}\E_{V_{r/\e},x=\theta^\e_r}\Big[\int_r^{r+(t-s)\te}\ip{\nabla_{\theta}\varphi(x),\nabla_{\theta} f(x, V_{(m-r)/\e})-g(x)}\mathrm{d}m\Big]}\\
     =& \sup_{0\le r<t} \abs{ \te^{-1}\int_r^{r+(t-s)\te}\E_{V_{r/\e},x=\theta^\e_r}\Big[\ip{\nabla_{\theta}\varphi(x),\nabla_{\theta} f(x, V_{(m-r)/\e})-g(x)}\Big]\mathrm{d}m}\\
     \le& \sup_{0\le r<t}  \te^{-1}\int_r^{r+(t-s)\te}\abs{\E_{V_{r/\e},x=\theta^\e_r}\Big[\ip{\nabla_{\theta}\varphi(x),\nabla_{\theta} f(x, V_{(m-r)/\e})-g(x)}\Big]}\mathrm{d}m\\
     \le & \sup_{0\le r<t}\te^{-1}\norm{\ip{\nabla_{\theta}\varphi(\theta^\e_r),\nabla_{\theta} f(\theta^\e_r, \cdot)}}_\infty \int_r^{r+(t-s)\te} e^{-\delta (m-r)/\e}dm\\
    =& \sup_{0\le r<t}\te^{-1}\norm{\ip{\nabla_{\theta}\varphi(\theta^\e_r),\nabla_{\theta} f(\theta^\e_r, \cdot)}}_\infty \int_0^{(t-s)\te} e^{-\delta k/\e}dk\\
    \le & C_{t,\varphi,f}\frac{\e}{\delta\te}\le C_{t,\varphi,f}\frac{\sqrt{\e}}{2\delta}\to0.
\end{align*}
This completes the proof of (\ref{important}).
Hence, any weak limit of $(\theta^\e_t)_{t\ge 0}$ is a martingale solution to equation (\ref{eq:AS:th}). Since equation (\ref{eq:AS:th}) is a deterministic ordinary differential equation and $\theta^\e_0=\theta_0$ is independent of $\e$, we have
 $(\theta^\e_t)_{t\ge 0}$ converges weakly to $(\zeta_t)_{t\ge 0}$ as $\e\to 0$.

\end{proof}

Instead of looking at the full trajectories of the processes, we can also study their distributions and show convergence in the Wasserstein distance. We first need to introduce some notation.

Let $\nu$ and $\nu'$ be two probability measures on $(\R^K, \mathcal{B}(\R^K) )$. We define the Wasserstein distance between those measures by
\[
   \cW_d(\nu,\nu') =  \inf_{\Gamma\in\mathcal{H}(\nu,\nu') }\int_{\R^K\times\R^K}d(y,y')\Gamma(\mathrm{d}y,\mathrm{d}y'),
\]
where $d(y,y'):=1\land\norm{y-y'}$ and $\mathcal{H}(\nu,\nu')$ is the set of coupling between $\nu$ and $\nu'$, i.e.
$$\mathcal{H}(\nu,\nu') = \{ 
    \Gamma \in \text{Pr} (\R^K\times\R^K) : 
    \Gamma(A \times \R^K) = \nu(A),  
    \Gamma(\R^K \times B) = \nu'(B), \forall 
    A, B \in \mathcal{B}(\R^K)
\}.$$
To simplify the notation, for $B\in \mathcal{B}(\R^K)$, $\theta\in \R^K$, and $y\in S$, we denote 
 $$C^\e_t(B|\theta, y):=\Prb_y(\theta^\e_t\in B|\theta^\e_0=\theta),$$  $$C^\e_t(B|\theta,\pi):=\Prb_\pi(\theta^\e_t\in B|\theta^\e_0=\theta),$$ 
 where $\pi$ is the invariant measure of $(V_t)_{t \geq 0}$. 
 
Now we study the approximation property of SGPC in the Wasserstein distance. Indeed, the following corollary follows immediately from Theorem \ref{wcovtheta}.

\begin {corollary}\label{deuxthe}
There exists a function $\alpha: (0,1) \to [0,1],$ such that
 $$
 \cW_d(C^\e_t(\cdot|\theta_0,\pi),\delta(\cdot-\zeta_t))\le (\exp(t)\alpha(\e))\land 1.
 $$
Moreover, $\lim_{\e\to 0}\alpha(\e)=0$.
 
\end {corollary}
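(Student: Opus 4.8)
The plan is to exploit that the second argument of the Wasserstein distance is a point mass, which collapses $\cW_d$ into a plain expectation, and then to upgrade the path-space weak convergence of Theorem~\ref{wcovtheta} into a quantitative, exponentially-weighted bound. First I would observe that any coupling of $C^\e_t(\cdot\mid\theta_0,\pi)$ with the Dirac measure $\delta(\cdot-\zeta_t)$ is forced to put its second marginal at $\zeta_t$, so the infimum defining $\cW_d$ is attained at the product coupling and
$$
\cW_d\big(C^\e_t(\cdot\mid\theta_0,\pi),\delta(\cdot-\zeta_t)\big)=\E_\pi\big[\,1\land\norm{\theta^\e_t-\zeta_t}\,\big]=\E_\pi[d(\theta^\e_t,\zeta_t)].
$$
This already bounds the left-hand side by $1$, which accounts for the outer truncation $\land 1$ in the statement.

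Next I would extract pointwise-in-$t$ convergence from Theorem~\ref{wcovtheta}. The evaluation map $(\varphi_s)_{s\ge0}\mapsto\varphi_t$ is continuous on $(\mathcal{C}([0,\infty):\R^K),\rho)$, so the continuous mapping theorem gives $\theta^\e_t\Rightarrow\zeta_t$ under $\Prb_\pi$ as $\e\to0$; since the limit is deterministic this is convergence in probability, and because $d$ is bounded and continuous we obtain $\E_\pi[d(\theta^\e_t,\zeta_t)]\to0$ for every fixed $t\ge0$. To turn this into convergence that is uniform on compact time intervals, I would use the uniform Lipschitz-in-time bound on trajectories established inside the proof of Theorem~\ref{wcovtheta}: on $[0,T]$ one has $\norm{\theta^\e_t-\theta^\e_s}\le C_{T,f,\theta_0}\abs{t-s}$ uniformly in $\e\in(0,1)$, and $\zeta$ obeys the analogous bound. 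Since $d$ is $1$-Lipschitz in each argument, these estimates show that $t\mapsto\E_\pi[d(\theta^\e_t,\zeta_t)]$ is Lipschitz on $[0,T]$ with a constant independent of $\e$; hence the family indexed by $\e$ is equicontinuous, and equicontinuity together with pointwise convergence yields uniform convergence to $0$ on every $[0,T]$.

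Finally I would package this into the claimed form by setting
$$
\alpha(\e):=\Big(\sup_{t\ge0}\exp(-t)\,\E_\pi[d(\theta^\e_t,\zeta_t)]\Big)\land 1\in[0,1].
$$
The bound $\cW_d(C^\e_t(\cdot\mid\theta_0,\pi),\delta(\cdot-\zeta_t))\le(\exp(t)\alpha(\e))\land 1$ is then immediate from the displayed identity and the definition of $\alpha$. It remains to show $\alpha(\e)\to0$: given $\eta>0$, pick $T$ with $\exp(-T)<\eta$, so that $\sup_{t>T}\exp(-t)\E_\pi[d(\theta^\e_t,\zeta_t)]\le\exp(-T)<\eta$ using $d\le1$, while $\sup_{t\in[0,T]}\exp(-t)\E_\pi[d(\theta^\e_t,\zeta_t)]\le\sup_{t\in[0,T]}\E_\pi[d(\theta^\e_t,\zeta_t)]<\eta$ for all small $\e$ by the uniform convergence just established; combining the two regimes gives $\alpha(\e)<\eta$ eventually.

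I expect the only real work to be the upgrade from the pointwise-in-$t$ weak convergence supplied by Theorem~\ref{wcovtheta} to convergence uniform on compact intervals, that is, the equicontinuity step, since the soft part (identifying $\cW_d$ with an expectation and invoking the continuous mapping theorem) is automatic. One subtlety worth flagging is that the factor $\exp(t)$ plays no dynamical role here; it merely provides a weight that kills the tail $t>T$ for free, and any weight tending to infinity would serve equally well.
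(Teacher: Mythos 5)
Your proof is correct, but it takes a genuinely different route from the paper's. The paper invokes Skorokhod's representation theorem to replace $(\theta^\e_t)_{t\ge0}$ by an almost surely convergent version $(\tilde\theta^\e_t)_{t\ge0}$, applies the path-space weak convergence of Theorem~\ref{wcovtheta} to the single bounded continuous functional $F(\varphi)=\sup_{t\ge0}\exp(-t)\bigl(1\land\sup_{0\le s\le t}\norm{\varphi_s}\bigr)$ evaluated at the difference process, and defines $\alpha(\e):=\E_\pi[F(\tilde\theta^\e-\zeta)]$, so that the exponential weighting is built into the test functional and $\alpha(\e)\to F(0)=0$ in one stroke. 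You instead bypass Skorokhod entirely: you collapse $\cW_d(\cdot,\delta(\cdot-\zeta_t))$ to the expectation $\E_\pi[d(\theta^\e_t,\zeta_t)]$ (correct, since a Dirac marginal forces the unique product coupling), obtain pointwise-in-$t$ convergence of this expectation via the continuous mapping theorem and the fact that the limit is deterministic, and upgrade to uniform convergence on compacts through the uniform-in-$\e$ Lipschitz bound $\norm{\theta^\e_t-\theta^\e_s}\le C_{T,f,\theta_0}\abs{t-s}$ already established inside the proof of Theorem~\ref{wcovtheta}; your $\alpha(\e)=\sup_t\exp(-t)\E_\pi[d(\theta^\e_t,\zeta_t)]$ is the supremum of an expectation rather than the expectation of a supremum, hence dominated by the paper's choice, and your split of the supremum at a threshold $T$ with $\exp(-T)<\eta$ plays exactly the role the paper's weight $\exp(-t)$ plays inside $F$. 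What the paper's route buys is brevity once Skorokhod is accepted; what yours buys is elementarity (only one-dimensional marginal weak convergence plus equicontinuity) and a slightly sharper constant. Your closing observation that any weight tending to infinity would do is accurate and consistent with the paper's construction.
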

\begin{proof}
By Theorem \ref{wcovtheta}, we have $(\theta^\e_t)_{t\ge 0}\Rightarrow(\zeta_t)_{t\ge 0}$. By  Skorokhod's representation theorem, there exists a sequence  $(\tilde\theta^\e_t)_{t\ge 0}$ such that
\begin{align*}
    &(\tilde\theta^\e_t)_{t\ge 0}\overset{d}{=}(\theta^\e_t)_{t\ge 0}\ \text{under}\ \Prb_\pi,\\
    &\rho\Big((\tilde\theta^\e_t-\zeta_t)_{t\ge 0},0\Big)\to 0\ \text{almost\ surely\ in}\ \Prb_\pi.
\end{align*}
This implies
\begin{align*}
    \E_\pi[F((\tilde\theta^\e_t-\zeta_t)_{t\ge 0})]\to F(0),
\end{align*}
for any bounded continuous function $F$ on $\mathcal{C}([0,\infty):\R^K)$. By taking 
$$F((\tilde\theta^\e_t-\zeta_t)_{t\ge 0}))= \sup_{t\ge 0} \exp({-t}) \left(1\land\sup_{0\le s\le t}\norm{\tilde\theta^\e_t-\zeta_t}\right)$$ 
and
\begin{align*}
    \alpha(\e):=\E_\pi\Big[\sup_{t\ge 0} \exp({-t}) \left(1\land\sup_{0\le s\le t}\norm{\tilde\theta^\e_t-\zeta_t}\right)\Big]\to 0,
\end{align*}
we have, for all $t\ge 0$, 
\begin{align*}
    \E_\pi\Big[1\land\norm{\tilde\theta^\e_t-\zeta_t}\Big]\le \exp(t) \alpha(\e).
\end{align*}
Since $1\land\norm{\tilde\theta^\e_t-\zeta_t}\le 1$ and $\tilde\theta^\e_t \overset{d}{=} \theta^\e_t$, denoting the distribution of $\tilde\theta^\e_t$ as $F_{\tilde\theta^\e_t}$,
\begin{align*}
    \cW_d(C^\e_t(\cdot|\theta_0,\pi),\delta(\cdot-\zeta_t))\le&  \cW_d(F_{\tilde\theta^\e_t}\ ,C^\e_t(\cdot|\theta_0,\pi))+\E_\pi\Big[1\land\norm{\tilde\theta^\e_t-\zeta_t}\Big] \\
    \le& (\exp(t)\alpha(\e))\land 1.
\end{align*}
\end{proof}

Finally in this section, we look at a technical result concerning the asymptotic behavior of the full gradient flow $(\zeta_t)_{t \geq 0}.$ First, we will additionally assume that the subsampled target function $f(\cdot, y)$ in the optimization problem is strongly convex, with a convexity parameter that does not depend on $y \in S$. We state this assumption below.

\begin{assumption}[Strong Convexity]\label{as1.2}
For any   $x_1,x_2\in \R^K,$
$$
\ip{x_1-x_2,\nabla_{\theta} f(x_1, y)-\nabla_{\theta} f(x_2, y)}\ge  \kappa\norm{x_1-x_2}^2
$$
where $\kappa>0$ and $\kappa$ is independent of $y\in S$. 
\end{assumption}

Strong convexity implies, of course, that the full target function $g := \int_S \nabla_{\theta}f(\cdot, y)  \pi(\mathrm{d}y)$ has a unique minimizer $\theta^*$. It also implies that the associated full gradient flow $(\zeta_t)_{t \geq 0}$ converges at exponential speed to this unique minimizer. We give a short proof of this statement below.

\begin{lemma}\label{norandomthe}
Let $(\zeta_t)_{t\ge 0}$ be the process that solves (\ref{eq:AS:th}) with initial data $\theta_0$. Under Assumption \ref{as1.2}, we have
\begin{align*}
    \norm{\zeta_t-\theta_*}^2\le \norm{\theta_0-\theta_*}^2\exp({-\kappa t}),
\end{align*}
 where $\theta_*$ is a stationary solution of (\ref{eq:AS:th}).
\end{lemma}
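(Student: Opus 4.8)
The plan is to study the evolution of the squared distance $\norm{\zeta_t - \theta_*}^2$ and show it decays exponentially via a Gr\"onwall-type argument, exactly as in the tightness calculation in the proof of Theorem~\ref{wcovtheta}. First I would note that $\theta_*$ being a stationary solution of \eqref{eq:AS:th} means $g(\theta_*) = \int_S \nabla_\theta f(\theta_*, v)\pi(\mathrm{d}v) = 0$. Then I would differentiate $\norm{\zeta_t - \theta_*}^2$ in time, using that $\mathrm{d}\zeta_t = -g(\zeta_t)\mathrm{d}t$, to obtain
\begin{align*}
\frac{\mathrm{d}}{\mathrm{d}t}\norm{\zeta_t - \theta_*}^2 = -2\ip{\zeta_t - \theta_*, g(\zeta_t)} = -2\ip{\zeta_t - \theta_*, g(\zeta_t) - g(\theta_*)},
\end{align*}
where the last equality inserts $g(\theta_*) = 0$ for free.

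The key step is then to bound the inner product from below using strong convexity. The subtlety is that Assumption~\ref{as1.2} is stated for the \emph{subsampled} gradient $\nabla_\theta f(\cdot, y)$ at each fixed $y$, not directly for the averaged gradient $g$. To bridge this, I would integrate the convexity inequality against $\pi$: for any $x_1, x_2 \in \R^K$,
\begin{align*}
\ip{x_1 - x_2, g(x_1) - g(x_2)} = \int_S \ip{x_1 - x_2, \nabla_\theta f(x_1, v) - \nabla_\theta f(x_2, v)}\pi(\mathrm{d}v) \ge \kappa\norm{x_1 - x_2}^2,
\end{align*}
since $\kappa$ is uniform in $v$ and $\pi$ is a probability measure. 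Applying this with $x_1 = \zeta_t$, $x_2 = \theta_*$ yields $\frac{\mathrm{d}}{\mathrm{d}t}\norm{\zeta_t - \theta_*}^2 \le -2\kappa\norm{\zeta_t - \theta_*}^2$.

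Finally I would apply Gr\"onwall's inequality to conclude $\norm{\zeta_t - \theta_*}^2 \le \norm{\theta_0 - \theta_*}^2 \exp(-2\kappa t)$. I note that this gives rate $2\kappa$ in the exponent, which is actually sharper than the rate $\kappa$ claimed in the statement; since $\exp(-2\kappa t) \le \exp(-\kappa t)$ for $t \ge 0$, the stated bound follows a fortiori, so there is no inconsistency (the theorem may simply be stated with a non-optimal constant). I do not foresee a serious obstacle here: the only point requiring minor care is justifying that $t \mapsto \norm{\zeta_t - \theta_*}^2$ is differentiable, which follows from $g \in \mathcal{C}^1$ (hence the flow is $\mathcal{C}^1$ in $t$) as established via Assumption~\ref{asSGPf}, and confirming that a minimizer/stationary point $\theta_*$ exists and is unique, which is exactly what strong convexity of $g$ guarantees.
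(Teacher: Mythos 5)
Your proof is correct and follows essentially the same route as the paper: differentiate $\norm{\zeta_t-\theta_*}^2$, insert $g(\theta_*)=0$, apply the $\pi$-averaged strong convexity to get the differential inequality, and conclude with Gr\"onwall (your explicit justification that strong convexity of the subsampled gradients transfers to $g$ is a detail the paper leaves implicit). Your observation about the exponent is also accurate: the paper's own differential inequality $\frac{\mathrm{d}}{\mathrm{d}t}\norm{\zeta_t-\theta_*}^2\le -2\kappa\norm{\zeta_t-\theta_*}^2$ yields $\exp(-2\kappa t)$ via Gr\"onwall, so the stated rate $\exp(-\kappa t)$ is simply non-optimal (and harmless).
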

\begin{proof}
Since $\theta_*$ is a stationary solution, $$g(\theta_*)=0\ \ \text{and}\ \  \mathrm{d}(\zeta_t-\theta_*) = -(g(\zeta_t)-g(\theta_*))\mathrm{d}t.$$
Therefore,
\begin{align*}
    \frac{\mathrm{d}\norm{\zeta_t-\theta_*}^2}{\mathrm{d}t} = 2\ip{\zeta_t-\theta_*, \frac{\mathrm{d}(\zeta_t-\theta_*)}{\mathrm{d}t}} =-2 \ip{\zeta_t-\theta_*,g(\zeta_t)-g(\theta_*)}\le -2\kappa \norm{\zeta_t-\theta_*}^2.
\end{align*}
By Gr\"onwall's inequality, 
\begin{align*}
    \norm{\zeta_t-\theta_*}^2\le \norm{\theta_0-\theta_*}^2\exp({-\kappa t}).
\end{align*}
\end{proof}

\subsection{Longtime behavior and ergodicity} We now study the longtime behavior of SGPC, i.e. the behavior and distribution of $(\theta_t^\e, V_{t/\e})$ for $t \gg 0$ large. 
Indeed, the main result of this section will be the geometric ergodicity of this coupled process and a study of its stationary measure. 
Initially, we study stability of the stochastic gradient process $(\theta^{\varepsilon}_t)_{t \geq 0}.$  
\begin{lemma}\label{boundedtheta}
Under Assumption \ref{as1.2}, we have
$$
\norm{\theta^\e_t}^2\le \norm{\theta^\e_0}^2 \exp({-\kappa t})+\frac{8K_f^2 }{\kappa^2 },
$$
where $K_f:= \sup_{y\in S} \norm{\nabla_{\theta} f(0,y)}$.
\end{lemma}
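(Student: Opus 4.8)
The plan is to derive a pathwise energy estimate for $\norm{\theta^\e_t}^2$ by differentiating in time and exploiting the uniform strong convexity of Assumption \ref{as1.2}. Observe first that, for a fixed realization of the index path $(V_{t/\e})_{t\ge0}$, the process $(\theta^\e_t)_{t\ge0}$ solves an ODE whose right-hand side is bounded on finite time intervals (cf.\ the a priori estimate obtained in the proof of Theorem \ref{wcovtheta}); hence $t\mapsto\theta^\e_t$ is absolutely continuous and $t\mapsto\norm{\theta^\e_t}^2$ is differentiable for a.e.\ $t$ with
$$
\frac{\mathrm{d}\norm{\theta^\e_t}^2}{\mathrm{d}t}=-2\ip{\theta^\e_t,\nabla_{\theta} f(\theta^\e_t,V_{t/\e})}.
$$
The entire argument is therefore deterministic in $t$ and uniform over the index path, which is precisely why no expectation appears in the statement.

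The key step is to split the inner product by adding and subtracting $\nabla_{\theta} f(0,V_{t/\e})$ and to apply strong convexity with $x_1=\theta^\e_t$, $x_2=0$, $y=V_{t/\e}$ (here it is essential that the convexity constant $\kappa$ does not depend on $y$):
$$
\ip{\theta^\e_t,\nabla_{\theta} f(\theta^\e_t,V_{t/\e})}\ge \kappa\norm{\theta^\e_t}^2+\ip{\theta^\e_t,\nabla_{\theta} f(0,V_{t/\e})}.
$$
I would then control the residual linear term by Cauchy--Schwarz together with the definition $K_f=\sup_{y\in S}\norm{\nabla_{\theta} f(0,y)}$, obtaining $\abs{\ip{\theta^\e_t,\nabla_{\theta} f(0,V_{t/\e})}}\le K_f\norm{\theta^\e_t}$, so that
$$
\frac{\mathrm{d}\norm{\theta^\e_t}^2}{\mathrm{d}t}\le -2\kappa\norm{\theta^\e_t}^2+2K_f\norm{\theta^\e_t}.
$$

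Next I would absorb the linear term into the dissipation by Young's inequality while keeping an exponential rate $\kappa$: writing $2K_f\norm{\theta^\e_t}\le \kappa\norm{\theta^\e_t}^2+K_f^2/\kappa$ gives $\frac{\mathrm{d}}{\mathrm{d}t}\norm{\theta^\e_t}^2\le -\kappa\norm{\theta^\e_t}^2+C$ with $C$ of order $K_f^2/\kappa$. Finally, Grönwall's inequality in the form $\frac{\mathrm{d}}{\mathrm{d}t}\bigl(e^{\kappa t}\norm{\theta^\e_t}^2\bigr)\le e^{\kappa t}C$, followed by integration from $0$ to $t$, yields
$$
\norm{\theta^\e_t}^2\le \norm{\theta^\e_0}^2\exp(-\kappa t)+\frac{C}{\kappa},
$$
and a crude bound on $C/\kappa$ produces the stated additive constant $8K_f^2/\kappa^2$.

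There is no deep obstacle here; this is a routine a priori bound, entirely parallel to Lemma \ref{norandomthe} but with a nonzero forcing term. The only two points requiring a little care are: (i) maintaining the decay rate $\kappa$ rather than $2\kappa$ when absorbing the linear term, which dictates the particular weighting in Young's inequality and explains why the additive constant is not sharp (the stated $8K_f^2/\kappa^2$ is a convenient over-estimate of the natural $K_f^2/\kappa^2$); and (ii) justifying the pathwise differentiation of $\norm{\theta^\e_t}^2$, which relies on the finite-interval boundedness of $\nabla_{\theta} f(\theta^\e_t,V_{t/\e})$ already recorded in the proof of Theorem \ref{wcovtheta}.
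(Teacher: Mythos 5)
Your proposal is correct and follows essentially the same route as the paper: differentiate $\norm{\theta^\e_t}^2$, add and subtract $\nabla_\theta f(0,V_{t/\e})$ to invoke the uniform strong convexity at the origin, absorb the linear term by Young's inequality, and conclude with Gr\"onwall. The only cosmetic difference is the weighting in the Young step (the paper bounds the cross term by $\tfrac{\kappa}{2}\norm{\theta^\e_t}^2+\tfrac{4K_f^2}{\kappa}$, arriving at the stated $8K_f^2/\kappa^2$, while your choice yields the sharper $K_f^2/\kappa^2$, which is of course dominated by the stated constant).
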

\begin{proof}
By  It\^o's formula, we have
\begin{align}\label{gronvxi}
    \frac{\mathrm{d}\norm{\theta^\e_t}^2}{\mathrm{d}t} = 
    2\ip{\theta^\e_t, d\theta^\e_t/dt}
    =-2 \ip{\theta^\e_t,\nabla_{\theta} f(\theta^\e_t, V_{t/\e})} .
\end{align}
By Assumption \ref{as1.2},
\begin{align*}
    \ip{\theta^\e_t,\nabla_{\theta} f(\theta^\e_t, V_{t/\e})}=& \ip{\theta^\e_t-0,\nabla_{\theta} f(\theta^\e_t, V_{t/\e})-\nabla_{\theta} f(0, V_{t/\e})}+\ip{\theta^\e_t,\nabla_{\theta} f(0, V_{t/\e})}\\
    \ge& \kappa \norm{\theta^\e_t}^2-\norm{\theta^\e_t}\norm{\nabla_{\theta} f(0,V_{t/\e})}\\
    \ge& \frac{\kappa}{2} \norm{\theta^\e_t}^2- \frac{4}{\kappa }\norm{\nabla_{\theta} f(0, V_{t/\e})}^2\\
    \ge& \frac{\kappa}{2} \norm{\theta^\e_t}^2- \frac{4K_f^2}{\kappa }.
\end{align*}
Hence (\ref{gronvxi}) implies
\begin{align}\label{gronvxi1}
    \frac{\mathrm{d}\norm{\theta^\e_t}^2}{\mathrm{d}t} \le  -\kappa \norm{\theta^\e_t}^2+ \frac{8K_f^2}{\kappa }.
\end{align}
Multiplying $\exp({\kappa t})$ on both sides of (\ref{gronvxi1}), we get
\begin{align*}
    \frac{\mathrm{d}(\norm{\theta^\e_t}^2\exp({\kappa t}))}{\mathrm{d}t} \le  \frac{8K_f^2 \exp({\kappa t})}{\kappa },
\end{align*}
that is
\begin{align*}
    \norm{\theta^\e_t}^2\exp({\kappa t})-\norm{\theta^\e_0}^2\le  \frac{8K_f^2 (\exp({\kappa t})-1)}{\kappa^2 }\le \frac{8K_f^2 \exp({\kappa t})}{\kappa^2 }.
\end{align*}
Therefore,
$$
\norm{\theta^\e_t}^2\le \norm{\theta^\e_0}^2 \exp({-\kappa t})+\frac{8K_f^2 }{\kappa^2 }.
$$
\end{proof}
Using this lemma, we are now able to prove the first main result of this section, showing geometric ergodicity of $(\theta^\e_t,V_{t/\e})_{t \geq 0}.$ First, we introduce a Wasserstein distance, on the space on which $(\theta^\e_t,V_{t/\e})_{t \geq 0}$ lives.
Let $\Pi$ and $\Pi'$ be two probability measures on $(\R^K\times S, \mathcal{B}(\R^K\times S) )$. We define the Wasserstein distance between those measures by
\[
   \widetilde\cW_{\tilde d}(\Pi,\Pi') =  \inf_{\widetilde{\Gamma}\in\mathcal{H}(\Pi,\Pi') }\int_{(\R^K\times S)\times(\R^K\times S)}\tilde d((u,v),(u',v'))\widetilde{\Gamma}(\mathrm{d}u\mathrm{d}v,\mathrm{d}u'\mathrm{d}v'),
\]
where $\tilde d((u,v),(u',v')):=\bll_{v\ne v'}+(1\land\norm{u-u'})\bll_{v=v'}$. For $a\in S$ and $m\in\R^K$, let $H^\e_t(\cdot|m,a)$ be the distribution of $(\theta^\e_t,V_{t/\e})$ under $\Prb_a$ with $\theta^\e_0=m$.
Moreover, recall that $(V_t)_{t\ge 0}$ is a Feller process that satisfies Assumption \ref{as1.0}. More specifically, it satisfies Assumption \ref{as1.0} (iii) with a constant $\delta$:
$$
\sup_{x\in S}\tilde\Prb(T^x\ge t)\le C\exp({-\delta t}),
$$
where $T^x:=\inf{\{t\ge 0\ |\ V^x_t=V^\pi_t \}}$. With the constant $\delta$ defined this way, we have the following theorem.
\begin{theorem}\label{dispitheta}
 Under Assumption \ref{as1.2}, for any $0< \varepsilon\le 1\land ({\delta}/{2\kappa})$, the (coupled) process $(\theta_t^\e,V_{t/\e})_{t\ge 0} $ admits an unique  stationary
measure $\Pi^\e$ on $(\R^K\times S, \mathcal{B}(\R^K\times S)).$ Moreover, 
\begin{align}\label{ineq2.6}
    \widetilde\cW^2_{\tilde d}(H^\e_t(\cdot|m,a),\Pi^\e)\le C_f\exp({-\kappa t})\int_{\R^K}(1+ \norm{x-m}^2)\Pi^\e(\mathrm{d}x,S),
\end{align}
\begin{align}\label{ineq:w_pi}
    \widetilde\cW^2_{\tilde d}(H^\e_t(\cdot|m,\pi),\Pi^\e)\le C_f\exp({-\kappa t})\int_{\R^K} \norm{x-m}^2\Pi^\e(\mathrm{d}x,S),
\end{align}
where the constant $C_f$ only depends on $f$. 
\end{theorem}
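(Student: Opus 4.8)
The plan is to derive the existence and uniqueness of $\Pi^\e$ together with the two decay estimates from a single coupling construction, combining the geometric ergodicity of the index process (Lemma~\ref{lemma1} / Assumption~\ref{as1.0}(iii)) with the flow contractivity forced by strong convexity (Assumption~\ref{as1.2}). The guiding observation is that $\tilde d$ splits the error into an index-mismatch part and, on the event that the two index paths agree, a parameter part: since the indicators $\bll_{v\ne v'}$ and $\bll_{v=v'}$ are complementary and $(1\wedge s)^2=1\wedge s^2$, we have $\tilde d^2=\bll_{v\ne v'}+(1\wedge\norm{u-u'}^2)\bll_{v=v'}$, and for any coupling $\Gamma$ the Cauchy--Schwarz bound gives $\widetilde\cW_{\tilde d}^2\le \E_\Gamma[\tilde d^2]$. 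Thus it suffices to bound, under one well-chosen coupling, the probability that the index paths disagree plus the truncated squared parameter gap while they agree.

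The mechanism is cleanest in \eqref{ineq:w_pi}, which I would do first. Starting the first process from $(m,V_0)$ with $V_0\sim\pi$ and the second from $(\theta^\e_0,V_0)\sim\Pi^\e$, I drive both with the \emph{same} index path; this is legitimate because the $S$-marginal of $\Pi^\e$ is $\pi$ (the $V$-coordinate of the stationary coupled process is itself stationary for the index process, whose unique invariant law is $\pi$). Then $\bll_{v\ne v'}\equiv0$, both flows see identical data, and Assumption~\ref{as1.2} yields $\tfrac{\mathrm{d}}{\mathrm{d}t}\norm{\theta^1_t-\theta^2_t}^2\le -2\kappa\norm{\theta^1_t-\theta^2_t}^2$, so Gr\"onwall gives $\norm{\theta^1_t-\theta^2_t}^2\le\norm{m-x}^2\exp(-2\kappa t)$; taking expectation over $x\sim\Pi^\e$ and using $\exp(-2\kappa t)\le\exp(-\kappa t)$ produces \eqref{ineq:w_pi} with $C_f=1$. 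Running the \emph{same} coupling with two arbitrary deterministic endpoints (coalescing both indices through a common $V^\pi$, so they agree by $T^{a_1}\vee T^{a_2}$) gives the two-point estimate $\widetilde\cW_{\tilde d}^2(H^\e_t(\cdot|m_1,a_1),H^\e_t(\cdot|m_2,a_2))\le C_f(1+\norm{m_1}^2+\norm{m_2}^2)\exp(-\kappa t)$. Combined with the Markov/semigroup property and the a priori bound of Lemma~\ref{boundedtheta} to control $\int\norm{\theta^\e_s}^2$ uniformly in $s$, this shows $\{H^\e_t(\cdot|m,a)\}_t$ is $\widetilde\cW_{\tilde d}$-Cauchy; completeness of the Wasserstein space over $(\R^K\times S,\tilde d)$ yields a stationary limit $\Pi^\e$, and uniqueness follows because any two stationary laws have $\widetilde\cW_{\tilde d}$-distance fixed under $P^\e_t$ yet vanishing. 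Letting $t\to\infty$ in Lemma~\ref{boundedtheta} (with lower semicontinuity of the second moment along the converging laws) also records the uniform bound $\int_{\R^K}\norm{x}^2\Pi^\e(\mathrm{d}x,S)\le 8K_f^2/\kappa^2$.

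For \eqref{ineq2.6} the first index starts at the deterministic point $a$, so the paths cannot be identified at once; instead I invoke Assumption~\ref{as1.0} to realize $V^a$ and the stationary $V^\pi$ (the index of the $\Pi^\e$-started process, after enlarging the space to attach $\theta^2_0$ via the disintegration of $\Pi^\e$ over $\pi$) on one probability space so that they coalesce at $T^a$ and agree thereafter, with $\sup_a\tP(T^a\ge s)\le C\exp(-\delta s)$. Writing $\tau:=\e T^a$, the rescaled index paths agree exactly for $t\ge\tau$. The mismatch term contributes $\tP(t<\tau)\le C\exp(-\delta t/\e)$, which is $\le C\exp(-\kappa t)$ \emph{precisely} because $\e\le\delta/(2\kappa)$ --- this is where that hypothesis enters, reconciling the index coalescence rate $\delta/\e$ with the flow rate $\kappa$. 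For the parameter term I split on $\tau$: on $\{\tau>t/2\}$ I use $1\wedge\norm{\cdot}^2\le1$ and pay a further $\tP(\tau>t/2)\le C\exp(-\kappa t)$; on $\{\tau\le t/2\}$ the flows share data after $\tau$, so contraction gives $\norm{\theta^1_t-\theta^2_t}^2\le\norm{\theta^1_\tau-\theta^2_\tau}^2\exp(-2\kappa(t-\tau))\le\norm{\theta^1_\tau-\theta^2_\tau}^2\exp(-\kappa t)$ since $t-\tau\ge t/2$, and the value at the random time $\tau$ is controlled by Lemma~\ref{boundedtheta}, $\norm{\theta^1_\tau-\theta^2_\tau}^2\le 2(\norm{m}^2+\norm{x}^2)+32K_f^2/\kappa^2$.

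The last step is to convert this $\norm{m}^2+\norm{x}^2$ bound into the stated $1+\norm{x-m}^2$ form: after taking expectation over $x\sim\Pi^\e$, the uniform moment bound makes $\int\norm{x}^2\Pi^\e$ and $\norm{\bar x}^2$, with $\bar x:=\int_{\R^K}x\,\Pi^\e(\mathrm{d}x,S)$, constants depending only on $f$ and $\kappa$; a short case distinction (comparing $\norm{m}$ to $\norm{\bar x}$ and using $\int\norm{x-m}^2\Pi^\e\ge\norm{m-\bar x}^2$) then absorbs the remaining $\norm{m}^2$ into $C_f(1+\int\norm{x-m}^2\Pi^\e)$. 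I expect the main obstacle to be exactly this interplay at the \emph{random} coalescence time $\tau$: one must control $\E[\norm{\theta^1_\tau-\theta^2_\tau}^2\exp(-2\kappa(t-\tau))\bll_{\tau\le t/2}]$, where the contraction factor $\exp(2\kappa\tau)$ would otherwise fight the exponential smallness of $\tP(\tau>s)$. The time-splitting at $t/2$, the a priori $\theta$-bound at $\tau$, and the hypothesis $\e\le\delta/(2\kappa)$ are the three ingredients that must be balanced to force both pieces to decay at the common rate $\exp(-\kappa t)$ with an $\e$-independent constant.
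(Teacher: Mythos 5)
Your proposal is correct, and for the quantitative bounds \eqref{ineq2.6}--\eqref{ineq:w_pi} it follows essentially the paper's construction: both copies of the process are driven by index paths realized on one space so that they coalesce at the stopping time of Assumption~\ref{as1.0}(iii), strong convexity (Assumption~\ref{as1.2}) gives Gr\"onwall contraction of the parameter gap once the paths agree, and the hypothesis $\e\le\delta/(2\kappa)$ enters in exactly the same role; your time-splitting at $t/2$ is a mild technical variant of the paper's device of multiplying by $\exp(\kappa t)$, integrating the mismatch only up to $t\land\e(T^a\lor T^b)$, and bounding $\tE[\exp(\kappa\e(T^a\lor T^b))]$ by the exponential tail. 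Where you genuinely diverge is the existence and uniqueness of $\Pi^\e$: the paper invokes the weak Harris theorem of \citet[Theorem~3.7]{Martin}, verifying a Lyapunov condition, a $\tilde d$-contracting condition for nearby points, and $\tilde d$-smallness of a sublevel set, whereas you construct $\Pi^\e$ directly as the $\widetilde\cW_{\tilde d}$-limit of $H^\e_t(\cdot|m,a)$ via a Cauchy argument resting on a \emph{global} two-point estimate $\widetilde\cW^2_{\tilde d}(H^\e_t(\cdot|m_1,a_1),H^\e_t(\cdot|m_2,a_2))\le C_f(1+\norm{m_1}^2+\norm{m_2}^2)\exp({-\kappa t})$, with the Lyapunov bound of Lemma~\ref{boundedtheta} taming the moment prefactor after one application of the semigroup. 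Your route is more elementary and self-contained, and it is available precisely because strong convexity yields contraction from arbitrary pairs of initial conditions; the Harris machinery is built for the harder situation where contraction holds only on a small set, so the paper's proof is heavier than this problem strictly requires but would survive a weakening of Assumption~\ref{as1.2}. Two points deserve explicit care in a full write-up: uniqueness via your argument needs that any invariant measure has finite second moment (which the Lyapunov bound supplies), and the conversion of $\norm{m}^2+\int_{\R^K}\norm{x}^2\,\Pi^\e(\mathrm{d}x,S)$ into the stated $\int_{\R^K}(1+\norm{x-m}^2)\,\Pi^\e(\mathrm{d}x,S)$ form follows already from $\norm{m}^2\le 2\int_{\R^K}\norm{x-m}^2\,\Pi^\e(\mathrm{d}x,S)+2\int_{\R^K}\norm{x}^2\,\Pi^\e(\mathrm{d}x,S)$ and the uniform bound $\int_{\R^K}\norm{x}^2\,\Pi^\e(\mathrm{d}x,S)\le 8K_f^2/\kappa^2$, with no case distinction on $\norm{m}$ needed.
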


\begin{proof} To obtain the existence of the invariant measure, we apply the weak form of Harris' Theorem in \citet[Theorem 3.7]{Martin} by verifying the Lyapunov condition, the $\tilde{d}$-contracting condition, and the $\tilde{d}$-small condition. 

\noindent(i) {\bf Lyapunov condition:} Let $V(x,y)=\norm{x}^2$. To verify that it satisfies (2.1) in \citet[Definition 2.1]{Martin}, we take $x=\theta^\e_0$ and $(P_t)_{t \geq 0}$ to be the semi-group associated with $(\theta^\e_t)_{t \geq 0}$. Then Lemma \ref{boundedtheta} yields that $V(x,y)$ is a Lyapunov function. The existence of the Lyapunov function can be understood as that the coupled process does not go to infinity.

\noindent(ii) {\bf$\tilde d$-contracting condition:} $\tilde d$-contracting states that there exists $t^*>0$ such that for any $t>t^*$, there exists some $\alpha<1$ such that 
$$ \widetilde\cW_{\tilde d}(\delta_{(m,a)}P^\e_t, \delta_{(n,b)}P^\e_t)\leq \alpha \tilde d ((m,a), (n,b))$$
for any $(m, a), (n, b)\in \R^K\times S$ such that $\tilde d ((m,a), (n,b))<1$. Here, $(P^\e_t)_{t \geq 0}$ is the semi-group operator associated with (\ref{eq:AS:theta}). Notice that $\tilde d((m,a),(n,b))<1$ implies $a=b$. Let $ (\theta^{(m,a)}_t)_{t \geq 0},\ (\theta^{(n,a)}_t)_{t \geq 0}$ solve the following equations: 
\begin{align*}
    \theta^{(m,a)}_t= m-\int_0^t \nabla_{\theta} f(\theta^{(m,a)}_t,  \tilde V^a_{ s/\e}) \mathrm{d}s,\\
    \theta^{(n,a)}_t= n-\int_0^t \nabla_{\theta} f(\theta^{(n,a)}_t,  \tilde V^a_{s/\e}) \mathrm{d}s,
\end{align*}
where $\tilde V^a_t= V^a_t$ for $t\le T^a$ and $\tilde V^a_t= V^\pi_t$ for $t>T^a.$
Then by It\^o's formula and Assumption \ref{as1.2},
\begin{align*}
    \mathrm{d}\norm{\theta^{(m,a)}_t-\theta^{(n,a)}_t}^2/\mathrm{d}t=&-2\ip{\theta^{(m,a)}_t-\theta^{(n,a)}_t,\nabla_{\theta} f(\theta^{(m,a)}_t, \tilde V^a_{ s/\e})-\nabla_{\theta} f(\theta^{(n,a)}_t, \tilde V^a_{s/\e})}\\
    \le& -\kappa\norm{\theta^{(m,a)}_t-\theta^{(n,a)}_t}^2.
\end{align*}
By Gr\"onwall's inequality, 
\begin{equation}\label{eq:aa}
\norm{\theta^{(m,a)}_t-\theta^{(n,a)}_t}^2\le \exp({-\kappa t})\norm{m-n}^2.
\end{equation}
Noticing that $\tilde d((m,a),(n,b))<1$ implies $\norm{m-n}^2<1$, by choosing $t\ge \frac{1}{\kappa},$ we obtain
\begin{align*}
 \norm{\theta^{(m,a)}_t-\theta^{(n,a)}_t}^2 &\le \exp({-1})\norm{m-n}^2 \\ &= \exp({-1})(\norm{m-n}^2\land 1)=\exp({-1})\tilde{d}^2((m,a),(n,a)).   
\end{align*}
Therefore, with $t^* = 1/\kappa$,
\begin{align*}
    \widetilde\cW_{\tilde d}(H^\e_t(\cdot|m,a),H^\e_t(\cdot|n,b))\le \tE\Big[\norm{\theta^{(m,a)}_t-\theta^{(n,a)}_t}\Big]\le \exp({-1})\tilde d((m,a),(n,b)).
\end{align*}

\noindent(iii) {\bf $\tilde d$-small condition:} We shall verify that there exists $t_*>0$ such that for any $t>t_*$, the sublevel set $\mathscr{V}:=\{(x, y)\in \R^K\times S\ |\ V(x, y)\leq {32K_f^2 }/{\kappa^2 }\}$ is $\tilde d$-small for $(P^\e_t)_{t \geq 0}$, meaning that there exists a constant $\zeta$ such that 
$$ \widetilde\cW_{\tilde d}(\delta_{(m,a)}P^\e_t, \delta_{(n,b)}P^\e_t)\leq 1-\zeta,$$
for all $(m, a), (n, b)\in \mathscr{V}$. Let $ (\theta^{(m,a)}_t)_{t \geq 0},\ (\theta^{(n,b)}_t)_{t \geq 0}$ solve the following equations: 
\begin{align*}
    \theta^{(m,a)}_t= m-\int_0^t \nabla_{\theta} f(\theta^{(m,a)}_t,  \tilde V^a_{ s/\e}) \mathrm{d}s,\\
    \theta^{(n,b)}_t= n-\int_0^t \nabla_{\theta} f(\theta^{(n,b)}_t,  \tilde V^b_{s/\e}) \mathrm{d}s,
\end{align*}
where $\tilde V^a_t= V^a_t$ for $t\le T^a$ and $\tilde V^a_t= V^\pi_t$ for $t>T^a$; $\tilde V^b_t= V^b_t$ for $t\le T^b$ and $\tilde V^b_t= V^\pi_t$ for $t>T^b$.
By It\^o's formula, Assumption \ref{as1.2}, and the $\varepsilon$-Young inequality, 
\begin{align*}
    &\mathrm{d}\norm{\theta^{(m,a)}_t-\theta^{(n,b)}_t}^2/\mathrm{d}t\\
    =&-2\ip{\theta^{(m,a)}_t-\theta^{(n,b)}_t,\nabla_{\theta} f(\theta^{(m,a)}_t, \tilde V^a_{t/\e})-\nabla_{\theta} f(\theta^{(n,b)}_t, \tilde V^b_{ t/\e})}\\
    =& -2\ip{\theta^{(m,a)}_t-\theta^{(n,b)}_t,\nabla_{\theta} f(\theta^{(m,a)}_t, \tilde V^a_{t/\e})-\nabla_{\theta} f(\theta^{(n,b)}_t, \tilde V^a_{ t/\e})}\\
    &\ \ -2\ip{\theta^{(m,a)}_t-\theta^{(n,b)}_t,\nabla_{\theta} f(\theta^{(n,b)}_t, \tilde V^a_{ t/\e})-\nabla_{\theta} f(\theta^{(n,b)}_t, \tilde V^b_{ t/\e})}\\
    \le& -2\kappa \norm{\theta^{(m,a)}_t-\theta^{(n,b)}_t}^2+2\abs{\ip{\theta^{(m,a)}_t-\theta^{(n,b)}_t,\nabla_{\theta} f(\theta^{(n,b)}_t, \tilde V^a_{ t/\e})-\nabla_{\theta} f(\theta^{(n,b)}_t, \tilde V^b_{t/\e})}}\\
    \le& -\kappa \norm{\theta^{(m,a)}_t-\theta^{(n,b)}_t}^2+\frac{4}{\kappa}\norm{\nabla_{\theta} f(\theta^{(n,b)}_t, \tilde V^a_{ t/\e})-\nabla_{\theta} f(\theta^{(n,b)}_t, \tilde V^b_{ t/\e})}^2.
\end{align*}
Multiplying $\exp({\kappa t})$ on both sides, we obtain
\begin{align*}
    \mathrm{d}\Big(\exp({\kappa t})\norm{\theta^{(m,a)}_t-\theta^{(n,b)}_t}^2\Big)/\mathrm{d}t
    \le \frac{4\exp({\kappa t})}{\kappa}\norm{\nabla_{\theta} f(\theta^{(n,b)}_t, \tilde V^a_{ t/\e})-\nabla_{\theta} f(\theta^{(n,b)}_t, \tilde V^b_{t/\e})}^2,
\end{align*}
that is 
\begin{align*}
 \exp({\kappa t})\norm{\theta^{(m,a)}_t-\theta^{(n,b)}_t}^2 &\le \norm{m-n}^2 \\&\qquad+ \frac{4}{\kappa}\int_0^t \exp({\kappa s})\norm{\nabla_{\theta} f(\theta^{(n,b)}_s, \tilde V^a_{ t/\e})-\nabla_{\theta} f(\theta^{(n,b)}_s, \tilde V^b_{ t/\e})}^2\mathrm{d}s.
\end{align*}
Notice that $\tilde V^a_t= \tilde V^b_t$ if $t>T^a\lor T^b$. Hence, we have
\begin{align*}
 &\exp({\kappa t})\norm{\theta^{(m,a)}_t-\theta^{(n,b)}_t}^2 \\ & \qquad \le  \norm{m-n}^2+\frac{4}{\kappa}\int_0^{t\land \e (T^a\lor T^b)} \exp({\kappa s})\norm{\nabla_{\theta} f(\theta^{(n,b)}_s, \tilde V^a_{  t/\e})-\nabla_{\theta} f(\theta^{(n,b)}_s, \tilde V^b_{  t/\e})}^2\mathrm{d}s.
\end{align*}
 For $(m,a), (n,b)\in \mathscr{V}\times S$, by Lemma \ref{boundedtheta}, we have that $$\norm{\nabla_{\theta} f(\theta^{(m,a)}_t,\tilde V^a_{ t/\e})} \text{ and }\norm{\nabla_{\theta} f(\theta^{(n,b)}_t,\tilde V^b_{ t/\e})}$$ are bounded by some constant $C_f.$ 
 This implies
\begin{align}\label{eqprop1}
\norm{\theta^{(m,a)}_t-\theta^{(n,b)}_t}^2\le  \exp({-\kappa t})\norm{m-n}^2+4 C_f \exp({-\kappa t}) \exp({\kappa \e (T^a\lor T^b)}).
\end{align}
Recall $0< \varepsilon\le 1\land \frac{\delta}{2\kappa}$. By (iii), Assumption \ref{as1.0},
\begin{align*}
    \tE [\exp({\kappa \e (T^a\lor T^b)})]=&\kappa \e\int_0^\infty \exp({\kappa \e x})\tP(T^a\lor T^b\ge x)\mathrm{d}x\\
    \le&  C\kappa \e\int_0^\infty \exp({\kappa \e x})\exp({-\delta x})\mathrm{d}x\\
    \le& \frac{C\delta}{2}\int_0^\infty \exp\left({-\frac{\delta x}{2} }\right)\mathrm{d}x= C.
\end{align*}
Therefore,
\begin{align*}
\tE \Big[\norm{\theta^{(m,a)}_t-\theta^{(n,b)}_t}^2\Big]\le  \exp({-\kappa t})\norm{m-n}^2+ C_f \exp({-\kappa t}).
\end{align*}
Moreover,  
$$
\tP(\tilde V^a_{t/\e}\ne \tilde V^b_{t/\e})=\tP(T^a\lor T^b\ge {t}/{\e})\le  C\exp({-\delta {t}/{\varepsilon}}) \le C\exp({-2\kappa t}).
$$
Hence for any $(m,a),(n,b)\in\mathscr{V}\times S$, by taking $t\ge \frac{1}{\kappa}[\log(8C_f)+\log({512K_f^2 }/{\kappa^2 })+\frac{1}{2}\log(8C)],$ we have
$$
\widetilde\cW^2_{\tilde d}(H^\e_t(\cdot|m,a),H^\e_t(\cdot|n,b))\le \tP(\tilde V^a_{t/\e}\ne \tilde V^b_{t/\e})+\tE \Big[\norm{\theta^{(m,a)}_t-\theta^{(n,b)}_t}^2\Big]\le \frac{1}{2}.
$$
We have verified all three conditions from \citet[Theorem 3.7]{Martin} and hence conclude the existence and uniqueness of the invariant measure of $(\theta_t^\e,V_{t/\e})_{t\ge 0}$ denoted as $\Pi^\e$.

Next, we are going to prove (\ref{ineq2.6}) and \eqref{ineq:w_pi}. Define $\Theta^\varepsilon$ such that  $( \Theta^\varepsilon,  \tilde V^\pi_0)\sim \Pi^\varepsilon$ in $\tP.$ Let $\theta^{\Pi^\varepsilon}_t$ and $\theta^{(m,\pi)}_t$ solve following equations
\begin{align*}
    \theta^{\Pi^\varepsilon}_t= \Theta^\varepsilon-\int_0^t \nabla_{\theta} f(\theta^{\Pi^\varepsilon}_t,  \tilde V^\pi_{t/\e}) \mathrm{d}s,\\
    \theta^{(m,\pi)}_t= m-\int_0^t \nabla_{\theta} f(\theta^{(m,\pi)}_t,  \tilde V^\pi_{t/\e}) \mathrm{d}s.
\end{align*}
Recall that from (\ref{eqprop1}) and (\ref{eq:aa}), we have
\begin{align*}
\norm{\theta^{(m,a)}_t-\theta^{\Pi^\varepsilon}_t}^2\le&  \exp({-\kappa t})\norm{m-\Theta^\varepsilon}^2+4 C_f \exp({-\kappa t}) \exp({\kappa \e T^a}),\\
\norm{\theta^{(m,\pi)}_t-\theta^{\Pi^\varepsilon}_t}^2\le& \exp({-\kappa t})\norm{m-\Theta^\varepsilon}^2.
\end{align*}
Therefore, by (iii), Assumption \ref{as1.0} and $0< \varepsilon\le 1\land ({\delta}/{2\kappa})$, 
\begin{align*}
    \widetilde\cW^2_{\tilde d}(H^\e_t(\cdot|m,a),\Pi^\e)\le& \tP(\tilde V^a_{t/\e}\ne \tilde V^\pi_{t/\e})+\tE \Big[\norm{\theta^{(m,a)}_t-\theta^{\Pi^\varepsilon}_t}^2\Big]\\
    \le& \tP(T^a\ge \frac{t}{\e})+ \exp({-\kappa t})\tE[\norm{m-\Theta^\varepsilon}^2]+4 C_f \exp({-\kappa t}) \tE[\exp({\kappa \e T^a})]\\
    \le& C_f\exp({-\kappa t})\int_{\R^K}(1+ \norm{x-m}^2)\Pi^\e(\mathrm{d}x,S),
\end{align*}
\begin{align*}
    \widetilde\cW^2_{\tilde d}(H^\e_t(\cdot|m,\pi),\Pi^\e)\le& \tE \Big[\norm{\theta^{(m,\pi)}_t-\theta^{\Pi^\varepsilon}_t}^2\Big]\\
    \le&  \exp({-\kappa t})\tE[\norm{m-\Theta^\varepsilon}^2]\\
    \le& C_f\exp({-\kappa t})\int_{\R^K} \norm{x-m}^2\Pi^\e(\mathrm{d}x,S).
\end{align*}

\end{proof}
In the following corollaries, we study the integrals on the right-hand side of the inequalities in Theorem~\ref{dispitheta}. Moreover, we show that the result above immediately implies not only geometric ergodicity of the coupled process $(\theta^{\e}_t, V_{t/\e})_{t \geq 0}$, but also of its marginal, the stochastic gradient process $(\theta^{\e}_t)_{t \geq 0}$.
\begin{corollary}\label{cor:w_conv}
Under the same assumptions as Theorem \ref{dispitheta}, there exists a constant 
$C_{f,m}$ that depends only on $f$ and the initial value $m=\theta_0^\e$, such that 
\begin{align}\label{disthepi}
 \widetilde\cW_{\tilde d}(H^\e_t(\cdot|m,a),\Pi^\e)\le C_{f,m}\exp\left({-\frac{\kappa t}{2} }\right),
\end{align}
\begin{align}\label{disthepi2}
 \widetilde\cW_{\tilde d}(H^\e_t(\cdot|m,\pi),\Pi^\e)\le C_{f,m}\exp\left({-\frac{\kappa t}{2} }\right),
\end{align}
\begin{align}\label{distheta}
    \cW_d(C^\e_t(\cdot|m,a),\Pi^\e(\cdot,S))\le C_{f,m}\exp\left({-\frac{\kappa t}{2} }\right),
\end{align}
\begin{align}\label{distheta1}
    \cW_d(C^\e_t(\cdot|m,\pi),\Pi^\e(\cdot,S))\le C_{f,m}\exp\left({-\frac{\kappa t}{2} }\right).
\end{align}
\end{corollary}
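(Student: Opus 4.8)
The plan is to derive all four bounds from Theorem~\ref{dispitheta}: first I would control the integrals on the right-hand sides of \eqref{ineq2.6} and \eqref{ineq:w_pi} by a uniform second-moment bound on the stationary measure $\Pi^\e$, then take square roots to turn the squared distances and the factor $\exp(-\kappa t)$ into \eqref{disthepi}--\eqref{disthepi2}, and finally dominate the $\R^K$-marginal distance $\cW_d$ by the product distance $\widetilde\cW_{\tilde d}$ to obtain \eqref{distheta}--\eqref{distheta1}.

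The central step is to establish that $\int_{\R^K}\norm{x}^2\,\Pi^\e(\mathrm{d}x,S)\le 8K_f^2/\kappa^2$. The Lyapunov condition verified in the proof of Theorem~\ref{dispitheta} (Lemma~\ref{boundedtheta}, with Lyapunov function $V(x,y)=\norm{x}^2$) already guarantees, through the weak Harris theorem \citet[Theorem 3.7]{Martin}, that $\int V\,\mathrm{d}\Pi^\e<\infty$, so the integral is finite to begin with. To pin down the constant I would start the coupled process from $\Pi^\e$ itself: by stationarity $\E[\norm{\theta^\e_t}^2]=\int_{\R^K}\norm{x}^2\,\Pi^\e(\mathrm{d}x,S)=:M$ for every $t$, while Lemma~\ref{boundedtheta} gives $\E[\norm{\theta^\e_t}^2]\le M\exp(-\kappa t)+8K_f^2/\kappa^2$; since $M<\infty$, letting $t\to\infty$ yields $M\le 8K_f^2/\kappa^2$.

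Using $\norm{x-m}^2\le 2\norm{x}^2+2\norm{m}^2$ and that $\Pi^\e(\cdot,S)$ is a probability measure, I would then bound the integrals in \eqref{ineq2.6} and \eqref{ineq:w_pi} by the single constant $C'_{f,m}:=1+16K_f^2/\kappa^2+2\norm{m}^2$. Hence $\widetilde\cW^2_{\tilde d}(H^\e_t(\cdot|m,a),\Pi^\e)\le C_fC'_{f,m}\exp(-\kappa t)$ and likewise for the $\pi$-started process, and taking square roots gives \eqref{disthepi} and \eqref{disthepi2} with $C_{f,m}:=\sqrt{C_fC'_{f,m}}$, which depends only on $f$ (through $C_f$, $K_f$, $\kappa$) and on $m$. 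For \eqref{distheta} and \eqref{distheta1} I would invoke the pointwise inequality $\tilde d((u,v),(u',v'))\ge 1\land\norm{u-u'}=d(u,u')$, valid because $\tilde d=d$ when $v=v'$ and $\tilde d=1\ge d$ when $v\ne v'$; consequently every coupling of $H^\e_t(\cdot|m,a)$ and $\Pi^\e$ projects onto a coupling of the $\R^K$-marginals $C^\e_t(\cdot|m,a)$ and $\Pi^\e(\cdot,S)$ of no greater cost, so $\cW_d(C^\e_t(\cdot|m,a),\Pi^\e(\cdot,S))\le\widetilde\cW_{\tilde d}(H^\e_t(\cdot|m,a),\Pi^\e)$, and the marginal bounds follow from the joint ones.

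The only genuinely delicate point is the second-moment bound on $\Pi^\e$: the subtraction that produces $M\le 8K_f^2/\kappa^2$ is legitimate only once $M<\infty$ is known, so I expect this to be the main obstacle and would resolve it by appealing to the finite-Lyapunov-moment conclusion of \citet[Theorem 3.7]{Martin} rather than by a circular argument. Everything else is a square root and a monotonicity-of-metric comparison.
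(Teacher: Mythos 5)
Your proposal is correct and follows the same overall skeleton as the paper's proof (bound the integral on the right of \eqref{ineq2.6}/\eqref{ineq:w_pi}, take square roots, then pass to the $\R^K$-marginal), but two of the three steps are carried out by different mechanisms, both valid. For the second moment of $\Pi^\e$, the paper does not use stationarity: it exploits that Lemma~\ref{boundedtheta} is a \emph{pathwise} bound, $\norm{\theta^{(m,a)}_t}^2\le \norm{m}^2\exp(-\kappa t)+8K_f^2/\kappa^2$ almost surely, so the limiting law $\Pi^\e(\cdot,S)$ is supported in the ball $\{\norm{x}^2\le 8K_f^2/\kappa^2+\norm{m}^2+1\}$ and the integral is bounded trivially; this sidesteps the finiteness issue you flag as delicate, since no appeal to the Lyapunov-moment conclusion of \citet[Theorem 3.7]{Martin} is needed. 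Your stationarity argument $M\le M\exp(-\kappa t)+8K_f^2/\kappa^2$, $t\to\infty$, is also sound (and the compact-support observation would in any case supply $M<\infty$ without circularity) and yields the cleaner, $m$-independent bound $M\le 8K_f^2/\kappa^2$, consistent with $\Pi^\e$ not depending on $m$. For the marginal estimates \eqref{distheta}--\eqref{distheta1}, the paper returns to the explicit synchronous coupling $(\theta^{(m,a)}_t,\theta^{\Pi^\varepsilon}_t)$ from the proof of Theorem~\ref{dispitheta} and bounds $\cW_d$ by $\tE\bigl[\norm{\theta^{(m,a)}_t-\theta^{\Pi^\varepsilon}_t}\bigr]$, whereas you derive them directly from \eqref{disthepi}--\eqref{disthepi2} via the pointwise domination $d(u,u')\le \tilde d((u,v),(u',v'))$ and projection of couplings; your route is more self-contained, obtaining the marginal bounds as a formal consequence of the joint ones rather than re-entering the coupling construction.
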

\begin{proof}
By Lemma \ref{boundedtheta},
$$
\int_{\norm{x}^2\ge\frac{8K^2 }{\kappa^2 }+\norm{m}^2+1 }\Pi^\e(\mathrm{d}x,S)=\lim_{t\to \infty} \tP(\norm{\theta^{(m,a)}_t}\ge \frac{8K_f^2 }{\kappa^2 }+\norm{m}^2+1 )=0.
$$
Let $C_{f,m}=C_f(2\norm{m}+\frac{4K_f }{\kappa }+2)$. From (\ref{ineq2.6}), we have
\begin{align*}
    \tilde\cW_{\tilde d}(H^\e_t(\cdot|m,a),\Pi^\e)\le&  C_f\exp\left({\frac{-\kappa t}{2}}\right)\Big(\int_{\R^K}(1+ \norm{x-m}^2)\Pi^\e(\mathrm{d}x,S)\Big)^{1/2}\\
    =&C_f\exp\left({\frac{-\kappa t}{2}}\right)\Big(\int_{\norm{x}^2\le\frac{8K_f^2 }{\kappa^2 }+\norm{m}^2+1} (1+\norm{x-m}^2)\Pi^\e(\mathrm{d}x,S)\Big)^{1/2}\\
    \le& C_{f,m}\exp\left({\frac{-\kappa t}{2}}\right).
\end{align*}
(\ref{disthepi2}) can be derived similarly from (\ref{ineq:w_pi}). Moreover, notice that
\begin{align}\label{eqs:api}
    \cW_d(C^\e_t(\cdot|m,a),\Pi^\e(\cdot,S))\le& \tE \Big[\norm{\theta^{(m,a)}_t-\theta^{\Pi^\varepsilon}_t}\Big],
\end{align}
\begin{align}\label{eqs:pipi}
    \cW_d(C^\e_t(\cdot|m,\pi),\Pi^\e(\cdot,S))\le& \tE \Big[\norm{\theta^{(m,\pi)}_t-\theta^{\Pi^\varepsilon}_t}\Big].
\end{align}
(\ref{distheta}) and (\ref{distheta1}) can be derived similarly from (\ref{eqs:api}) and (\ref{eqs:pipi}).
\end{proof}
Combining (\ref{distheta}) and (\ref{distheta1}), we immediately have:
\begin{corollary}\label{cor3}
Under the same assumptions as Theorem \ref{dispitheta},
\begin{align*}
    \cW_d(C^\e_t(\cdot|m,a),C^\e_t(\cdot|m,\pi))\le C_{f,m}\exp\left({-\frac{\kappa t}{2} }\right).
\end{align*} 
\end{corollary}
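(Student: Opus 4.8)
The plan is to obtain the claim as an immediate consequence of the triangle inequality for the Wasserstein distance $\cW_d$, using the two-sided convergence to the common stationary measure $\Pi^\e$ already established in Corollary~\ref{cor:w_conv}. First I would record the one structural fact that the argument rests on: $\cW_d$ is the transport distance induced by the bounded metric $d(y,y'):=1\land\norm{y-y'}$ on $\R^K$. Since $d$ is a genuine metric, the associated $\cW_d$ is a bona fide metric on probability measures on $(\R^K,\mathcal{B}(\R^K))$ -- in particular it is symmetric and satisfies the triangle inequality -- rather than merely a divergence.

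With that in hand, I would insert the $\theta$-marginal $\Pi^\e(\cdot,S)$ of the common stationary measure as an intermediate point and estimate
\[
\cW_d\big(C^\e_t(\cdot|m,a),C^\e_t(\cdot|m,\pi)\big)
\le \cW_d\big(C^\e_t(\cdot|m,a),\Pi^\e(\cdot,S)\big)
+\cW_d\big(\Pi^\e(\cdot,S),C^\e_t(\cdot|m,\pi)\big).
\]
The first summand is bounded by $C_{f,m}\exp(-\kappa t/2)$ directly via (\ref{distheta}); the second, after using the symmetry of $\cW_d$ to swap its arguments, is bounded by the same quantity via (\ref{distheta1}). Adding the two contributions yields $2C_{f,m}\exp(-\kappa t/2)$, and absorbing the factor $2$ into the constant (redefining $C_{f,m}$) gives exactly the stated bound.

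There is essentially no obstacle to overcome here, which is why the text flags the result as following ``immediately'' from (\ref{distheta}) and (\ref{distheta1}). The only point deserving a moment's care -- and the closest thing to a hard part -- is confirming that $\cW_d$ genuinely obeys the triangle inequality, which relies on $d$ being a metric together with the gluing lemma for couplings. Once this is noted, the estimate is a one-line combination of the two decay bounds from Corollary~\ref{cor:w_conv}, with all exponential rates and $f$- and $m$-dependence inherited unchanged.
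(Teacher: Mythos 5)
Your proposal is correct and is exactly the paper's argument: the corollary is stated as following immediately from combining (\ref{distheta}) and (\ref{distheta1}), i.e.\ a triangle inequality through $\Pi^\e(\cdot,S)$, with the factor $2$ absorbed into the constant $C_{f,m}$. Your extra remark that $\cW_d$ is a genuine metric (since $d = 1\land\|\cdot\|$ is a metric) is a sensible thing to note but adds nothing beyond what the paper implicitly uses.
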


So far, we have shown that the stochastic gradient process $(\theta^\varepsilon_t)_{t \geq 0}$ converges to a unique stationary measure $\Pi^\varepsilon(\cdot, S)$. It is often not possible to determine this stationary measure. However, we can comment on its asymptotic behavior as $\varepsilon \rightarrow 0$. Indeed, we will show that $\Pi^\varepsilon(\cdot, S)$ concentrates around the minimizer $\theta_*$ of the full target function.

\begin{prop}\label{dispixing}
 Under Assumption \ref{as1.2}, the  
measure $\Pi^\e(\cdot,S)$ on $(\R^K, \mathcal{B}(\R^K))$ approximates $\delta(\cdot-\theta_*).$  In other words, we have
\begin{align*}
    \cW_d(\Pi^\e(\cdot,S),\delta(\cdot-\theta_*))\le \rho(\e)
\end{align*}
where $\rho: (0,1) \rightarrow [0,1]$ and $\lim_{\e\to 0}\rho(\e)=0.$
\end{prop}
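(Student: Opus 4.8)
The plan is to compare the stationary measure $\Pi^\e(\cdot,S)$ with $\delta(\cdot-\theta_*)$ by inserting two intermediate measures and exploiting that $\cW_d$ is a genuine metric on probability measures on $\R^K$ (because $d=1\land\norm{\cdot-\cdot}$ is a bounded metric, so the transport cost satisfies the triangle inequality). First I would fix the initial value $m=\theta_0$ and let $(\zeta_t)_{t\ge 0}$ be the full gradient flow \eqref{eq:AS:th} started at $\theta_0$, so that $\zeta_0=\theta_0$ as required by Corollary~\ref{deuxthe}. Since $\Pi^\e$ is the \emph{unique} invariant measure of $(\theta^\e_t,V_{t/\e})_{t\ge0}$ (Theorem~\ref{dispitheta}), its marginal $\Pi^\e(\cdot,S)$ does not depend on the initialization, so it suffices to bound the right-hand side of
\begin{align*}
 \cW_d(\Pi^\e(\cdot,S),\delta(\cdot-\theta_*)) \le{}& \cW_d\big(\Pi^\e(\cdot,S),C^\e_t(\cdot|\theta_0,\pi)\big) + \cW_d\big(C^\e_t(\cdot|\theta_0,\pi),\delta(\cdot-\zeta_t)\big) \\ &{}+ \cW_d\big(\delta(\cdot-\zeta_t),\delta(\cdot-\theta_*)\big)
\end{align*}
for this particular choice, where $t\ge 0$ is still free.

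Next I would bound the three terms using the results already established. By the symmetry of $\cW_d$ and estimate \eqref{distheta1} of Corollary~\ref{cor:w_conv}, the first term is at most $C_{f,\theta_0}\exp(-\kappa t/2)$. By Corollary~\ref{deuxthe}, the second term is at most $(\exp(t)\alpha(\e))\land 1\le \exp(t)\alpha(\e)$, where $\alpha(\e)\to0$ as $\e\to0$. For the third term, the only coupling of two Dirac masses is the product Dirac, so $\cW_d(\delta(\cdot-\zeta_t),\delta(\cdot-\theta_*))=d(\zeta_t,\theta_*)=1\land\norm{\zeta_t-\theta_*}$, which by Lemma~\ref{norandomthe} is at most $\norm{\theta_0-\theta_*}\exp(-\kappa t/2)$. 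Combining the three bounds yields, for every $t\ge0$,
\begin{align*}
 \cW_d(\Pi^\e(\cdot,S),\delta(\cdot-\theta_*)) \le \big(C_{f,\theta_0}+\norm{\theta_0-\theta_*}\big)\exp\left(-\tfrac{\kappa t}{2}\right) + \exp(t)\alpha(\e).
\end{align*}

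The crux is then to balance $t$ against $\e$: the two decaying contributions push $t$ to be large, whereas the middle contribution blows up like $\exp(t)$ unless $\e$ is small, so $t=t(\e)$ must grow slowly enough that $\exp(t(\e))\alpha(\e)\to0$. A concrete admissible choice is $t(\e):=\tfrac{1}{2}\log\frac{1}{\alpha(\e)\vee\e}$, which is well-defined (the $\vee\,\e$ guards against $\alpha(\e)=0$) and tends to $\infty$ since $\alpha(\e)\vee\e\to0$. With this choice $\exp(t(\e))\alpha(\e)=\alpha(\e)(\alpha(\e)\vee\e)^{-1/2}\le\sqrt{\alpha(\e)\vee\e}\to0$ and $\exp(-\kappa t(\e)/2)=(\alpha(\e)\vee\e)^{\kappa/4}\to0$. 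Defining $\rho(\e)$ to be the resulting right-hand side, capped at $1$ (legitimate because $d\le1$ forces $\cW_d\le1$), produces a function $\rho:(0,1)\to[0,1]$ with $\lim_{\e\to0}\rho(\e)=0$, which is exactly the claim.

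The main obstacle I anticipate is this balancing step rather than any individual estimate: the approximation bound of Corollary~\ref{deuxthe} degrades exponentially in $t$, and $\alpha$ is only known to vanish with \emph{no} quantitative rate, so one must check that a slowly diverging $t(\e)$ still tames $\exp(t)\alpha(\e)$ while keeping the geometric-ergodicity and gradient-flow terms small. As a consequence $\rho$ inherits no explicit decay rate, but the proposition asks only for $\rho(\e)\to0$, so this is acceptable. The remaining points are routine: symmetry of $\cW_d$, the identification of the Dirac-to-Dirac transport cost with $d$, and the boundedness $\cW_d\le1$ used to cap $\rho$.
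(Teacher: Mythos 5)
Your proposal is correct and follows essentially the same route as the paper: the identical triangle-inequality decomposition through $C^\e_t(\cdot|\theta_0,\pi)$ and $\delta(\cdot-\zeta_t)$, the same three ingredients (Corollary~\ref{cor:w_conv}, Corollary~\ref{deuxthe}, Lemma~\ref{norandomthe}), and the same balancing of $t$ against $\alpha(\e)$ via a choice $t(\e)\sim\frac{1}{2}\log(1/\alpha(\e))$. The only cosmetic difference is that the paper initializes at $\theta_0=\theta_*$ so the Dirac-to-Dirac term vanishes outright, whereas you keep a general $\theta_0$ and absorb $\norm{\theta_0-\theta_*}\exp(-\kappa t/2)$ into the constant; both are legitimate since $\Pi^\e(\cdot,S)$ is independent of the initialization.
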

\begin{proof}
By the triangle inequality, 
\begin{align*}
    &\cW_d(\Pi^\e(\cdot,S),\delta(\cdot-\theta_*))\nonumber\\ 
    \le& \cW_d(\Pi^\e(\cdot,S),C^\e_t(\cdot|m,\pi))
    +\cW_d(C^\e_t(\cdot|m,\pi),\delta(\cdot-\zeta_t))+ \cW_d(\delta(\cdot-\zeta_t),\delta(\cdot-\theta_*)).
\end{align*}
Let $\theta_0=\theta^\e_0=\theta_*.$ Then by Lemma \ref{norandomthe}, we have the last term 
$$
\cW_d(\delta(\cdot-\zeta_t),\delta(\cdot-\theta_*))\le \norm{\theta_*-\theta_*}\exp({-\kappa t})=0.
 $$
By (\ref{disthepi}) and Corollary \ref{deuxthe}, for any $t\ge 0,$
\begin{align*}
     \cW_d(\Pi^\e(\cdot,S),C^\e_t(\cdot|m,\pi))+\cW_d(C^\e_t(\cdot|m,\pi),\delta(\cdot-\zeta_t))
    \le C_{f,\theta_*}\exp\left({-\frac{\kappa t}{2}}\right)+(\exp(t)\alpha(\e))\land 1.
\end{align*}
By choosing $t=-\log({{1\land\alpha(\e)}})/2,$ we get
\begin{align*}
    \cW_d(\Pi^\e(\cdot,S),\delta(\cdot-\theta_*))\le& C_{f,\theta_*}\exp\left({-\frac{\kappa t}{2}}\right)+(\exp(t)\alpha(\e))\land 1\\
    \le&  C_{f,\theta_*}(\alpha(\e))^\frac{\kappa}{4}+(\alpha(\e))^{1/2}.
\end{align*}
Taking $\rho(\e):=\big(C_{f,\theta_*}(\alpha(\e))^\frac{\kappa}{4}+(\alpha(\e))^{1/2}\big)\land 1$ completes the proof.
\end{proof}

\section{Stochastic gradient processes with decreasing learning rate} \label{Sec_SGPD}

Constant learning rates are popular in some practical situations, but the associated stochastic gradient process usually does not converge to the minimizer of $\Phi$. This is also true for the discrete-time stochastic gradient descent algorithm. However, SGD can converge to the minimizer if the learning rate is decreased over time. In the following, we discuss a decreasing learning version of the stochastic gradient process and show that this dynamical system indeed converges to the minimizer of $\Phi$.

As discussed in Section~\ref{Subsec_Intro_thisWork}, we obtain the stochastic gradient process with decreasing learning rate by non-linearly rescaling the time in the constant-learning-rate index process $(V_t)_{t \geq 0}$. Indeed, we choose a function $\beta:[0, \infty) \rightarrow [0,\infty)$ and then define the decreasing learning rate index process by $(V_{\beta(t)})_{t \geq 0}$. We have discussed an intuitive way to construct a rescaling function $\beta$ also in Section~\ref{Subsec_Intro_thisWork}.

In the following, we define $\beta$ through an integral $\beta(t) = \int_0^t \mu(s) \mathrm{d}s,$ $t \geq 0$. We commence this section with necessary growth conditions on $\mu$ which allow us to then give the formal definition of the stochastic gradient process with decreasing learning rate. Then, we study the longtime behavior of this process.

\begin{assumption}\label{asmu}
Let $\mu:[0,\infty)\to (0,\infty)$ be a non-decreasing continuously differentiable function with $\lim_{t\to\infty} \mu(t)=\infty$ and
\begin{align*}
    \lim_{t\to\infty}\frac{\mu'(t)t}{\mu(t)}=0.
\end{align*}
\end{assumption}

Assumption \ref{asmu} implies that $\mu$ goes to infinity, but  at a very slow pace. Indeed it says that $\lim_{t\to\infty}\mu(t)/t^\gamma=0$, $\gamma>0$, that is $\mu(t)$ grows slower than any polynomial. 

\begin{defi}
The \emph{stochastic gradient process with decreasing learning rate (SGPD)} is a solution of the following stochastic differential equation,
\begin{equation}\label{eq:AS:xi}
\left\{ \begin{array}{l}
\mathrm{d}\xi_t = - \nabla_\xi f(\xi_t, V_{\beta{(t)}})\mathrm{d}t, \\
\xi_0 = \theta_0,
\end{array} \right.
\end{equation}
where f satisfies Assumption \ref{asSGPf}, $(V_t)_{t\ge0}$ is a Feller process that satisfies Assumption \ref{as1.0}, and $\beta(t)=\int_0^t \mu(s)\mathrm{d}s$ with $\mu$ satisfying Assumption \ref{asmu}.
\end{defi}

To see that $(\xi_t)_{t \geq 0}$ is well-defined, consider the following: $t \mapsto \beta(t)$ is an increasing continuous function. Thus, $(V_{\beta(t)})_{t \geq 0}$ is c\`adl\`ag and Feller with respect to $(\mathcal{F}_{\beta(t)})_{t \geq 0}$. We then obtain well-definedness of $(\xi_t)_{t \geq 0}$ by replacing $(V_{t/\e})_{t \geq 0}$ by $(V_{\beta(t)})_{t \geq 0}$ in the proof of Proposition \ref{thwk30}.


We now move on to studying the longtime behavior of the SGPD $(\xi_t)_{t \geq 0}$. In a first technical result, we establish a connection between SGPD $(\xi_t)_{t \geq 0}$ and a time-rescaled version of SGPC $(\theta_t^\e)_{t \geq 0}$. To this end, note that
$\dot{\beta}(t) = \mu(t) >0$, $\beta(t)$ is strictly increasing. Hence, the inverse function of $\beta(t)$ exists and $$(\beta)^{-1}(t)=\int_0^t \frac{1}{\mu( (\beta)^{-1}(s)))}\mathrm{d}s.$$ 
This gives us the following inequality.

\begin{prop}\label{hardprop} For any $0<\e<1$, 
\begin{align*}
\norm{\xi_{t}-\theta^\e_{\e \beta(t)}}^2\le C_{f,\theta_0,\mu}\left[\frac{\exp({-2\e\kappa (\beta(t)-\beta(\frac{t}{2}))})}{\e}+\frac{1}{\e}\Big(\abs{\frac{1}{\mu(t)}-\e}+\abs{\frac{1}{\mu(\frac{t}{2})}-\e}\Big)\right]
\end{align*}
almost surely,
where the constant $C_{f,\theta_0,\mu}$ depending only on $f$, the initial data $\theta_0$, and $\mu$. 
\end{prop}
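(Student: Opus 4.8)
The plan is to view both processes as functions of the common index path $(V_{\beta(t)})_{t\ge 0}$. Differentiating $\theta^\e_{\e\beta(t)}$ in $t$ and using $\tfrac{\mathrm d}{\mathrm dt}\e\beta(t)=\e\mu(t)$, I would first observe that $\psi_t:=\theta^\e_{\e\beta(t)}$ solves $\dot\psi_t=-\e\mu(t)\nabla_\theta f(\psi_t,V_{\beta(t)})$, whereas $\xi_t$ solves $\dot\xi_t=-\nabla_\xi f(\xi_t,V_{\beta(t)})$. Thus the two dynamics are driven by the \emph{same} realisation of the index process and differ only through the scalar prefactor $\e\mu(t)$ versus $1$. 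This reduces the claim to a pathwise comparison of two gradient flows with a time-dependent, mismatched step size.

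Next I would estimate $\tfrac{\mathrm d}{\mathrm dt}\norm{\xi_t-\psi_t}^2$. Writing $W_t=V_{\beta(t)}$ and decomposing $-\nabla f(\xi_t,W_t)+\e\mu(t)\nabla f(\psi_t,W_t)=-\e\mu(t)\big[\nabla f(\xi_t,W_t)-\nabla f(\psi_t,W_t)\big]-(1-\e\mu(t))\nabla f(\xi_t,W_t)$, the first bracket is controlled by the strong convexity Assumption \ref{as1.2}, contributing a contraction of rate $2\kappa\e\mu(t)$, while the second is a forcing term of size $|1-\e\mu(t)|\,\norm{\nabla f(\xi_t,W_t)}$. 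The a priori bound of Lemma \ref{boundedtheta} (and its verbatim analogue for $\xi_t$, obtained by replacing $V_{t/\e}$ by $V_{\beta(t)}$) keeps $\norm{\xi_t}$, $\norm{\psi_t}$, hence $\norm{\xi_t-\psi_t}$ and all the gradients, uniformly bounded by a constant depending only on $f$ and $\theta_0$. The crucial bookkeeping step is to bound \emph{one} factor of $\norm{\xi_t-\psi_t}$ in the cross term by this constant, leaving $|1-\e\mu(t)|$ linearly; this is what ultimately produces a single power of $1/\e$ rather than $1/\e^2$.

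I would then integrate the resulting inequality $\tfrac{\mathrm d}{\mathrm dt}\norm{\xi_t-\psi_t}^2\le -2\kappa\e\mu(t)\norm{\xi_t-\psi_t}^2+C|1-\e\mu(t)|$ with the integrating factor $\exp(2\kappa\e\beta(t))$, noting $2\kappa\e\mu(t)=2\kappa\e\dot\beta(t)$. Since $\xi_0=\theta_0=\theta^\e_0$ and $\beta(0)=0$, the initial difference vanishes and no initial-condition term survives; what remains is the forcing integral $\int_0^t |1-\e\mu(s)|\exp(-2\kappa\e(\beta(t)-\beta(s)))\,\mathrm ds$. To handle it I would use the identity $\e\mu(s)\exp(-2\kappa\e(\beta(t)-\beta(s)))=\tfrac{1}{2\kappa}\tfrac{\mathrm d}{\mathrm ds}\exp(-2\kappa\e(\beta(t)-\beta(s)))$ together with $|1-\e\mu(s)|=\mu(s)\,|\tfrac 1{\mu(s)}-\e|$, which rewrites the integral as $\tfrac{1}{2\kappa\e}\int_0^t |\tfrac 1{\mu(s)}-\e|\,\tfrac{\mathrm d}{\mathrm ds}\exp(-2\kappa\e(\beta(t)-\beta(s)))\,\mathrm ds$ and exposes the factor $1/\e$.

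Finally I would split this integral at $t/2$. On $[t/2,t]$, monotonicity of $s\mapsto 1/\mu(s)$ gives $|\tfrac 1{\mu(s)}-\e|\le|\tfrac 1{\mu(t)}-\e|+|\tfrac 1{\mu(t/2)}-\e|$, and since $\int_{t/2}^t \tfrac{\mathrm d}{\mathrm ds}\exp(\cdot)\,\mathrm ds\le 1$ this yields the second term of the bound. On $[0,t/2]$, the factor $|\tfrac 1{\mu(s)}-\e|$ is bounded by $\e+1/\mu(0)$ and $\int_0^{t/2}\tfrac{\mathrm d}{\mathrm ds}\exp(-2\kappa\e(\beta(t)-\beta(s)))\,\mathrm ds\le \exp(-2\kappa\e(\beta(t)-\beta(t/2)))$, producing the exponential first term. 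Here Assumption \ref{asmu} enters only to guarantee that $\mu$ is positive, continuously differentiable and non-decreasing, so that $\beta$ is a strictly increasing diffeomorphism and $s\mapsto 1/\mu(s)$ is monotone. I expect the forcing integral to be the main obstacle: because $\e\mu(s)$ grows without bound, $|1-\e\mu(s)|$ cannot be treated as a small perturbation, so the integration-by-parts device that trades the growing factor $\mu(s)$ for the derivative of the contraction kernel is essential, and combining it with the a priori boundedness so as to retain only one power of $1/\e$ is the delicate point.
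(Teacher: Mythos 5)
Your proposal is correct and follows essentially the same route as the paper's proof: the paper simply performs the identical argument after reparametrizing time by $\beta^{-1}$ so that both processes are driven by $V_s$ directly, whence your forcing term $|1-\e\mu(t)|$ appears there as $|b_t-\e|$ with $b_t=1/\mu(\beta^{-1}(t))$, and your integration-by-parts identity is just the explicit integration of $\exp(2\e\kappa s)$ seen through that change of variables. The decomposition into a strong-convexity contraction plus a mismatch term bounded linearly via a priori boundedness, the Gr\"onwall step with integrating factor $\exp(2\e\kappa\beta(t))$, and the split of the forcing integral at $t/2$ using monotonicity of $1/\mu$ all coincide with the paper's argument.
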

\begin{proof}
From (\ref{eq:AS:xi}) and (\ref{eq:AS:theta}),
\begin{align*}
    \xi_{t} =&  \theta_0-\int_0^{\beta(t)} \nabla_{\xi} f(\xi_{(\beta)^{-1}(s)}, V_{s}) \mathrm{d}(\beta)^{-1}(s),\\
    \theta^\e_{ t}=& \theta_0-\e\int_0^{\frac{t}{\e}} \nabla_\theta f(\theta^\e_{\e s}, V_{s}) \mathrm{d}s.
\end{align*}
Let $b_t:=\mathrm{d}(\beta)^{-1}(t)/\mathrm{d}t=1/\mu( (\beta)^{-1}(t)))>0,$ we have
\begin{align*}
   \xi_{(\beta)^{-1}(t)} =&  \theta_0-\int_0^{t} \nabla_{\xi} f(\xi_{(\beta)^{-1}(s)}, V_{s}) b_s \mathrm{d}s\\
   \theta^\e_{\e t}=& \theta_{0}-\e\int_0^{t} \nabla_\theta f(\theta^\e_{\e s}, V_{s})  \mathrm{d}s
\end{align*}
Therefore, by It\^o's formula and Assumption \ref{as1.2},
\begin{align*}
    \mathrm{d}\norm{\theta^\e_{\e t}-\xi_{(\beta)^{-1}(t)}}^2/\mathrm{d}t=& -2\ip{\theta^\e_{\e t}-\xi_{(\beta)^{-1}(t)},\e\nabla_\theta f(\theta^\e_{\e t}, V_t)-\e\nabla_{\xi} f(\xi_{(\beta)^{-1}(t)}, V_t)} \\
    &-2(\e-b_t) \ip{\theta^\e_{\e t}-\xi_{(\beta)^{-1}(t)},\nabla_{\xi} f(\xi_{(\beta)^{-1}(t)}, V_t)}\\
    \le& -2\e\kappa\norm{\theta^\e_{\e t}-\xi_{(\beta)^{-1}(t)}}^2+C_{f,\theta_0}\abs{b_t-\e} ,
\end{align*}
where the last step follows from the boundedness of $\theta^\e_{\e t}$, $\xi_{(\beta)^{-1}(t)}$, and $\nabla_{\xi} f(\xi_{(\beta)^{-1}(t)}, V_t)$. $\xi_{(\beta)^{-1}(t)}$ is bounded can be showed similarly to Lemma \ref{boundedtheta}. Multiplying $\exp({2\e\kappa t})$ on both sides, we obtain
\begin{align*}
    \mathrm{d}\Big(\exp({2\e\kappa t})\norm{\theta^\e_{\e t}-\xi_{(\beta)^{-1}(t)}}^2\Big)/\mathrm{d}t
    \le& C_{f,\theta_0}\abs{b_t-\e}\exp({2\e\kappa t}),
\end{align*}
which implies
\begin{align*}
    \norm{\theta^\e_{\e t}-\xi_{(\beta)^{-1}(t)}}^2\le& C_{f,\theta_0}\exp({-2\e\kappa t})\int_0^t\abs{b_s-\e}\exp({2\e\kappa s})\mathrm{d}s.
\end{align*}
Notice that $b_s$ is bounded and non-increasing, hence we have
\begin{align*}
\norm{\xi_{t}-\theta^\e_{\e \beta(t)}}^2\le& C_{f,\theta_0}\exp({-2\e\kappa \beta(t)})\int_0^{\beta(t)}\abs{b_s-\e}\exp({2\e\kappa s})\mathrm{d}s\\
=& C_{f,\theta_0}\exp({-2\e\kappa \beta(t)})\Big(\int_0^{\beta(\frac{t}{2})}+\int^{\beta(t)}_{\beta(\frac{t}{2})}\Big)\abs{b_s-\e}\exp({2\e\kappa s})\mathrm{d}s\\
\le& C_{f,\theta_0,\mu}\frac{\exp({-2\e\kappa (\beta(t)-\beta(\frac{t}{2}))})}{\e} \\ &\qquad+C_{f,\theta_0}\exp({-2\e\kappa \beta(t)})\int^{\beta(t)}_{\beta(\frac{t}{2})}\abs{b_s-\e}\exp({2\e\kappa s})\mathrm{d}s\\
    \le&  C_{f,\theta_0,\mu}\left[\frac{\exp({-2\e\kappa (\beta(t)-\beta(\frac{t}{2}))})}{\e}+\frac{1}{\e}\Big(\abs{\frac{1}{\mu(t)}-\e}+\abs{\frac{1}{\mu(\frac{t}{2})}-\e}\Big)\right].
\end{align*}
\end{proof} 
Now, we get to the main result of this section, where we show the convergence of $(\xi_t)_{t \geq 0}$ to the minimizer $\theta_*$ of $\Phi$. In the following, we denote  
\begin{align*}
     D_t(B|\theta_0, a)&:=\Prb_a(\xi_t\in B|\xi_0=\theta_0), \\ D_t(B|\theta_0,\pi)&:=\Prb_\pi(\xi_t\in B|\xi_0=\theta_0) \qquad \qquad (B\in \mathcal{B}(\R^K), \theta_0\in \R^K),
\end{align*}
 where $a \in S$ and $\pi$ is the invariant measure of $(V_t)_{t \geq 0}$, respectively.
\begin{theorem}
Under Assumption  \ref{as1.2}, given $\theta_0\in \R^K$ and $a\in S$, there exists $T>0$ such that for any $t>T$,
 \begin{align}\label{th2.1}
      \cW_d(D_t(\cdot|\theta_0,\pi),\delta(\cdot-\theta_*))\le C_{f,\theta_0,\mu}A(t),
 \end{align}
\begin{align}\label{th2.2}
     \cW_d(D_t(\cdot|\theta_0,a),\delta(\cdot-\theta_*))\le C_{f,\theta_0,\mu}A(t),
 \end{align}
where $$A(t):=\exp\left({\frac{-\kappa t}{8}}\right)+\left[\frac{\mu(t)-\mu(\frac{t}{2})}{\mu(t)}\right]^{1/2}+\rho\left(\frac{1}{\mu(\frac{t}{2})}\right)$$
and $\lim_{t\to\infty}A(t)=0.$
\end{theorem}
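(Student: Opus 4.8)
The plan is to bound the SGPD distribution by threading it, via the triangle inequality for $\cW_d$, through two intermediate measures: the law of a time-rescaled SGPC at time $\e\beta(t)$ and the SGPC stationary measure $\Pi^\e$. Concretely, writing $C^\e_{\e\beta(t)}(\cdot|\theta_0,\pi)$ for the law of $\theta^\e_{\e\beta(t)}$, I would start from
\begin{align*}
\cW_d(D_t(\cdot|\theta_0,\pi),\delta(\cdot-\theta_*))
&\le \cW_d(D_t(\cdot|\theta_0,\pi),C^\e_{\e\beta(t)}(\cdot|\theta_0,\pi)) \\
&\quad + \cW_d(C^\e_{\e\beta(t)}(\cdot|\theta_0,\pi),\Pi^\e(\cdot,S))
 + \cW_d(\Pi^\e(\cdot,S),\delta(\cdot-\theta_*)).
\end{align*}
The first term is controlled pathwise: since $(\xi_t)_{t\ge0}$ and $(\theta^\e_{\e\beta(t)})_{t\ge0}$ can be driven by the same $V$-path, $\cW_d$ of their laws is at most $\tE[1\wedge\norm{\xi_t-\theta^\e_{\e\beta(t)}}]$, which the almost-sure estimate of Proposition \ref{hardprop} bounds by the square root of its right-hand side. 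The second term is exactly the geometric-ergodicity estimate \eqref{distheta1} of Corollary \ref{cor:w_conv}, giving $C_{f,\theta_0}\exp(-\kappa\e\beta(t)/2)$, and the third is $\rho(\e)$ by Proposition \ref{dispixing}.

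The crucial step is then to make the free parameter $\e$ depend on $t$ by setting $\e=1/\mu(t/2)$ (legitimate once $t$ is large enough that $\e\le 1\wedge(\delta/2\kappa)$, so that Theorem \ref{dispitheta} and Corollary \ref{cor:w_conv} apply, and $\e\in(0,1)$ so that $\rho$ is defined). With this choice the three pieces collapse onto the three terms of $A(t)$. Because $\mu$ is non-decreasing, $\e\big(\beta(t)-\beta(\tfrac t2)\big)=\tfrac{1}{\mu(t/2)}\int_{t/2}^t\mu(s)\,\mathrm ds\ge \tfrac t2$, so the exponential contribution in Proposition \ref{hardprop}, divided by $\e$, is at most $\mu(\tfrac t2)\exp(-\kappa t)$; moreover the finite-difference terms simplify to $\abs{1/\mu(\tfrac t2)-\e}=0$ and $\tfrac1\e\abs{1/\mu(t)-\e}=\tfrac{\mu(t)-\mu(t/2)}{\mu(t)}$. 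Taking square roots (and $\sqrt{a+b}\le\sqrt a+\sqrt b$) bounds the first term by $\sqrt{\mu(\tfrac t2)}\exp(-\tfrac{\kappa t}{2})+\big[\tfrac{\mu(t)-\mu(t/2)}{\mu(t)}\big]^{1/2}$; the second term is $\le C\exp(-\tfrac{\kappa t}{4})$ since $\e\beta(t)\ge\tfrac t2$; and the third is $\rho(1/\mu(\tfrac t2))$. As $\mu$ grows slower than any polynomial (a consequence of Assumption \ref{asmu}), the prefactor $\sqrt{\mu(\tfrac t2)}$ is absorbed so that both exponential pieces are $\le\exp(-\tfrac{\kappa t}{8})$ for $t$ large, yielding the asserted bound $C_{f,\theta_0,\mu}A(t)$.

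It then remains to show $\lim_{t\to\infty}A(t)=0$. The $\exp(-\kappa t/8)$ term is immediate, and $\rho(1/\mu(\tfrac t2))\to0$ since $\mu(\tfrac t2)\to\infty$ and $\lim_{\e\to0}\rho(\e)=0$. For the middle term I would write $\big[\tfrac{\mu(t)-\mu(t/2)}{\mu(t)}\big]^{1/2}=\big[1-\mu(\tfrac t2)/\mu(t)\big]^{1/2}$ and estimate $\log\frac{\mu(t)}{\mu(t/2)}=\int_{t/2}^t\frac{\mu'(s)s}{\mu(s)}\frac{\mathrm ds}{s}\le(\log2)\sup_{s\ge t/2}\frac{\mu'(s)s}{\mu(s)}\to0$ by Assumption \ref{asmu}, so $\mu(\tfrac t2)/\mu(t)\to1$. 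The estimate \eqref{th2.2} for a deterministic start $a\in S$ follows by the identical argument, using \eqref{distheta} in place of \eqref{distheta1} for the middle term, since the pathwise bound of Proposition \ref{hardprop} is insensitive to the initial law of $V$.

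The main obstacle is that the three error contributions pull $\e$ in opposite directions: the SGPD-to-SGPC comparison wants $\e\approx 1/\mu$ to match the instantaneous rate, the ergodicity term wants $\e\beta(t)$ large (hence $\e$ not too small), and the stationary-approximation term wants $\e\to0$. Threading this needle with the single choice $\e=1/\mu(t/2)$ is the heart of the argument, and verifying that it works rests on two points that must be checked carefully: that the constants $C_f$, $C_{f,m}$ in Theorem \ref{dispitheta} and Corollary \ref{cor:w_conv} are uniform in $\e$ (so that a $t$-dependent $\e$ may be substituted), and that the slow-growth condition of Assumption \ref{asmu} is strong enough both to prevent the $1/\e=\mu(t/2)$ prefactors from overwhelming the exponential decay and to force $\mu(t/2)/\mu(t)\to1$.
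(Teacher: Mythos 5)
Your proposal is correct and follows essentially the same route as the paper's proof: the same three-term triangle inequality through $C^\e_{\e\beta(t)}$ and $\Pi^\e(\cdot,S)$, the same coupling of the three error sources via the single choice $\e=1/\mu(t/2)$, and the same absorption of the $\mu(t/2)$ prefactors using the subpolynomial growth from Assumption \ref{asmu}. The only (harmless) deviations are cosmetic: the paper shows $\frac{\mu(t)-\mu(t/2)}{\mu(t)}\to0$ by the mean value theorem rather than your logarithmic integral estimate, and for \eqref{th2.2} it inserts the extra term $\cW_d(C^\e_{\e\beta(t)}(\cdot|\theta_0,a),C^\e_{\e\beta(t)}(\cdot|\theta_0,\pi))$ via Corollary \ref{cor3} instead of invoking \eqref{distheta} directly as you do.
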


\begin{proof}
To prove (\ref{th2.1}), by the triangle inequality,
\begin{align*}
    &\cW_d(D_t(\cdot|\theta_0,\pi),\delta(\cdot-\theta_*))\\
    \le& \cW_d(D_t(\cdot|\theta_0,\pi),C^\e_{\e\beta(t)}(\cdot|\theta_0,\pi))+\cW_d(C^\e_{\e\beta(t)}(\cdot|\theta_0,\pi),\Pi^\e(\cdot,S))
    +\cW_d(\Pi^\e(\cdot,S),\delta(\cdot-\theta_*)).
\end{align*}
For the last two terms, by (\ref{distheta1}) and Proposition \ref{dispixing}, 
\begin{align*}
    \cW_d(C^\e_{\e\beta(t)}(\cdot|\theta_0,\pi),\Pi^\e(\cdot,S))+\cW_d(\Pi^\e(\cdot,S),\delta(\cdot-\theta_*))\le C_{f,m}\exp({{-\kappa \e\beta(t)}/{2}})+\rho(\e).
\end{align*}
For the first term, by Proposition \ref{hardprop}, 
\begin{align*}
    &\cW_d(D_t(\cdot|\theta_0,\pi),C^\e_{\e\beta(t)}(\cdot|\theta_0,\pi)) \\&\qquad \le C_{f,\theta_0,\mu}\left[\frac{\exp({-2\e\kappa (\beta(t)-\beta(\frac{t}{2}))})}{\e}+\frac{1}{\e}\Big(\abs{\frac{1}{\mu(t)}-\e}+\abs{\frac{1}{\mu(\frac{t}{2})}-\e}\Big)\right]^{1/2}.
\end{align*}

Since $\lim_{t\to\infty}\mu(t)=\infty$, there exists $T>0$ such that $1/\mu(\frac{T}{2}) < \frac{\delta}{2\kappa}$.
Let $\e=1/\mu(\frac{t}{2})$, $t>T$, we have
 \begin{align*}
     \exp\left({\frac{-\kappa \e\beta(t)}{2}}\right)=\exp\left({\frac{-\kappa \int_0^t\mu(s)\mathrm{d}s}{2\mu(\frac{t}{2})}}\right)\le \exp\left({\frac{-\kappa t}{8}}\right)
 \end{align*}
 and
 \begin{align*}
     \frac{\exp\left({-2\e\kappa (\beta(t)-\beta(\frac{t}{2}))}\right)}{\e}&=\mu\left(\frac{t}{2}\right)\exp\left({\frac{-\kappa \int_{\frac{t}{2}}^t\mu(s)\mathrm{d}s}{\mu(\frac{t}{2})}}\right) \\ &\le \mu\left(\frac{t}{2}\right)\exp\left({\frac{-\kappa t}{2}}\right)\le C \exp\left({\frac{-\kappa t}{8}}\right).
 \end{align*}
Therefore,
 \begin{align*}
    \cW_d(D_t(\cdot|\theta_0,\pi),\delta(\cdot-\theta_*))\le C_{f,\theta_0,\mu}\Big[\exp\left({\frac{-\kappa t}{8}}\right)+\left(\frac{\mu(t)-\mu(\frac{t}{2})}{\mu(t)}\right)^{1/2}+\rho\left(\frac{1}{\mu\left(\frac{t}{2}\right)}\right)\Big]
\end{align*}
From Assumption \ref{asmu}, by the mean value theorem, 
\begin{align*}
    \frac{\mu(t)-\mu(\frac{t}{2})}{\mu(t)}=\frac{t\mu'(\tau_t)}{2\mu(t)}=\frac{\tau_t\mu'(\tau_t)}{\mu(\tau_t)}\frac{t}{2\tau_t}\frac{\mu(\tau_t)}{\mu(t)}\le \frac{\tau_t\mu'(\tau_t)}{\mu(\tau_t)}\to 0
\end{align*}
where $\tau_t\in[\frac{t}{2},t].$ Thus (\ref{th2.1}) is obtained by taking 
$$A(t):=\exp\left({\frac{-\kappa t}{8}}\right)+\left[\frac{\mu(t)-\mu(\frac{t}{2})}{\mu(t)}\right]^{1/2}+\rho\left(\frac{1}{\mu(\frac{t}{2})}\right).$$
To prove (\ref{th2.2}), by the triangle inequality,
\begin{align*}
\cW_d(D_t(\cdot|\theta_0,a),\delta(\cdot-\theta_*))&\le \cW_d(D_t(\cdot|\theta_0,a),C^\e_{\e\beta(t)}(\cdot|\theta_0,a)) \\ &\ \ \ \ \ +\cW_d(C^\e_{\e\beta(t)}(\cdot|\theta_0,a),C^\e_{\e\beta(t)}(\cdot|\theta_0,\pi))\\ &\ \ \ \ \ +\cW_d(C^\e_{\e\beta(t)}(\cdot|\theta_0,\pi),\Pi^\e(\cdot,S)) \\ &\ \ \ \ \
    +\cW_d(\Pi^\e(\cdot,S),\delta(\cdot-\theta_*)).    
\end{align*}
By Corollary \ref{cor3}, we have
$$\cW_d(C^\e_{\e\beta(t)}(\cdot|\theta_0,a),C^\e_{\e\beta(t)}(\cdot|\theta_0,\pi))\le C_{f,m}\exp\left({-\frac{\kappa \varepsilon \beta(t)}{2} }\right).$$
Notice that $C_{f,m}\exp({-\frac{\kappa t}{4} })\le C_{f,m}A(t)$
when $\varepsilon=1/\mu(\frac{t}{2}).$ 
Similar to the proof of (\ref{th2.1}), we have
\begin{align*}
    \cW_d(D_t(\cdot|\theta_0,a),C^\e_{\e\beta(t)}(\cdot|\theta_0,a))+\cW_d(C^\e_{\e\beta(t)}(\cdot|\theta_0,\pi),\Pi^\e(\cdot,S))
    +&\cW_d(\Pi^\e(\cdot,S),\delta(\cdot-\theta_*))\\
    & \qquad \qquad \qquad \le C_{f,\theta_0,\mu}A(t),
\end{align*}
which completes the proof.
\end{proof}

Thus, we have shown that the distribution of $(\xi_t)_{t \geq 0}$ converges in Wasserstein distance to the Dirac measure concentrated in the minimizer $\theta_*$ of $\Phi$. This result is independent of whether we initialize the index process $(V_{\beta(t)})_{t \geq 0}$ with its stationary measure or with any deterministic value. 
\section{From continuous dynamics to practical optimization.} \label{Sec_PracticalOptimisation}
So far, we have discussed the stochastic gradient process as a continuous-time coupling of an ODE and a stochastic process. In order to apply the stochastic gradient process in practice, we need to discretize ODE and stochastic process with appropriate time-stepping schemes. That means, for a given increasing sequence $(t(k))_{k=0}^\infty$, with $t(0) := 0$ and $\lim_{k \rightarrow \infty} t(k) = \infty,$ we seek discrete-time stochastic processes $(\widehat{V}_k, \widehat{\theta}_k)_{k=0}^\infty$, such that $(\widehat{V}_k, \widehat{\theta}_k)_{k=0}^\infty \approx (V_{t(k)}, \theta_{t(k)}^\e)_{k=0}^\infty$ and analogous discretizations for $(V_{\beta(t)}, \xi_t)_{t \geq 0}$.

In the following, we propose and discuss time stepping strategies and the algorithms arising from them. We discuss the index process and gradient flow separately, which we consider sufficient as the coupling is only one-sided.

\subsection{Discretization of the index process}
We have defined the stochastic gradient process for a huge range of potential index processes $(V_t)_{t \geq 0}$. The discretization of such processes has been the topic of several works, see, e.g., \citet{Gillespie1977,lord_powell_shardlow_2014}. In the following, we focus on one case  and refer to those previous works for other settings and details. 

Indeed, we study the setting $S := [-1, 1]$ and $\pi := \mathrm{Unif}[-1, 1]$ and discuss the discretization of $(V_t)_{t \geq 0}$ as a Markov pure jump process and as a reflected Brownian motion. 

\subsubsection*{Markov pure jump process.} A suitable Markov pure jump process is a piecewise constant c\`adl\`ag process $(V_t)_{t \geq 0}$ with Markov transition kernel
\begin{align*}
    \mathbb{P}_x(V_t \in \cdot) = \exp(-\lambda t)\delta(\cdot - x) + (1-\exp(-\lambda t))\mathrm{Unif}[-1,1] \qquad \qquad (t \geq 0),
\end{align*}
where $\lambda > 0$ is a rate parameter. We can now discretize the process $(V_t)_{t \geq 0}$ just through sampling from this Markov kernel for our discrete time points. We describe this in Algorithm~\ref{alg:MPJ}.

\begin{algorithm}[hptb]\caption{Discretized Markov pure jump process}
\begin{algorithmic}[1]
  \STATE initialize $\widehat{V}_0$, $\lambda > 0$, and a sequence of points $(t(k))_{k=0}^\infty$
  \FOR{$k = 1, 2,\ldots$}
  \STATE sample $U \sim \mathrm{Unif}[0,1]$
  \IF{$U \leq \exp(-\lambda (t(k)-t(k-1))$}
   \STATE $\widehat{V}_k \leftarrow \widehat{V}_{k-1}$ \COMMENT{process stays at its current position}
   \ELSE 
   \STATE sample $\widehat{V}_k \sim \mathrm{Unif}[-1,1]$ \COMMENT{process jumps to a new position}
  \ENDIF
  \ENDFOR
  \RETURN $(\widehat{V}_k)_{k=0}^\infty$
\end{algorithmic}
\label{alg:MPJ}
\end{algorithm}

\subsubsection*{Reflected Brownian motion.} We have defined the reflected Brownian Motion on a non-empty compact interval through the Skorohod problem in Subsection~\ref{Subsec_Ex1_Levy_refle}. 

Let $\sigma >0$ and $(W_t)_{t \geq 0}$ be a standard Brownian motion. Probably the easiest way to sample a reflected Brownian motion is by discretizing the rescaled Brownian motion $(\sigma \cdot W_t)_{t \geq 0}$ using the Euler--Maruyama scheme and projecting back to $S$, whenever the sequence leaves $S$. This scheme has been studied by \citet{Petterson}. We describe the full scheme in Algorithm~\ref{alg:RBM}.

\citet{Petterson} shows that this scheme converges at a rather slow rate. As we usually assume that the domain on which we move is rather low-dimensional and the sampling is rather cheap, we can afford  small discretization stepsizes $t(k)-t(k-1)$, for $k \in \mathbb{N}$. Thus, the slow rate of convergence is manageable. Other schemes for the discretization of reflected Brownian motions have been discussed by, e.g., \citet{blanchet_murthy_2018, LIU1995}.

\begin{algorithm}[hptb]\caption{Discretized Reflected Brownian motion on $S$}
\begin{algorithmic}[1]
  \STATE initialize $\widehat{V}_0$, $\sigma > 0$,  a sequence of points $(t(k))_{k=0}^\infty$, and the projection operator  $\mathrm{proj}_S$ mapping onto $S$ 
  \FOR{$k = 1, 2,\ldots$}
  \STATE $V' \leftarrow V_{k-1} + \sigma \sqrt{t(k)-t(k-1)} \psi $, \quad $\psi \sim \mathrm{N}(0,1^2)$ \COMMENT{Euler-Maruyama update}
  \IF{$V' \not\in S$} 
   \STATE $\widehat{V}_k \leftarrow \mathrm{proj}_S V'$ \COMMENT{project back}
   \ELSE
  \STATE $\widehat{V}_k \leftarrow V'$ \COMMENT{accept Euler--Maruyama update}
  \ENDIF
  \ENDFOR
  \RETURN $(\widehat{V}_k)_{k=0}^\infty$
\end{algorithmic}
\label{alg:RBM}
\end{algorithm}

\subsection{Discretization of the gradient flow}
We now briefly discuss the discretization of the gradient flow in the stochastic gradient process. Based on these ideas, we will conduct numerical experiments in Section~\ref{Sec_NumExp}.

\subsubsection*{Stochastic gradient descent} In stochastic gradient descent, the gradient flow is discretized with a forward Euler method. This method leads to an accurate discretization of the respective gradient flow if the stepsize/learning rates are sufficiently small. In the presence of rather large stepsizes and stiff vector fields, however, the forward Euler method may be  inaccurate and unstable, see, e.g., \citet{Quarteroni2007}.

\subsubsection*{Stability} Several ideas have been proposed to mitigate this problem. The stochastic proximal point method, for instance, uses the backward Euler method to discretize the gradient flow; see \citet{Bianchi2015}.
Unfortunately, such implicit ODE integrators require us to invert a possibly highly non-linear and complex vector field. In convex stochastic optimization this inversion can be replaced by evaluating a proximal operator. For strongly convex optimization, on the other hand, \citet{eftekhari} proposes stable explicit methods. 

\subsubsection*{Efficient optimizers} Plenty of highly efficient methods for stochastic optimization methods are nowadays available, especially in machine learning. Those have often been proposed without necessarily thinking of the stable and accurate discretization of a gradient flow: such are adaptive methods \citet{Adam}, variance reduced methods (e.g., \citet{Defazio2014}), or momentum methods (e.g., \citet{Kovachki21} for an overview),  which have been shown in multiple works to be highly efficient; partially also in non-convex optimization.  We could understand those methods also as certain discretizations of the gradient flow. Thus, we may also consider the combination of a feasible index process $(V_t)_{t \geq 0}$ with the discrete dynamical system in, e.g., the Adam method (\citet{Adam}).


\section{Applications} \label{Sec_NumExp}
We now study two fields of application of the stochastic gradient process for continous data. In the first example, we consider regularized polynomial regression with noisy functional data. In this case, we can easily show that the necessary assumptions for our analysis hold. Thus, we use it to illustrate our analytical results and especially to learn about the implicit regularization that is put in place due to different index proccesses.

In the second example, we study so-called physics-informed neural networks. In these continuous-data machine learning problems, a deep neural network is used to approximate the solution of a partial differential equation. The associated optimization problem is usually non-convex. Our analysis does not hold in this case: We study it to get more insights in the behavior of the stochastic gradient process in state-of-the-art deep learning problems.

\begin{figure}
    \centering
    \includegraphics[width = 0.8\textwidth]{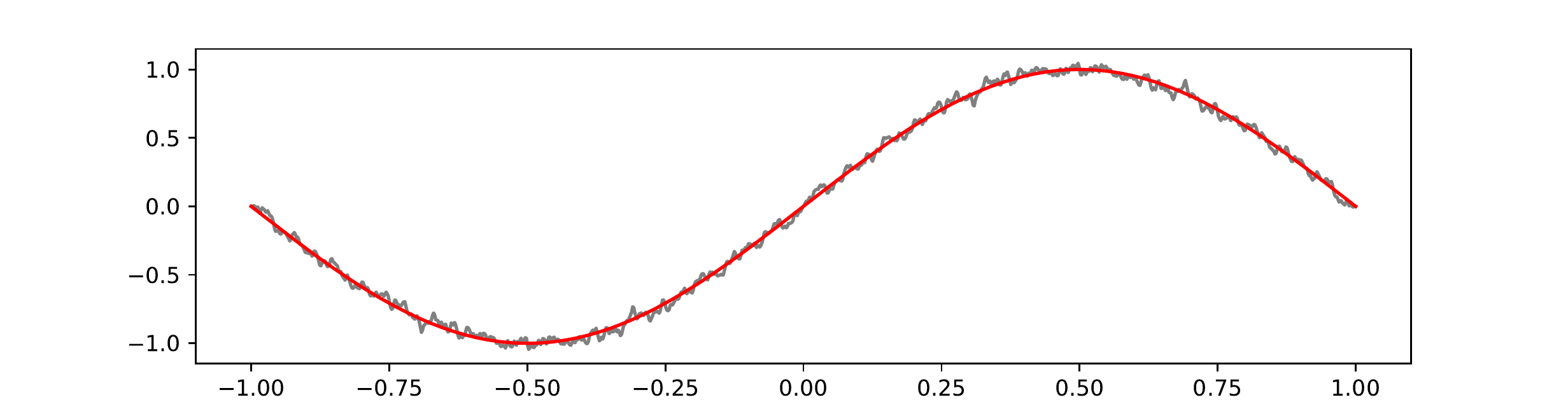}
    \caption{True function $\Theta$ (red) and noisy observation $g$ (grey) in the polynomial regression example.}
    \label{fig:truth_polyn}
\end{figure}
\subsection{Polynomial regression with functional data}
We begin with a simple polynomial regression problem with noisy functional data. We observe the function $g:[-1,1] \rightarrow \mathbb{R}$ which is given through
\begin{equation*}
    g(y) := \Theta(y) + \Xi(y) \qquad (y \in [-1,1]),
\end{equation*}
where $\Theta: [-1, 1] \rightarrow \mathbb{R}$ is a smooth function and $\Xi$ is a Gaussian process with highly oscillating, continuous realizations. We aim at identifying the unknown function $\Theta$ subject to the observational noise $\Xi$. Here, we represent the function $\Theta$ on a basis consisting of a finite number of Legendre polynomials  on $[-1,1]$. We denote this basis of Legendre polynomials by $(\ell_k)_{k=1}^K$.
To estimate the prefactors of the polynomials, we minimize the potential
\begin{equation} \label{eq:polyn_full_opt}
    \Phi(\theta) := \frac{1}{2} \int_{[-1,1]} \left(g(y)-\sum_{k = 1}^K \theta_k\ell_k(y)\right)^2 \mathrm{d}y + \frac{\alpha}{2} \| \theta \|^2_2 \qquad \qquad (\theta \in X),
\end{equation}
where $\alpha > 0$ is a regularization parameter. This can be understood as a maximum-a-posteriori estimation of the unknown $\theta$ with Gaussian prior under the (misspecified) assumption that the data is perturbed with Gaussian white noise. 
We employ the following associated subsampled potentials:
\begin{equation}
    f(\theta, y) := \frac{1}{2} \left(g(y)-\sum_{k = 1}^K \theta_k\ell_k(y)\right)^2  + \frac{\alpha}{2} \| \theta \|^2_2 \qquad \qquad (\theta \in X, y \in [-1, 1]).
\end{equation}
Those subsampled potentials satisfy the strong convexity assumption, i.e., Assumption~\ref{as1.2}.

\begin{figure}
    \centering
    
     \includegraphics[width=0.29\textwidth]{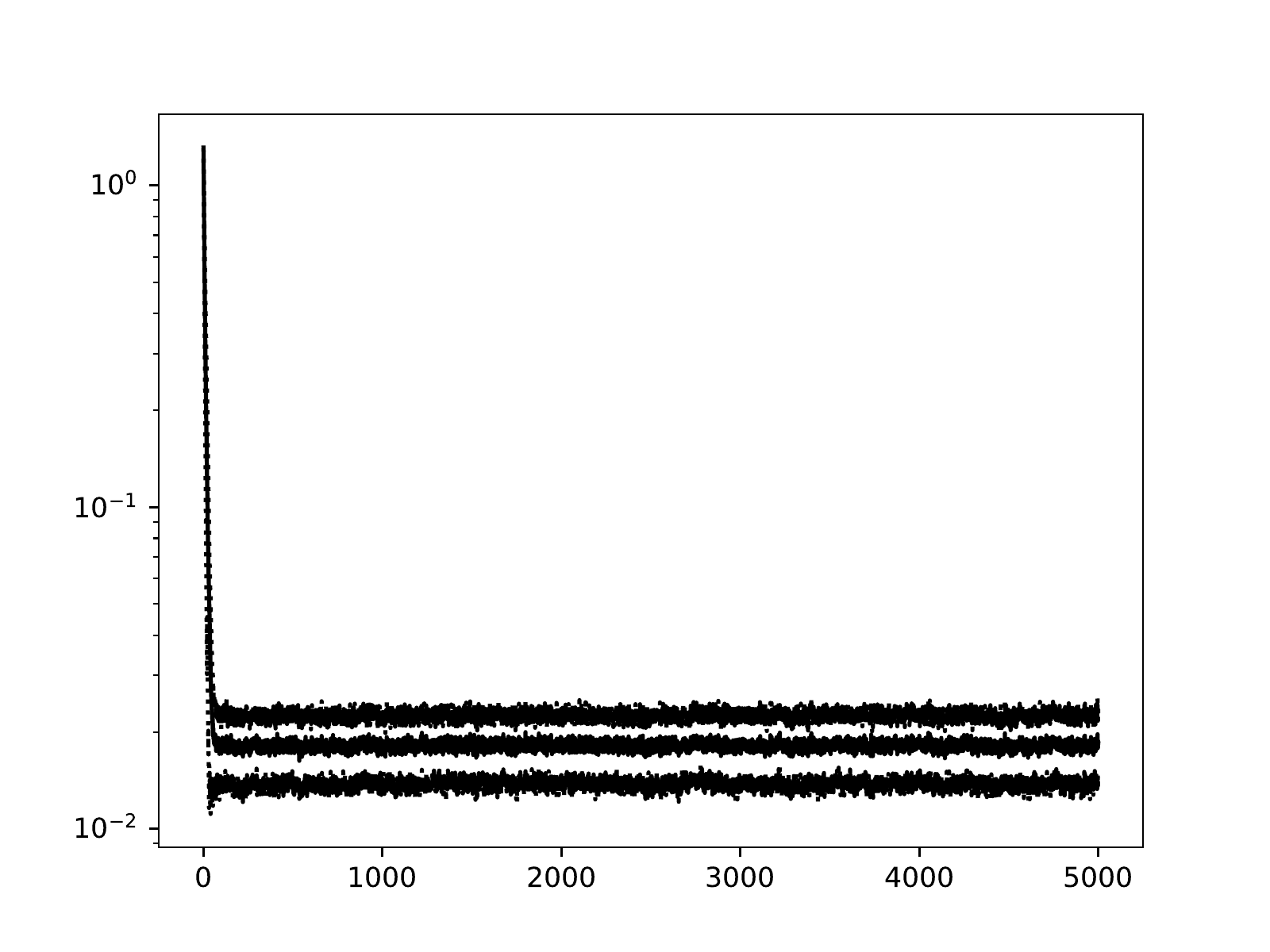} 
    \includegraphics[width=0.29\textwidth]{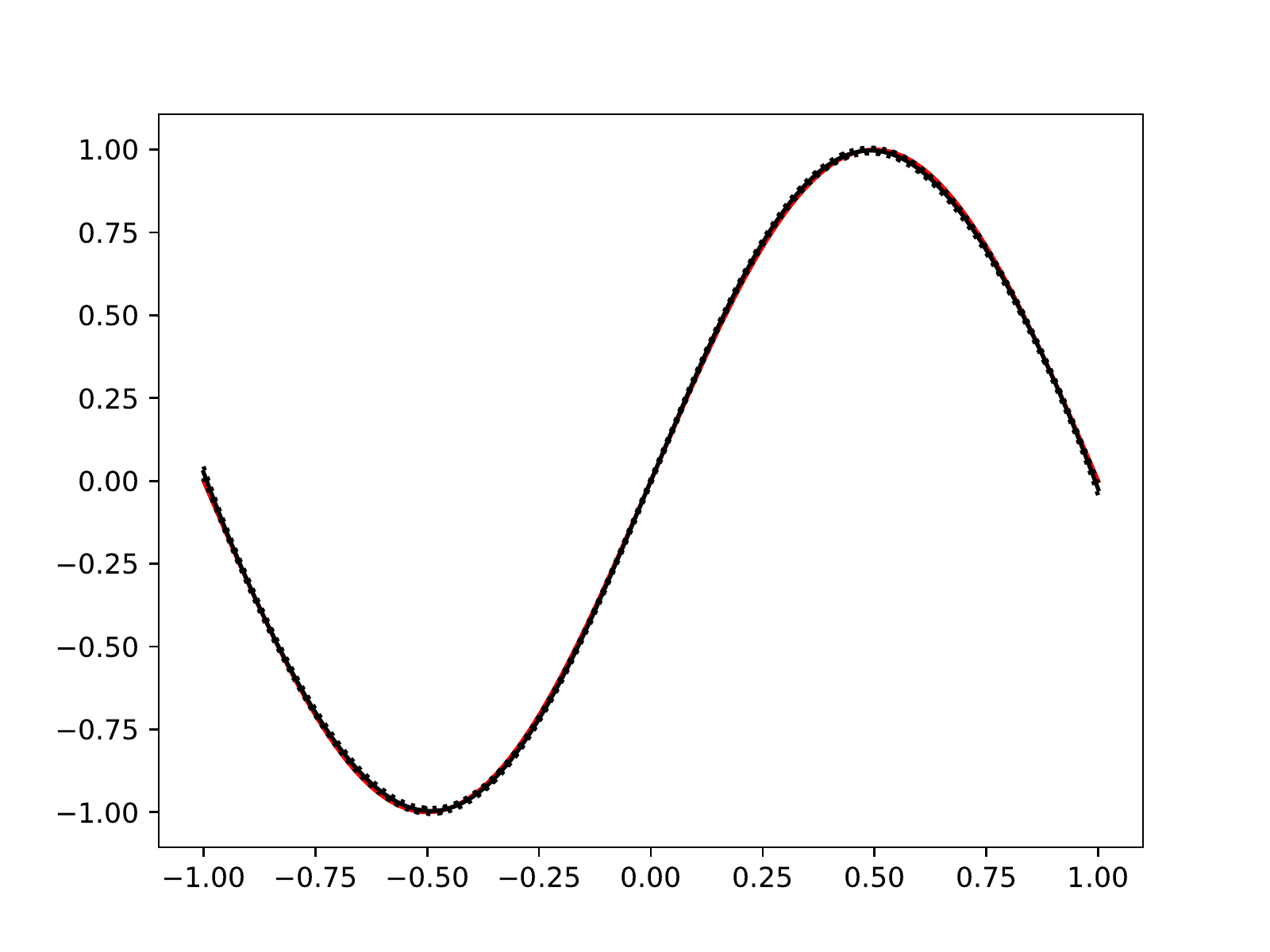}
    \includegraphics[width = 0.26\textwidth]{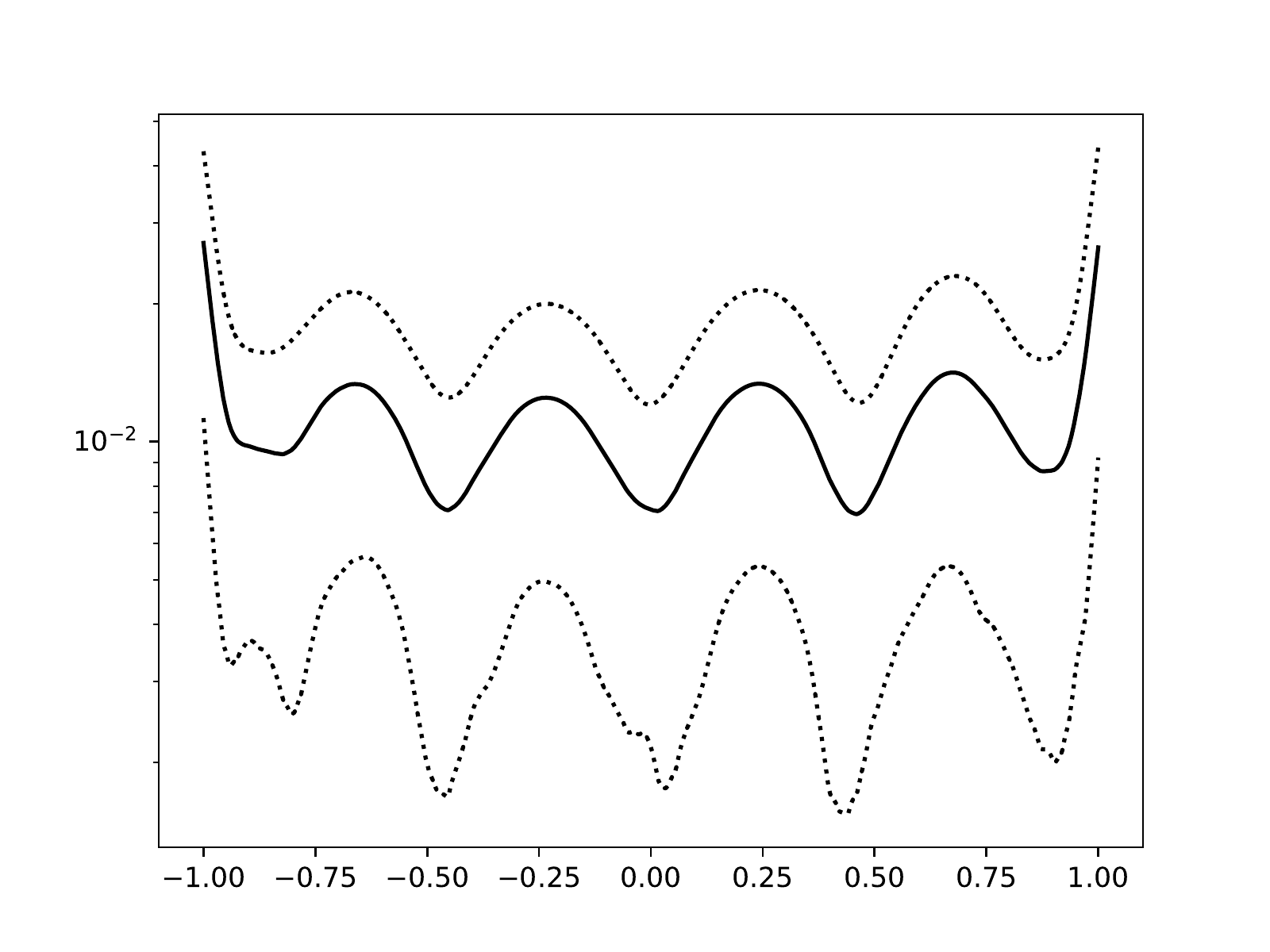}
    
    \includegraphics[width=0.29\textwidth]{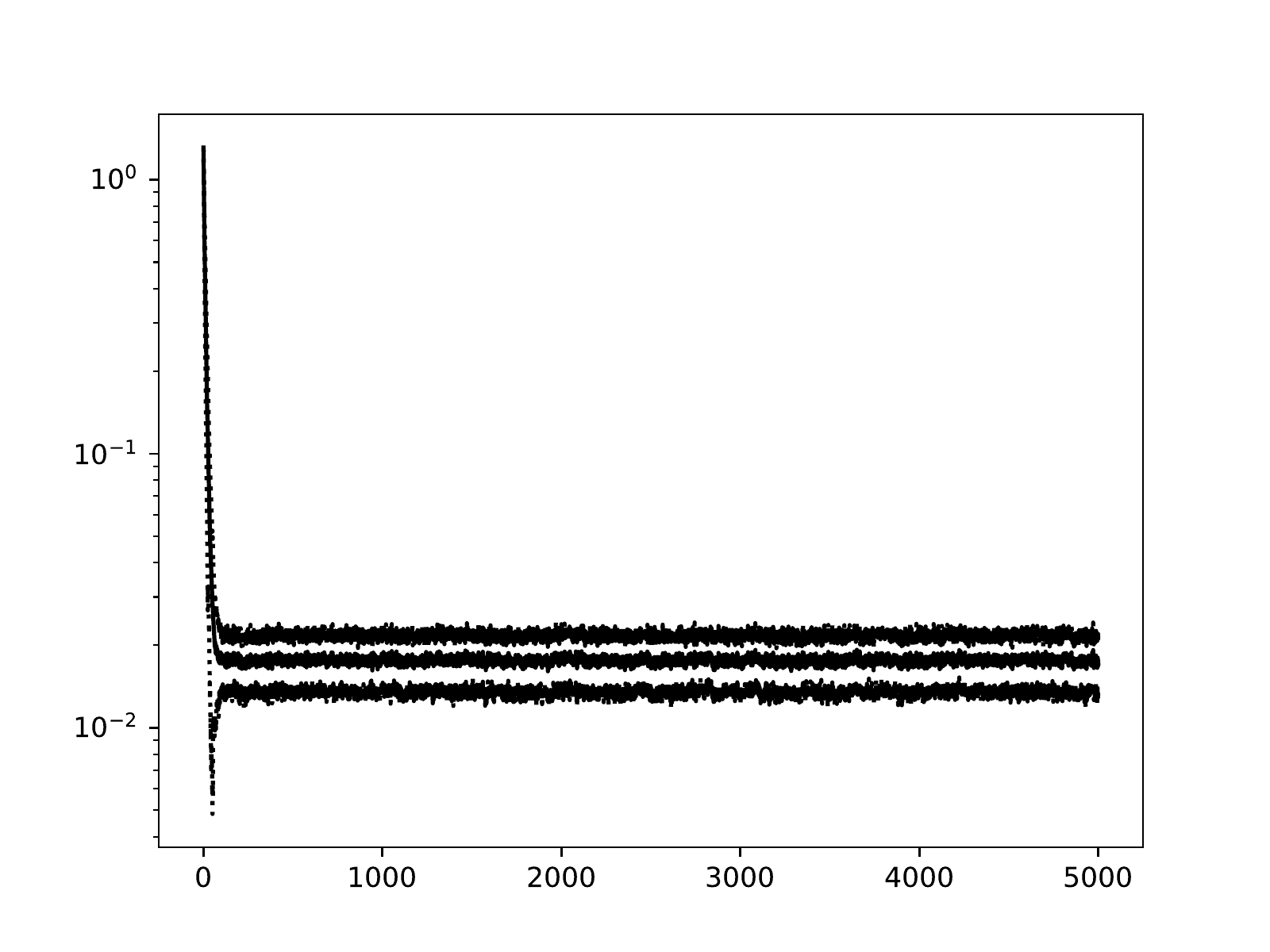} 
    \includegraphics[width=0.29\textwidth]{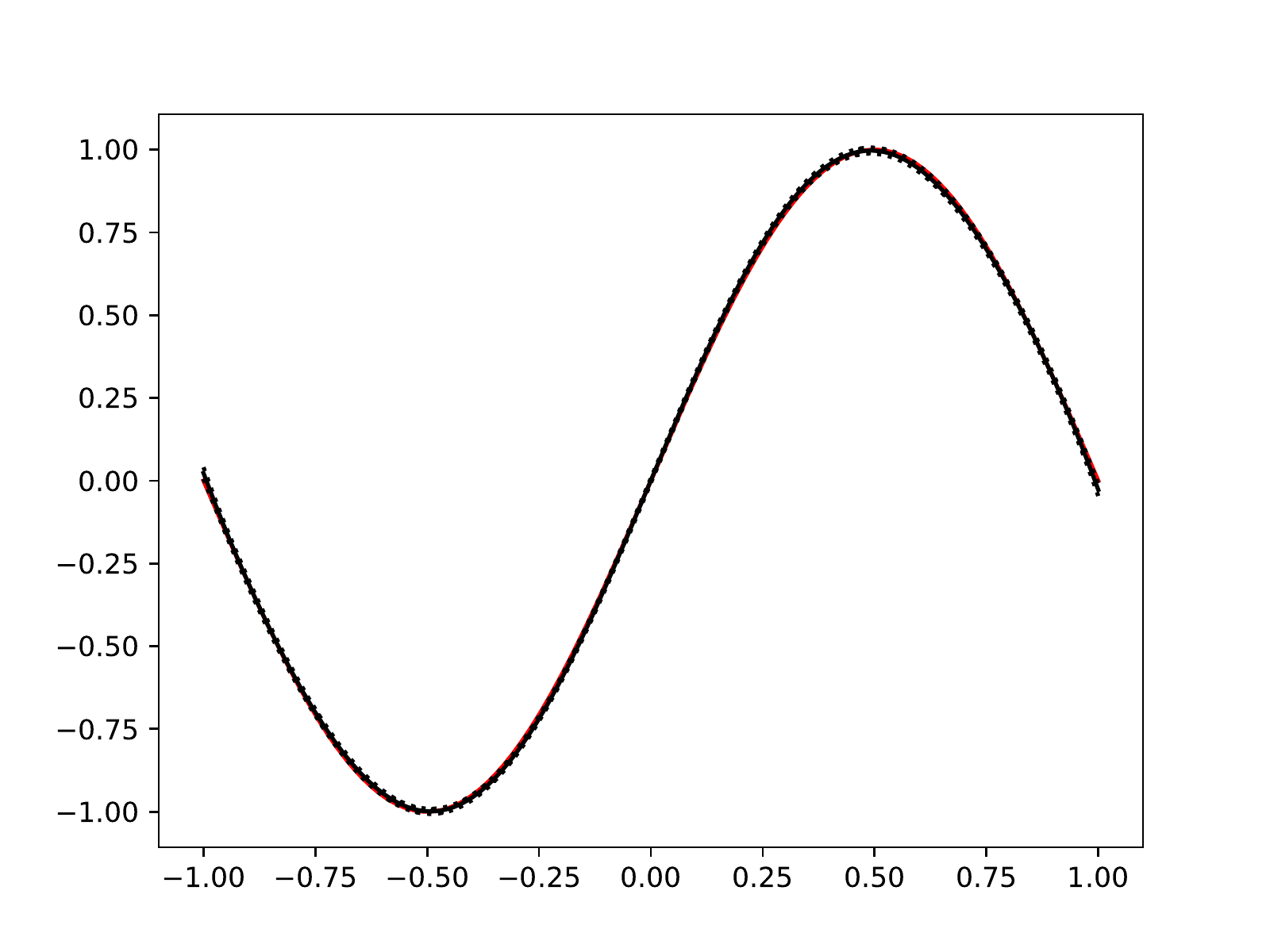}
    \includegraphics[width = 0.26\textwidth]{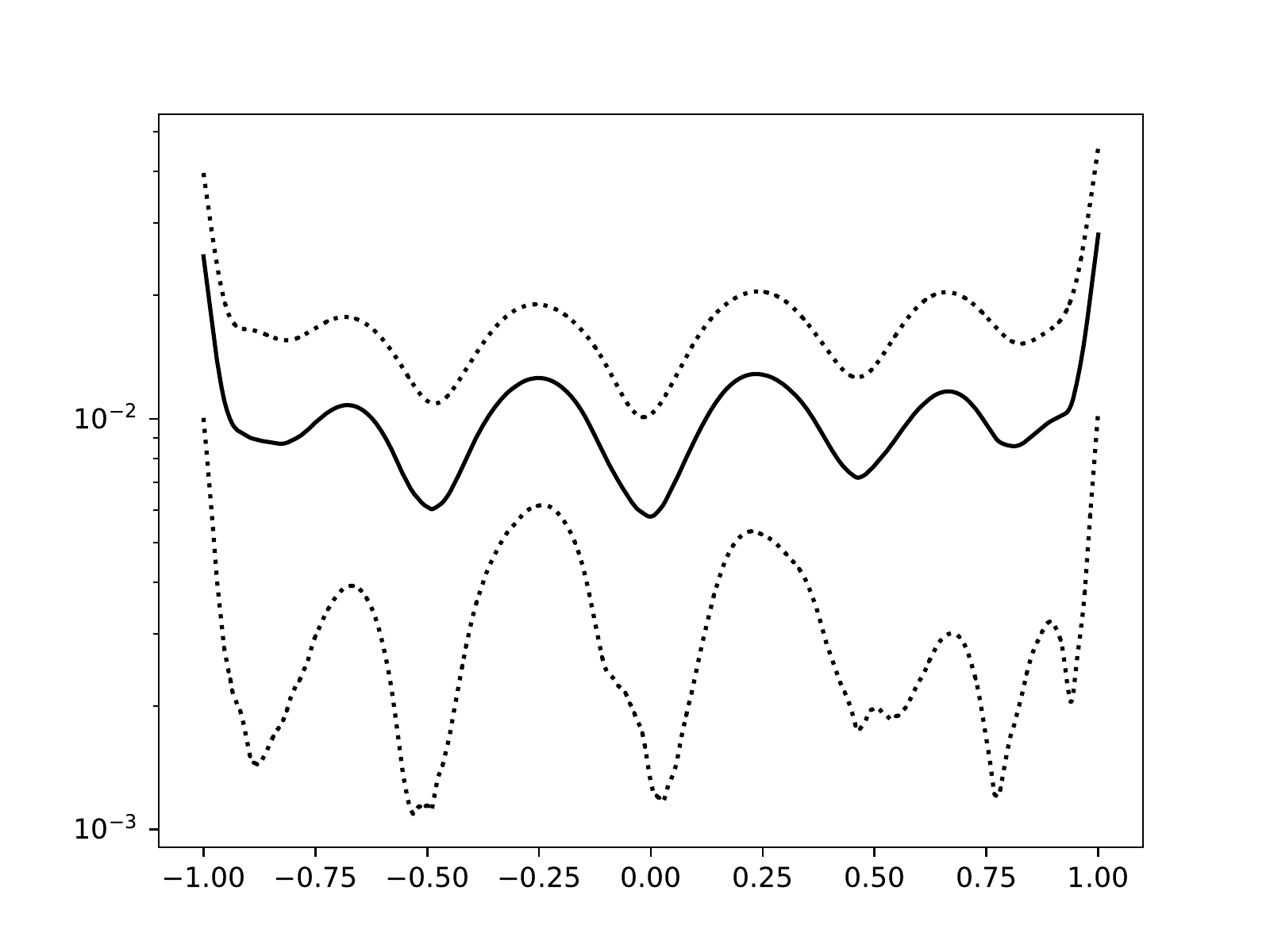}
  \caption{Estimation results of the polynomial regression problem using  stochastic gradient  descent with constant learning rate $\eta = 0.1$ (top row) and a version of stochastic gradient descent that uses the implicit midpoint rule (bottom row).   The figures depict the mean over 100 runs (black solid line), mean $\pm$ standard deviation (black dotted line). Left column: trajectory of the rel\_err over time; centre column: comparison of $\Theta$ (solid red line) and estimated polynomial; right column: estimation error in terms of abs\_err.}
    \label{fig:sgd_polyn}
\end{figure}
\subsubsection*{Setup}
In particular, we have produced artificial data $g$, by setting $\Theta := \sin(\pi \cdot)$ and choosing 
$$
\Xi(x) = \sum_{j=1}^{200}\frac{10}{1000 + (\pi j)^{3/2}}\sin(2\pi j(x-0.5))\Xi_j \qquad \qquad (x \in [-1,1])
$$
and i.i.d.\ random variables $\Xi_1,\ldots, \Xi_{200} \sim \mathrm{N}(0,1^2)$. Note that $\Xi$ is a Gaussian random field given through the truncated Karhunen-Lo\`eve expansion of a covariance operator that is related to the Mat\'ern family, see, e.g., \citet{Lindgren2011}.

We show $\Theta$ and $g$ in Figure \ref{fig:truth_polyn}. For our estimation, we set $\alpha := 10^{-4}$ and use the $K=9$ Legendre polynomials with degrees $0,\ldots,8$. We employ the stochastic gradient process with constant learning rate, using either a reflected diffusion process or a pure Markov jump process for the index process $(V_t)_{t \geq 0}$. We discretize the gradient flow using the implicit midpoint rule: an ODE $z' = q(z), z(0) = z_0$ is then discretized with stepsize $h > 0$ by successively solving the implicit formula $$z_k = z_{k-1} + \frac{h}{2}q(z_k) + \frac{h}{2}q(z_{k-1}) \qquad (k \in \mathbb{N}).$$ In our experiments, we choose $h = 0.1$. We use Algorithms~\ref{alg:MPJ} and \ref{alg:RBM} to discretize the index processes with constant stepsize $t(\cdot)-t(\cdot -1) = 10^{-2}.$
We perform $J := 100$ repeated runs for each of the considered settings for $N := 5\cdot 10^4$ time steps and thus, obtain a family of trajectories $(\theta^{(j,n)})_{n = 1,\ldots,N, j=1,\ldots,J}$. In each case, we choose the initial values $V(0) := 0$ and the $\theta^{(j,0)} := (0.5,\ldots, 0.5).$ 

We study the distance of the estimated polynomial to the true function $\Theta$ 
by the relative error: $$\mathrm{rel\_err}_{n,j} := \frac{\sum_{l = 1}^{L}\left(\Theta(x_l)- \sum_{k = 1}^K \theta_k^{(j,n)}\ell_k(x_l) \right)^2}{\sum_{l' = 1}^{L}\Theta(x_{l'})^2},$$ for trajectory $j \in \{1,\ldots, J\}$ and time step $n \in \{1,\ldots,N\}$. Here $(x_l)_{l=1}^{L}$ are $L := 10^3$ equispaced points in $[-1,1]$. Moreover, we compare the estimated polynomial to the  true function $\Theta$ by 
$$\mathrm{abs\_err}_{j,x} := \left\lvert\Theta(x)- \sum_{k = 1}^K \theta_k^{(j,N)}\ell_k(x) \right\rvert $$ for trajectory $j \in \{1,\ldots, J\}$ at position $x \in [-1,1].$ In each case, we study mean and standard deviation (StD) computed over the $100$ runs.

\begin{figure}[htb]
    \centering
    \includegraphics[width = 0.26\textwidth]{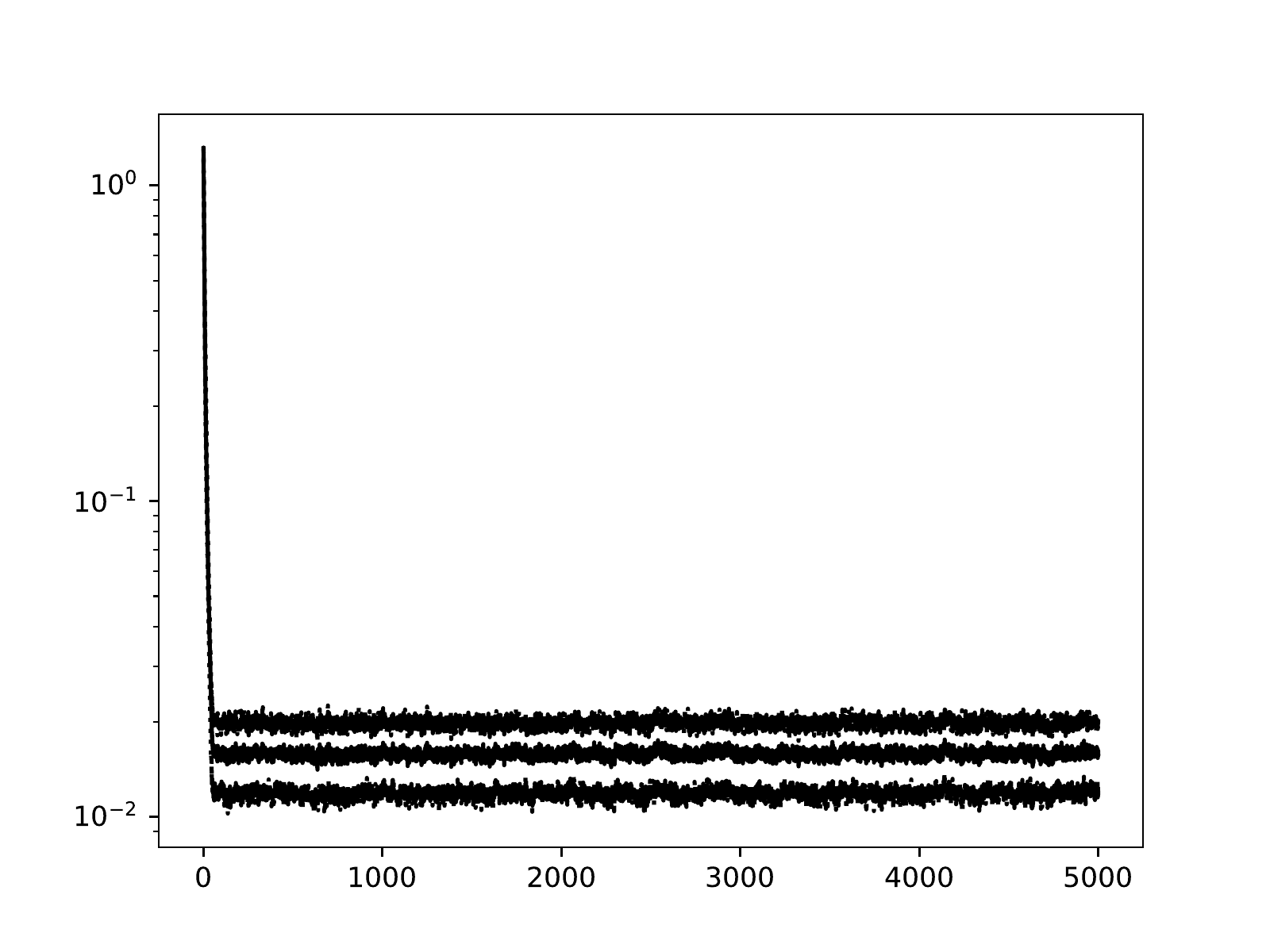}
    \includegraphics[width = 0.26\textwidth]{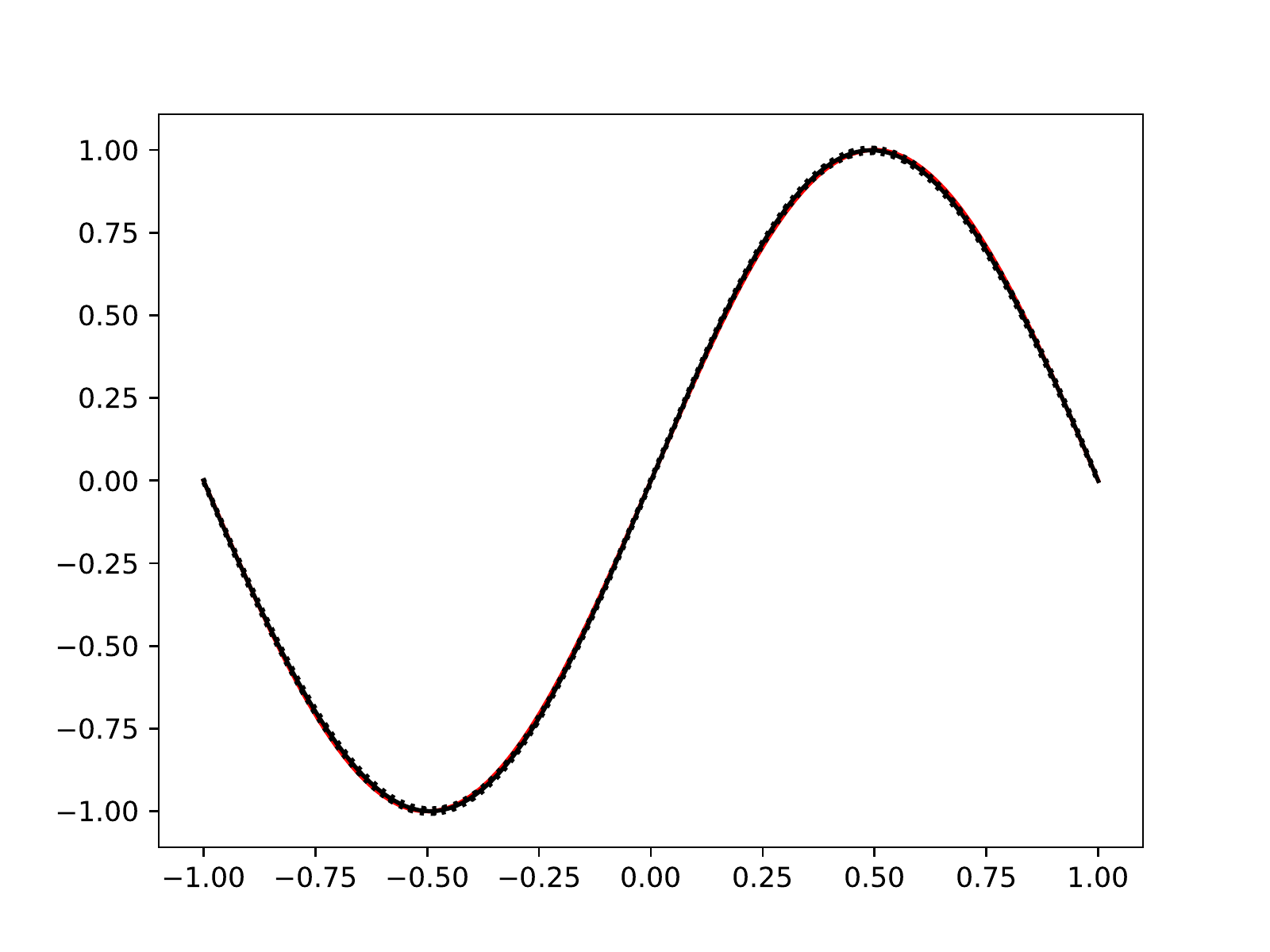}
    \includegraphics[width = 0.26\textwidth]{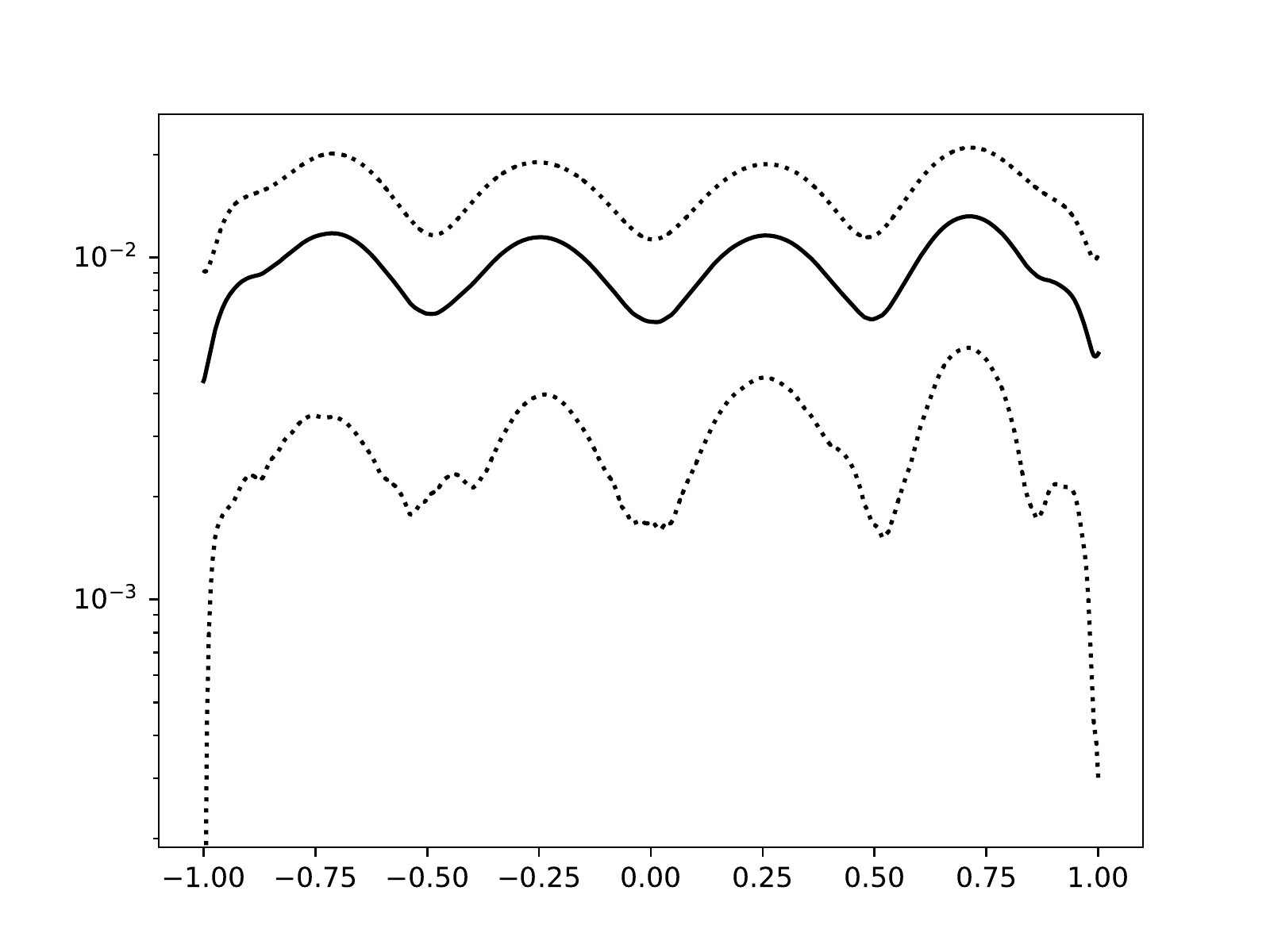}
    
      \includegraphics[width = 0.26\textwidth]{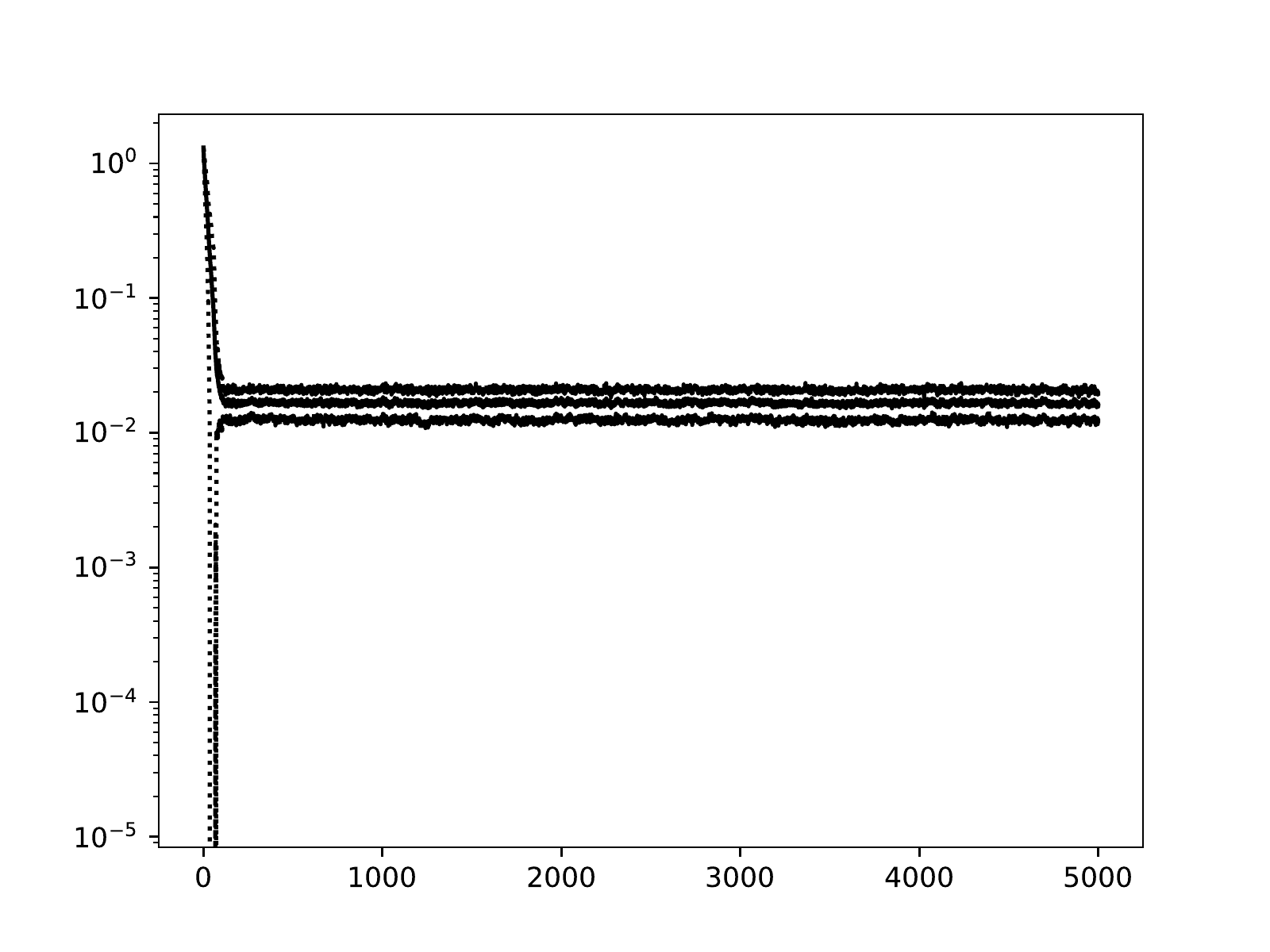}
    \includegraphics[width = 0.26\textwidth]{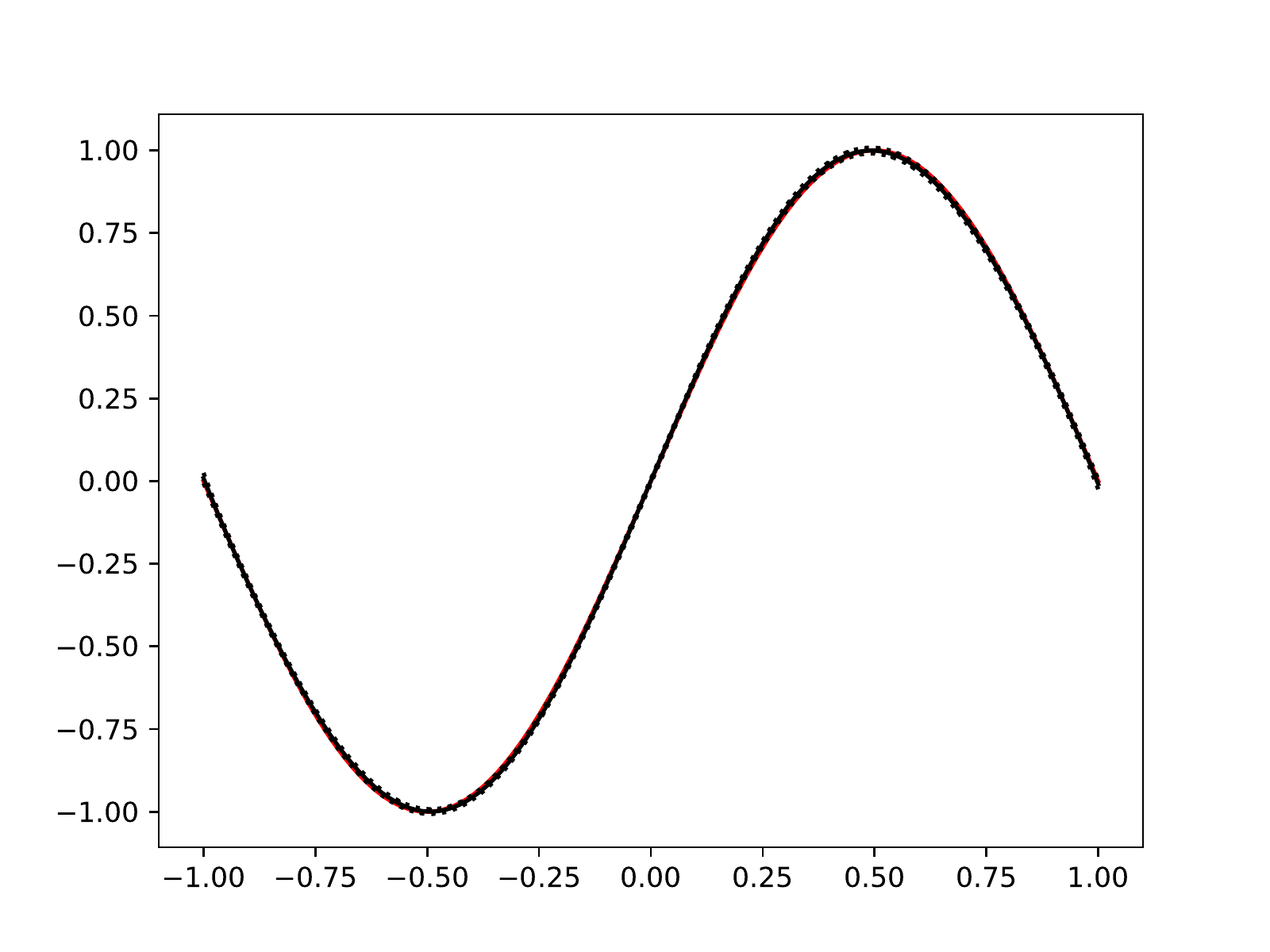}
    \includegraphics[width = 0.26\textwidth]{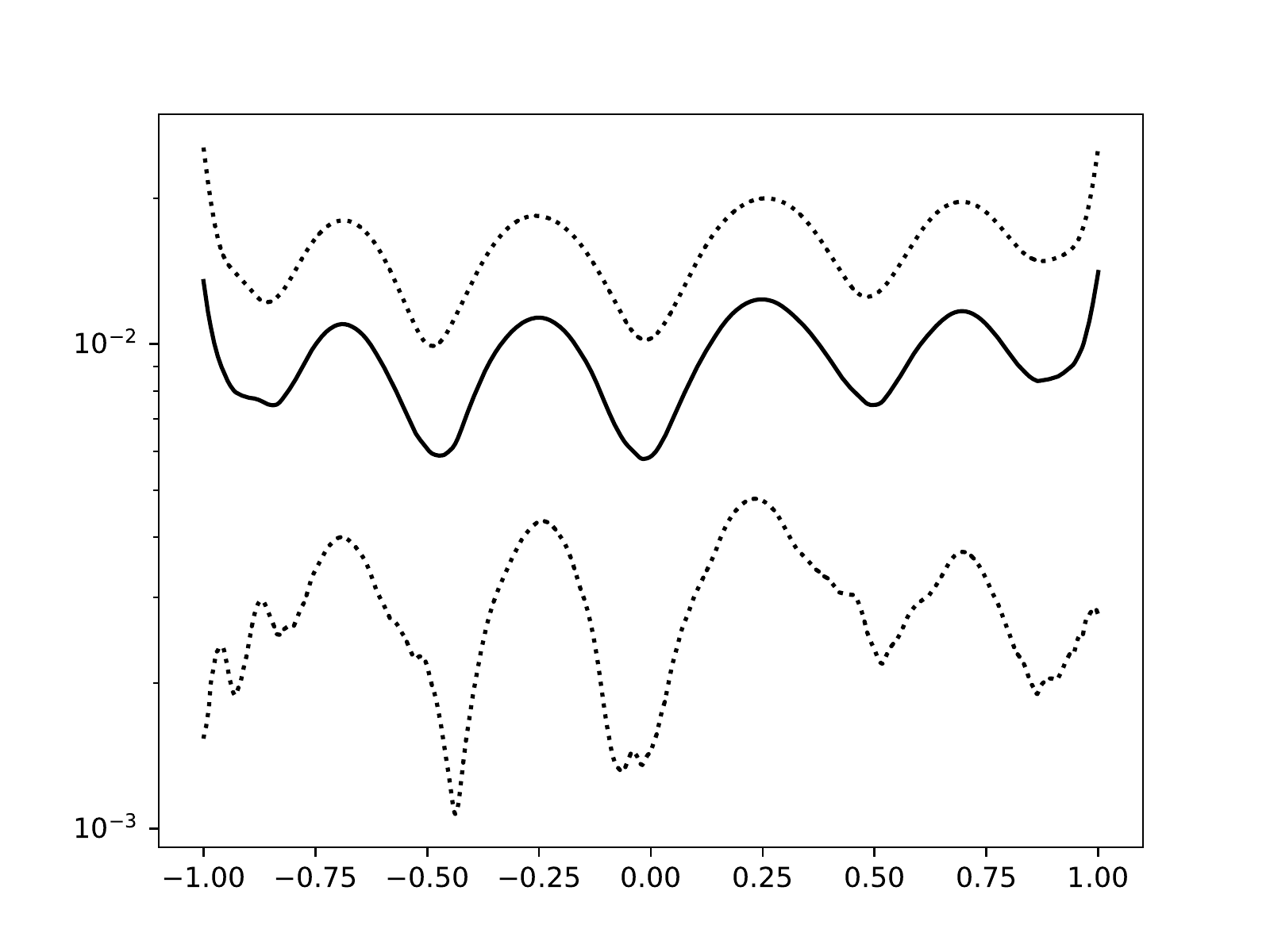}
    
      \includegraphics[width = 0.26\textwidth]{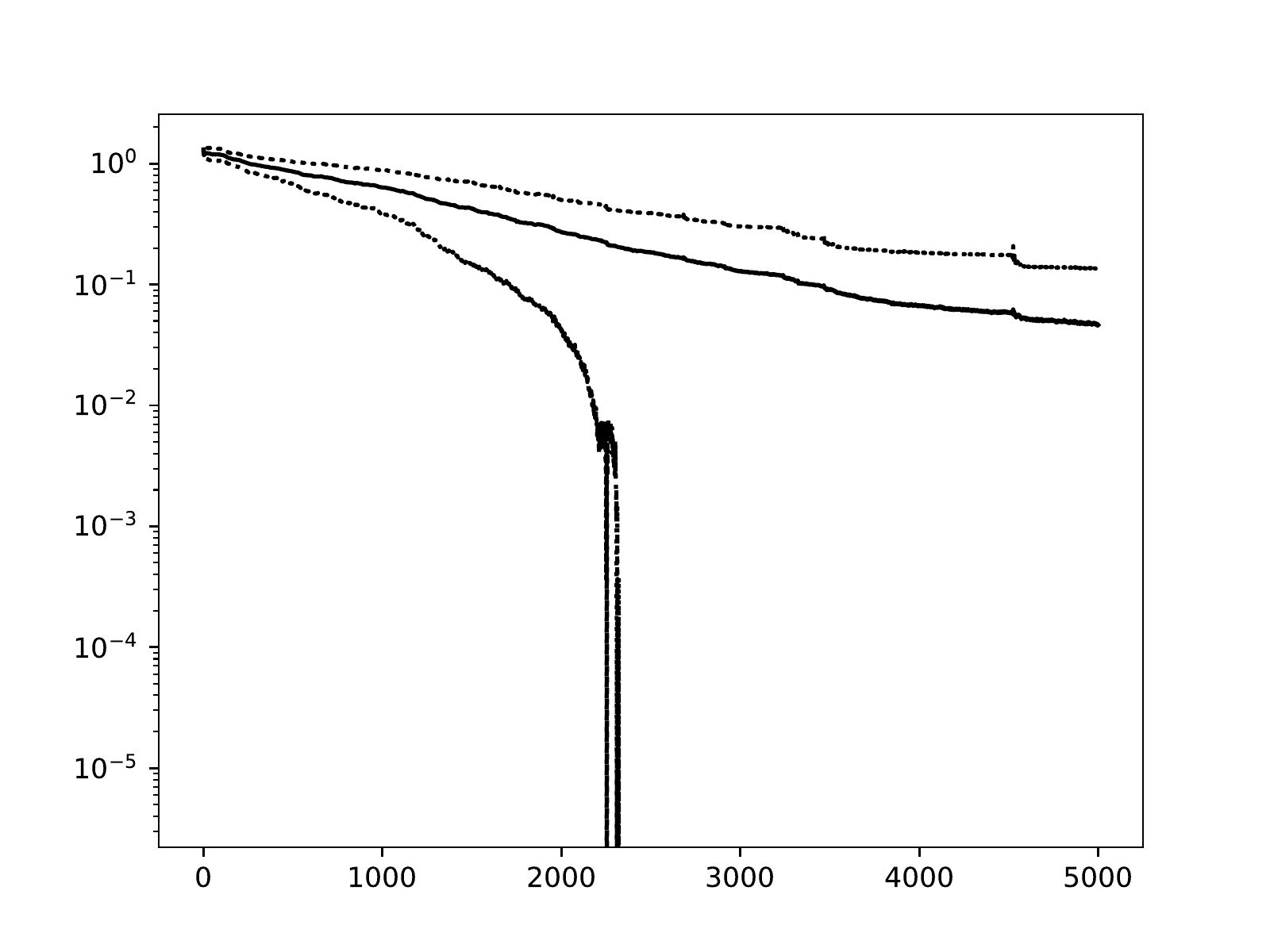}
    \includegraphics[width = 0.26\textwidth]{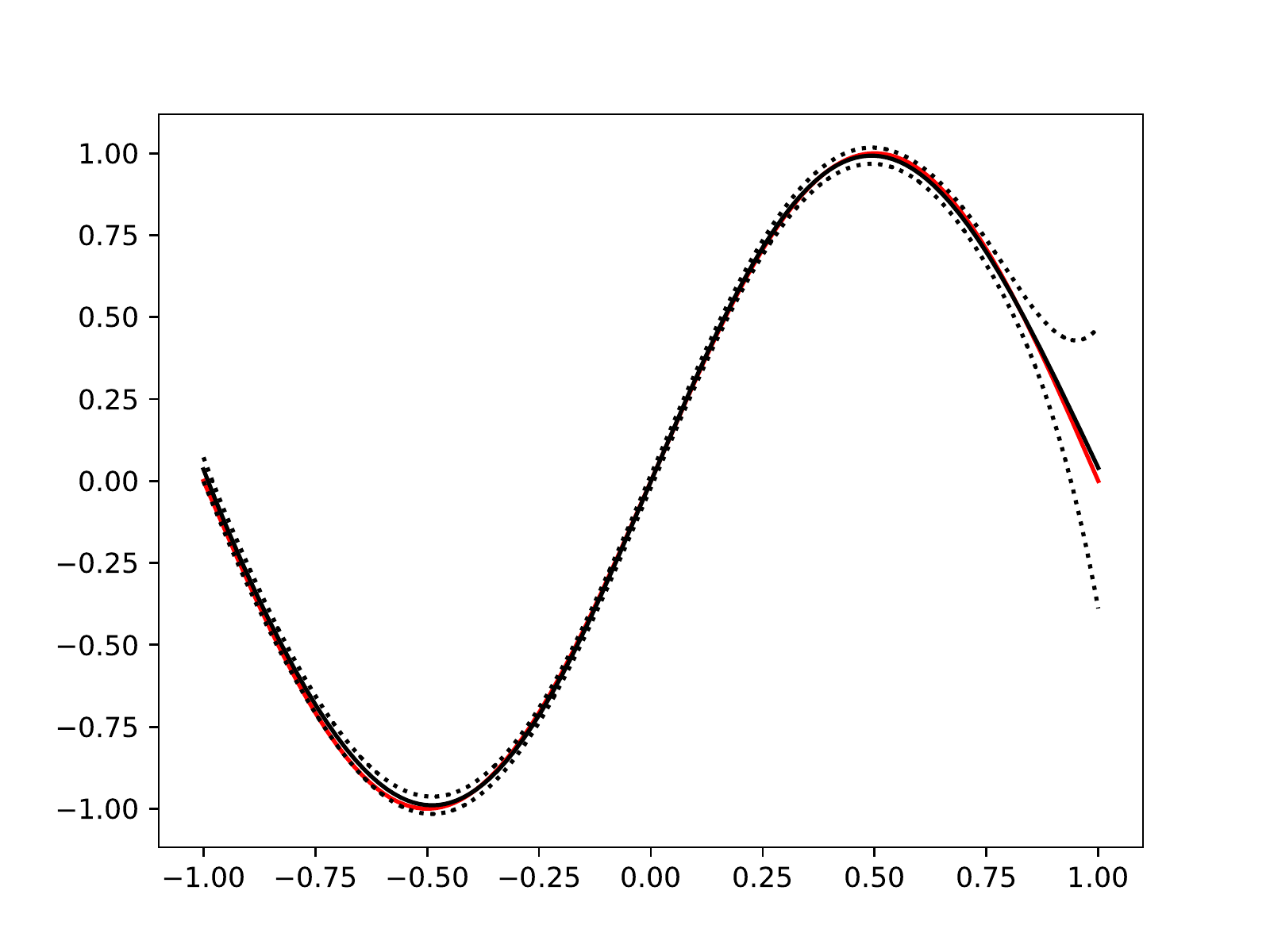}
    \includegraphics[width = 0.26\textwidth]{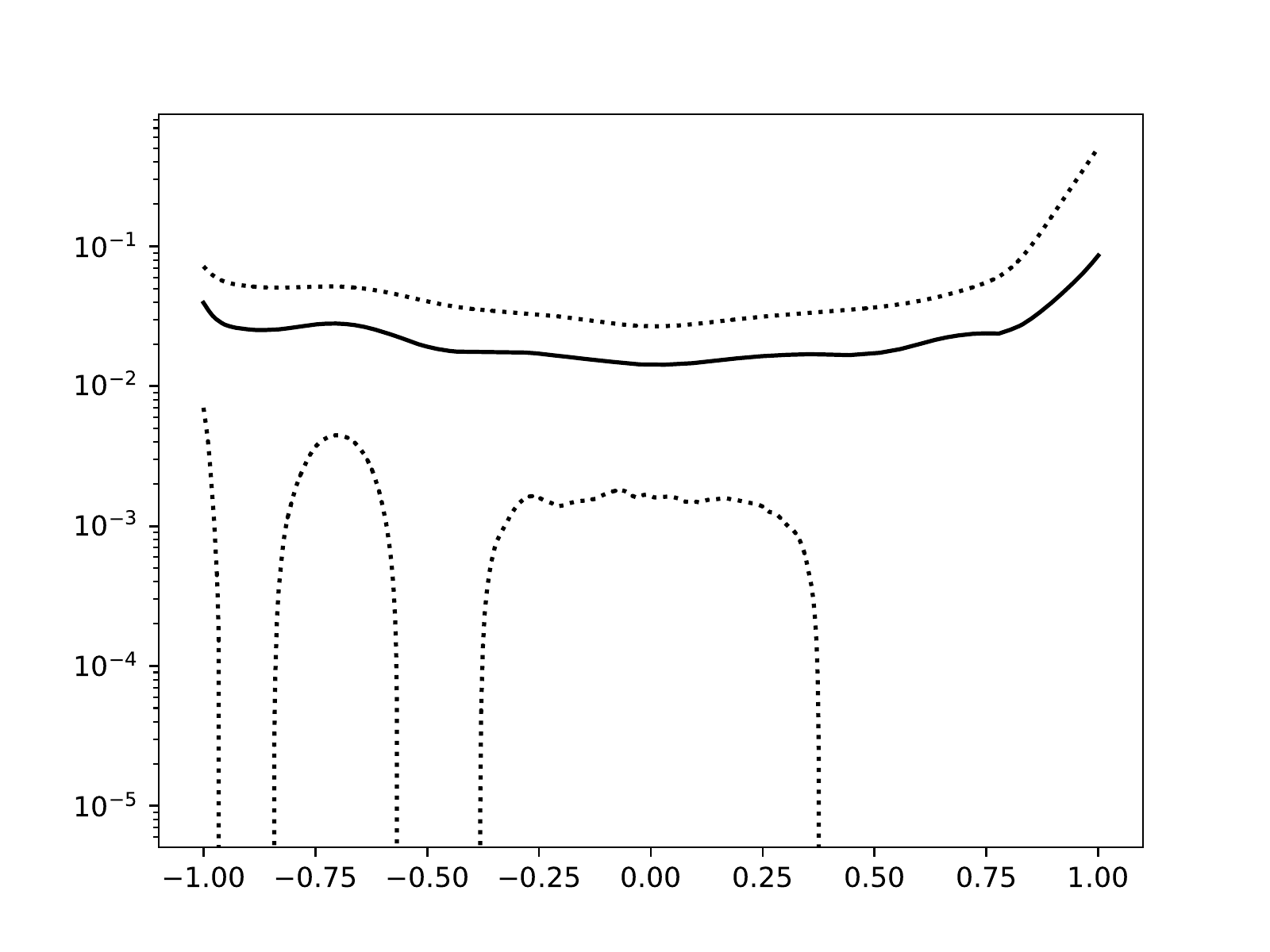}
\caption{Estimation results of the polynomial regression problem using the stochastic gradient process with reflected Brownian motion process with $\sigma = 5$ (top row), $\sigma = 0.5$ (centre row), and $\sigma = 0.05$ (bottom row).   The figures depict the mean over 100 runs (black solid line), mean $\pm$ standard deviation (black dotted line). Left column: trajectory of the rel\_err over time; centre column: comparison of $\Theta$ (solid red line) and estimated polynomial; right column: estimation error in terms of abs\_err.}
    \label{fig:Diff_results}
\end{figure}
\subsubsection*{Results and discussion}
For the polynomial regression problem we now study:
\begin{itemize}
\item stochastic gradient descent, as given in \eqref{Eq:SGD_discrete_time}, with constant learning rate $\eta_{(\cdot)} = h = 0.1$ (Figure~\ref{fig:sgd_polyn} top row),
    \item stochastic gradient descent algorithm, for which the forward Euler update is replaced by an implicit midpoint rule update, with constant learning rate $\eta_{(\cdot)} = h = 0.1$ (Figure~\ref{fig:sgd_polyn} bottom row),
    \item the stochastic gradient process with reflected Brownian motion as an index process with standard deviation $\sigma \in \{5, 0.5, 0.05\}$ (Figure~\ref{fig:Diff_results}), and
     \item the stochastic gradient process with Markov pure jump process as an index process with rate parameter $\lambda \in \{10, 1, 0.1, 0.01\}$ (Figure~\ref{fig:MJP_results}).
\end{itemize}
In addition to those plots, we give means and standard deviations of the relative errors at the terminal state of the iterations in Table~\ref{Table_Results_polynomial}. To compare the convergence behavior of the different methods, we plot the rel\_err within the first 2000 discrete time steps in Figure~\ref{fig:error_comparison}.
\begin{figure}[htb]
    \centering
     \includegraphics[width = 0.26\textwidth]{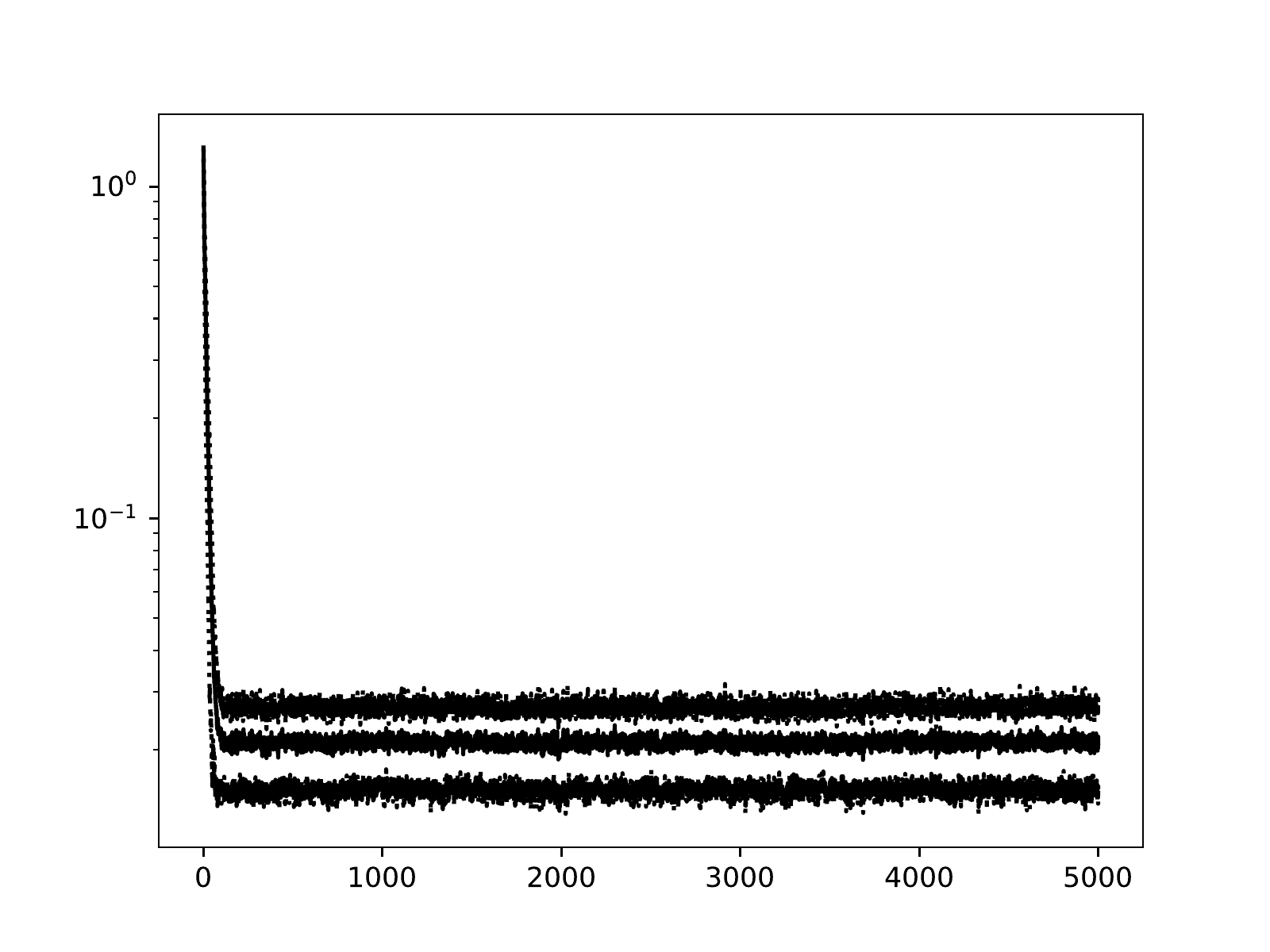}
     \includegraphics[width = 0.26\textwidth]{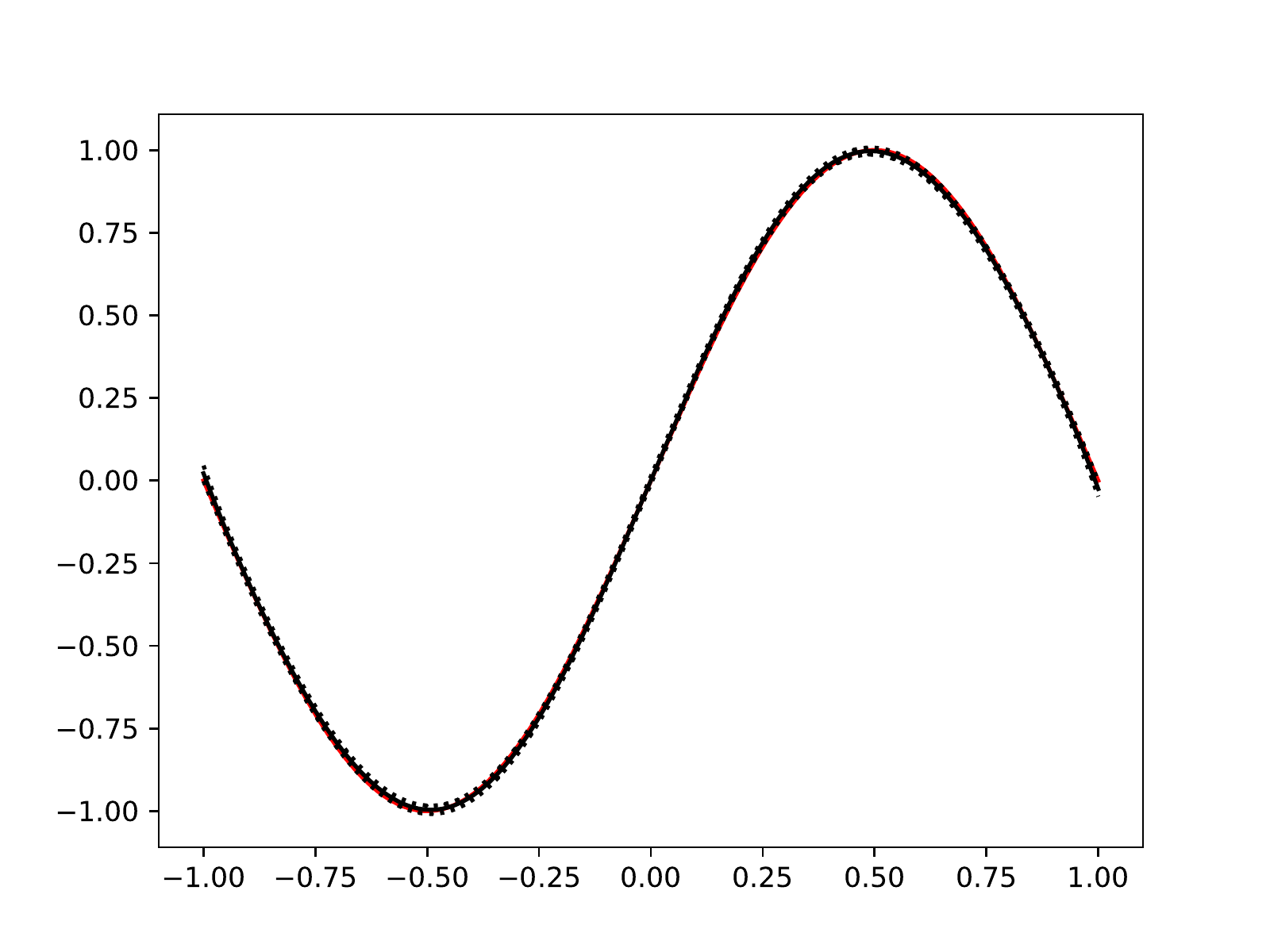}
      \includegraphics[width = 0.26\textwidth]{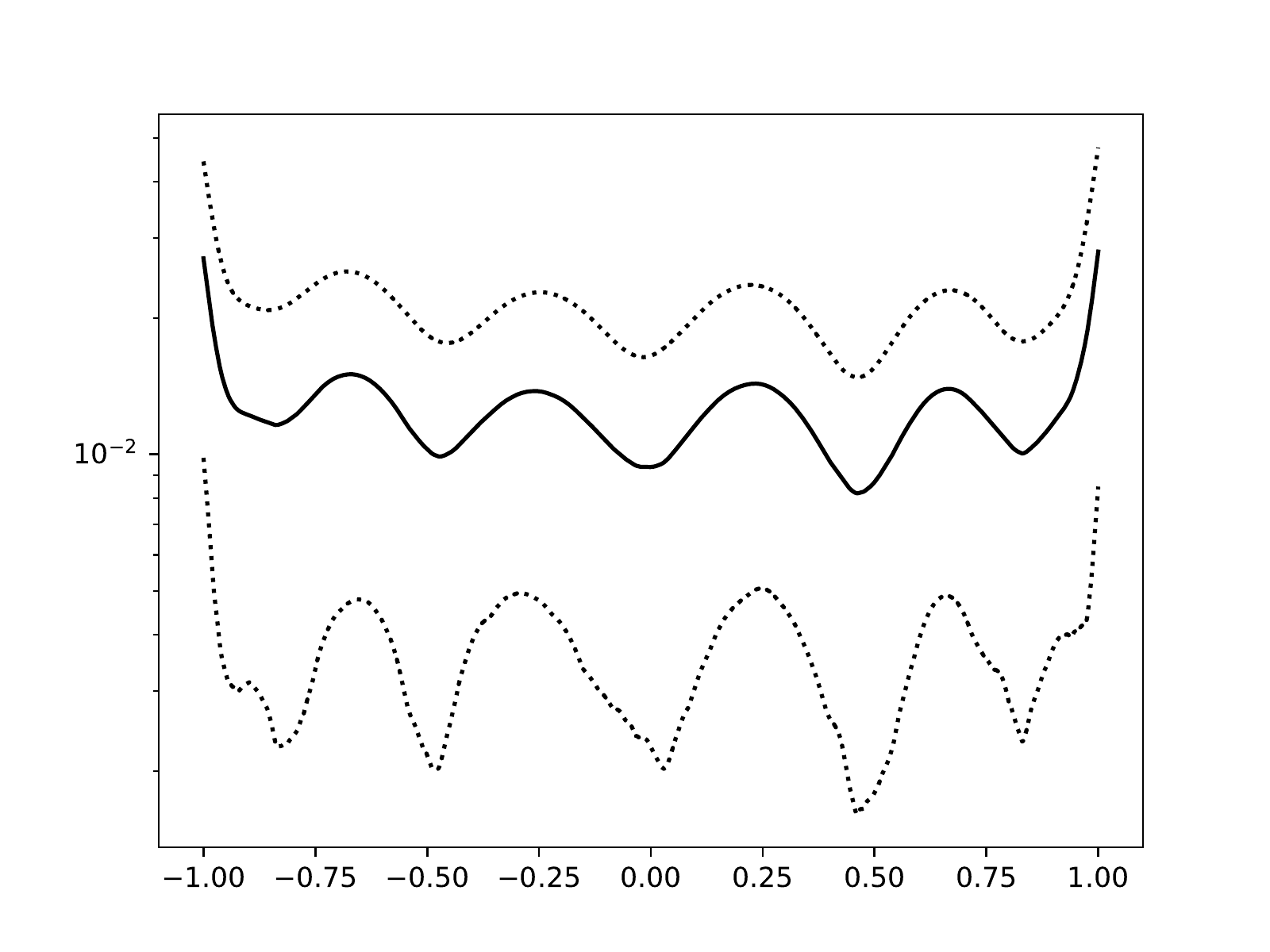}
    \includegraphics[width = 0.26\textwidth]{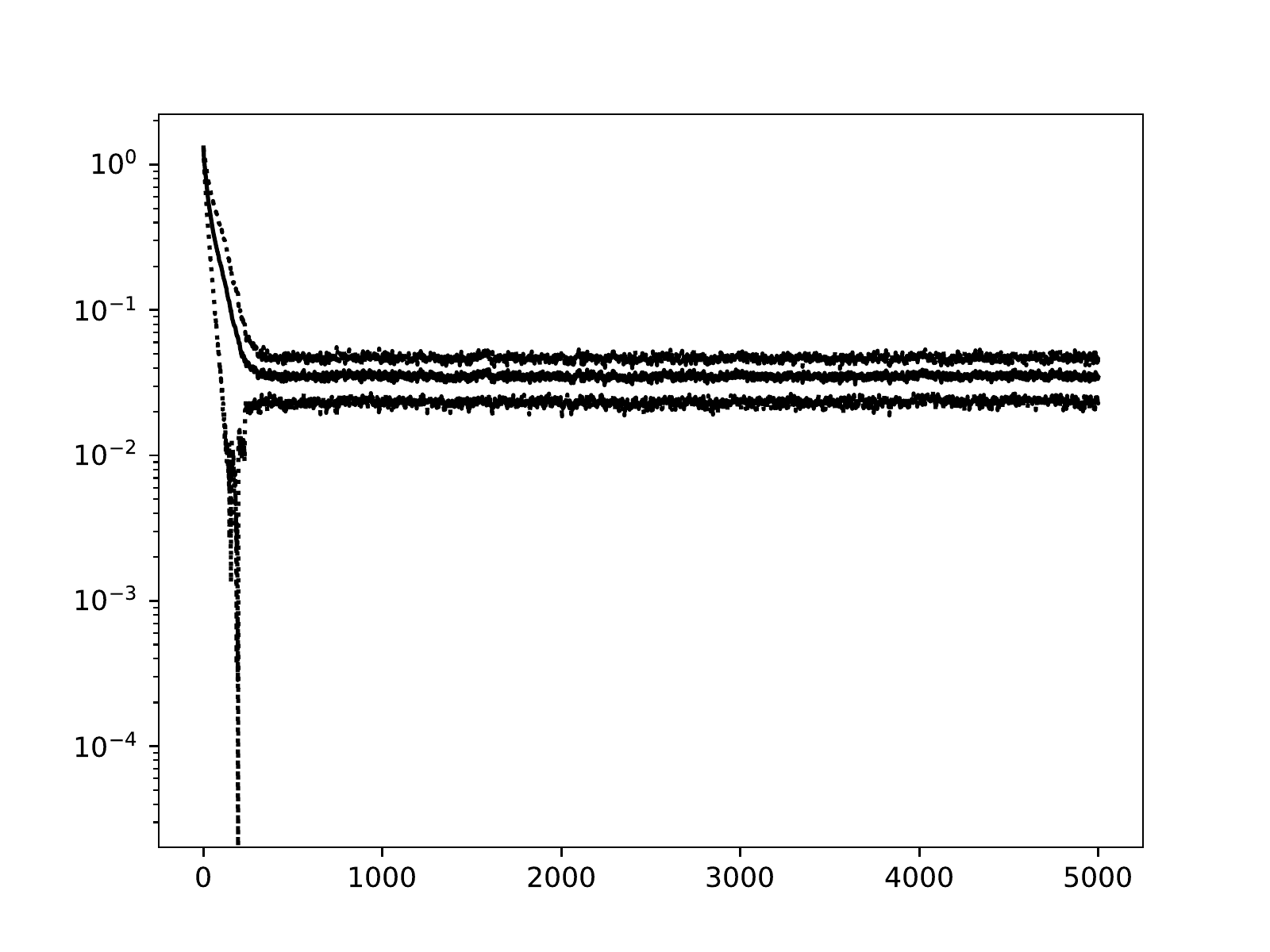}
     \includegraphics[width = 0.26\textwidth]{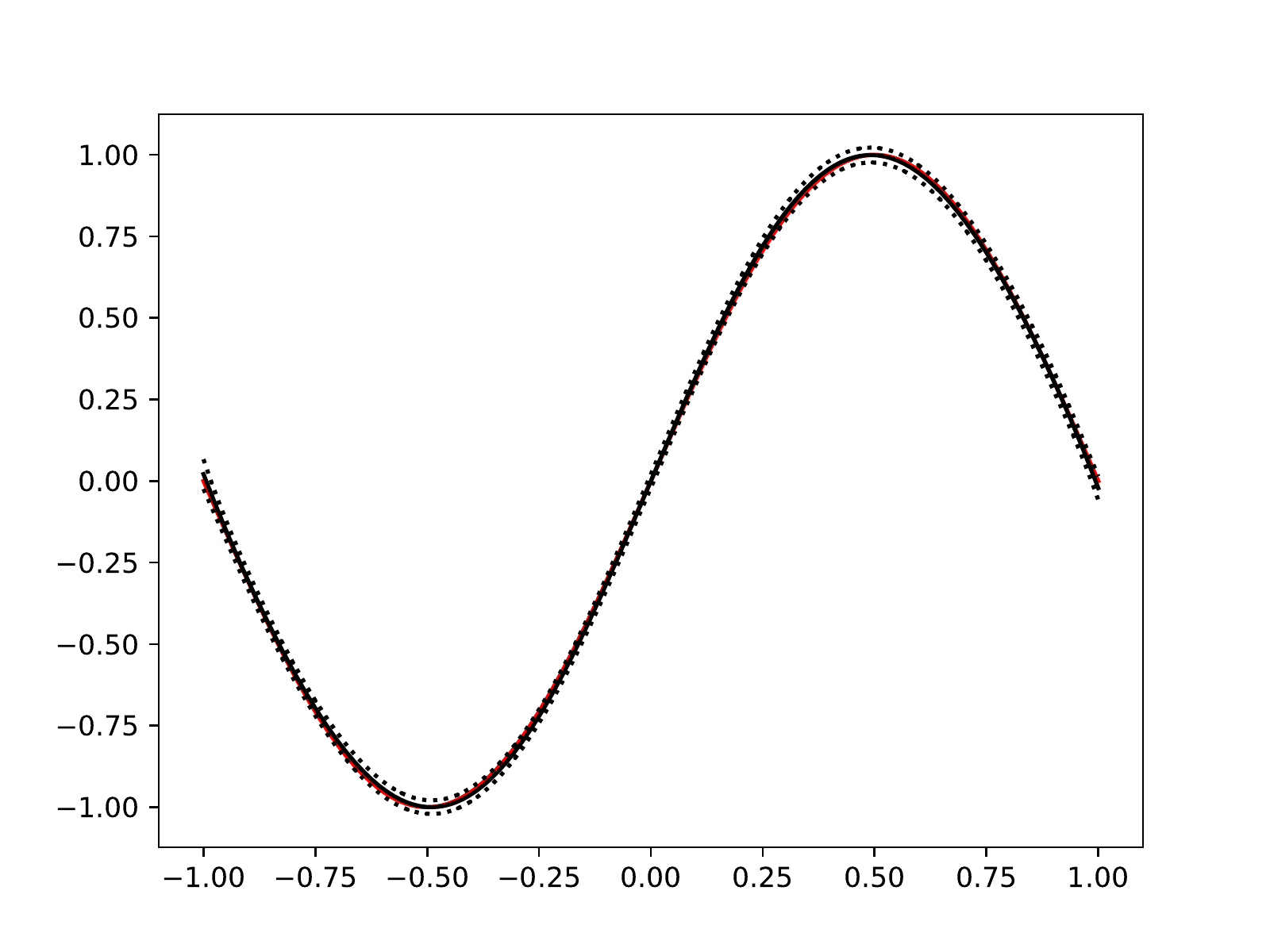}
      \includegraphics[width = 0.26\textwidth]{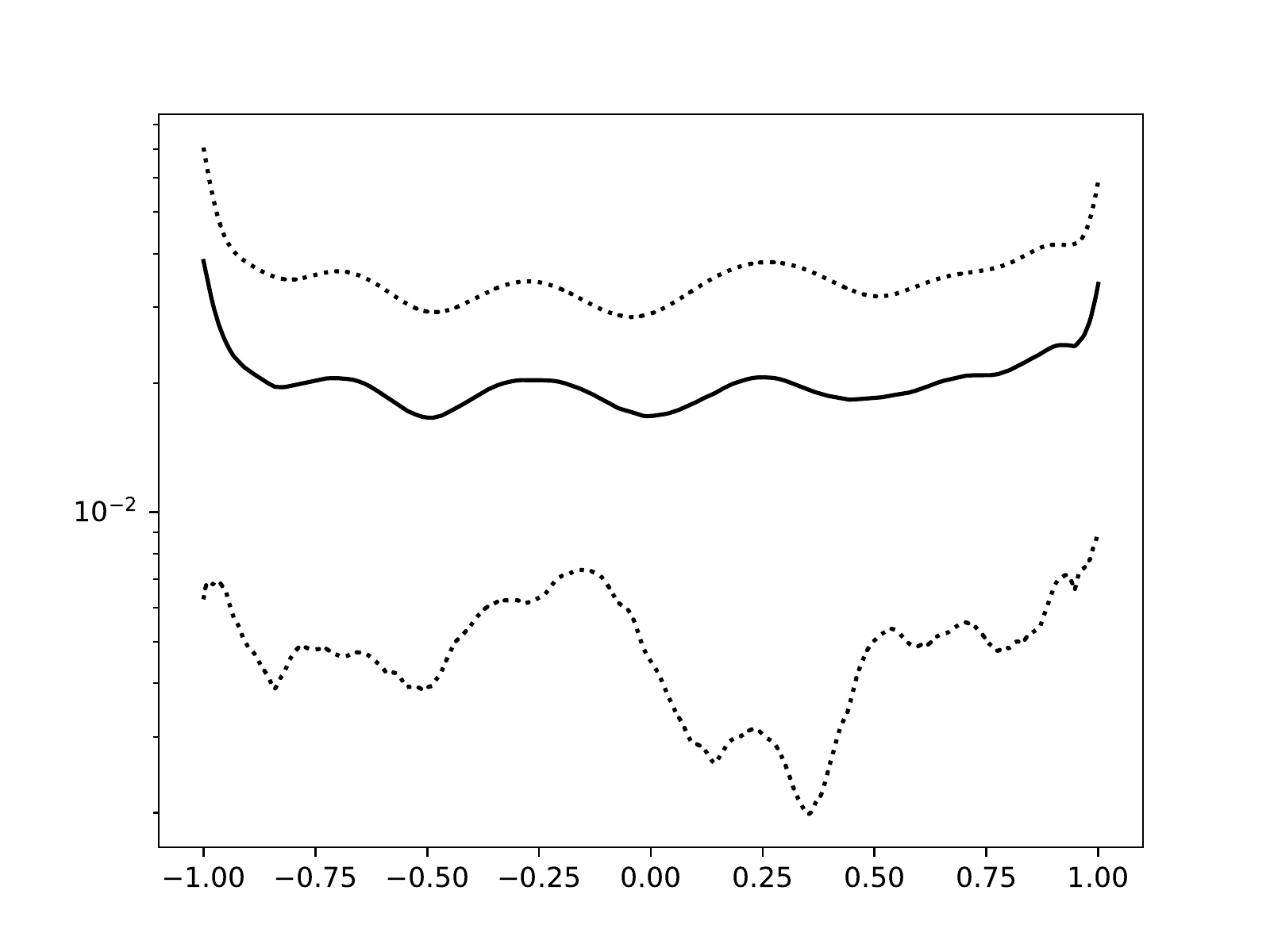}
          \includegraphics[width = 0.26\textwidth]{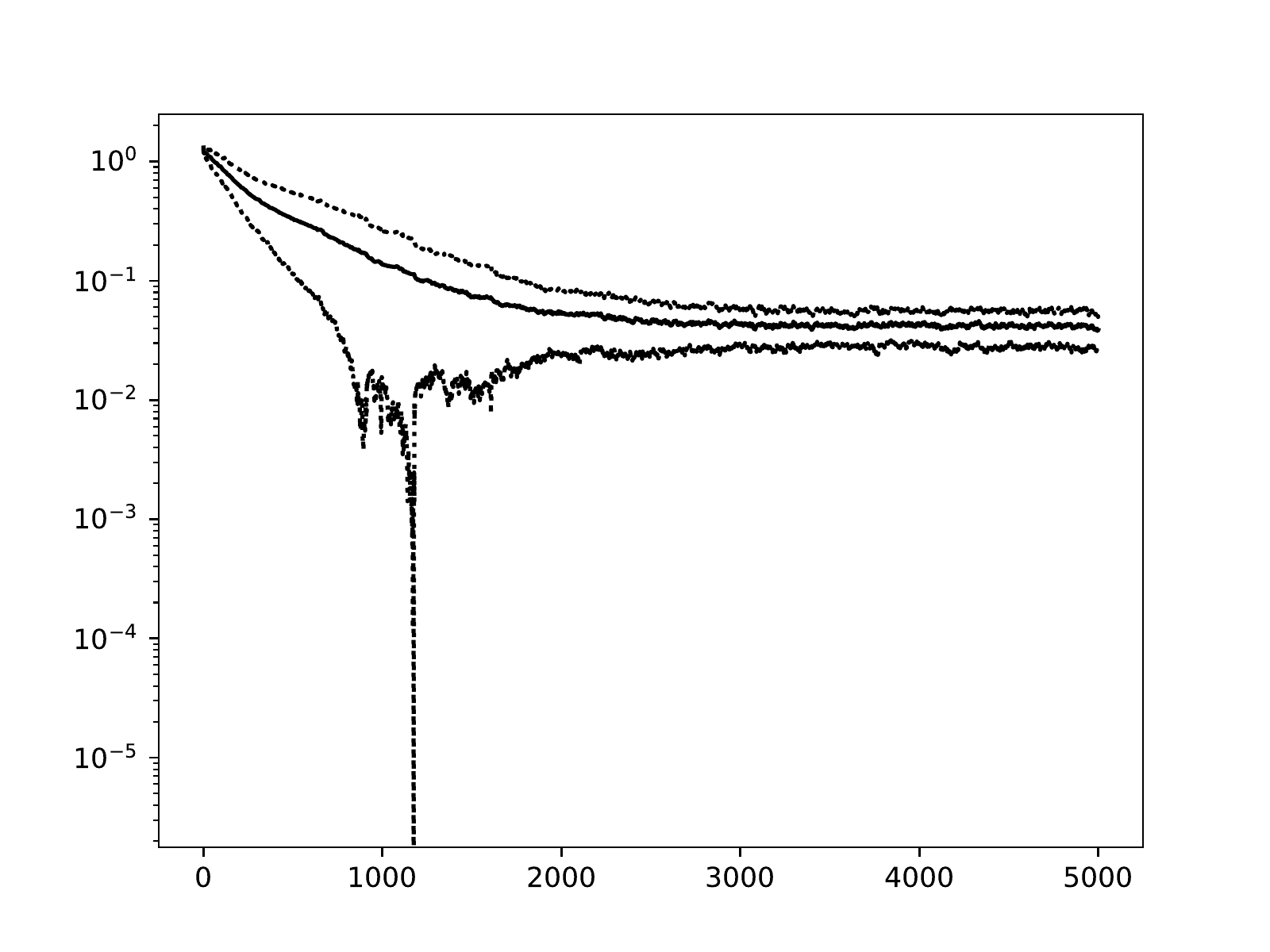}
     \includegraphics[width = 0.26\textwidth]{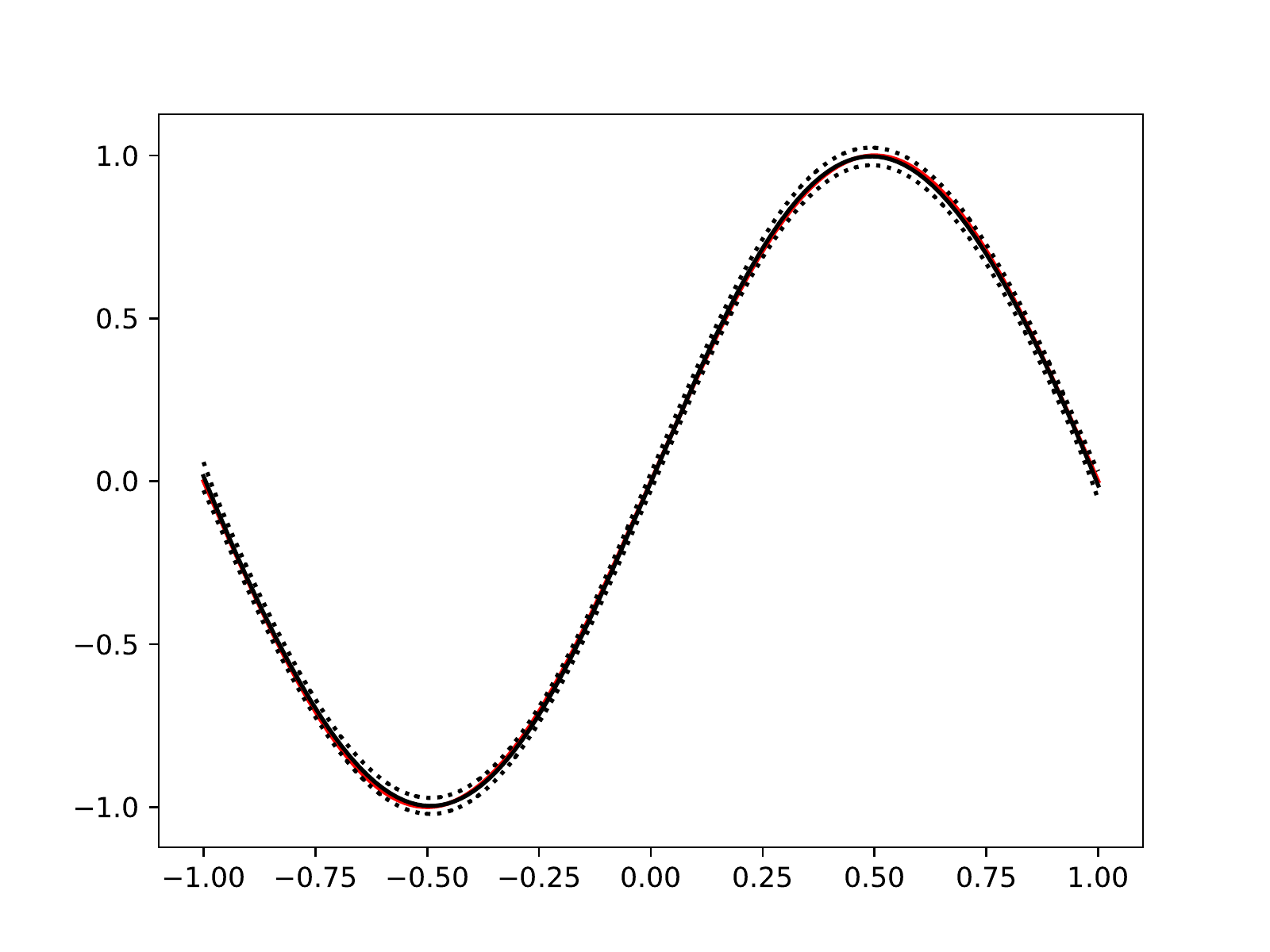}
      \includegraphics[width = 0.26\textwidth]{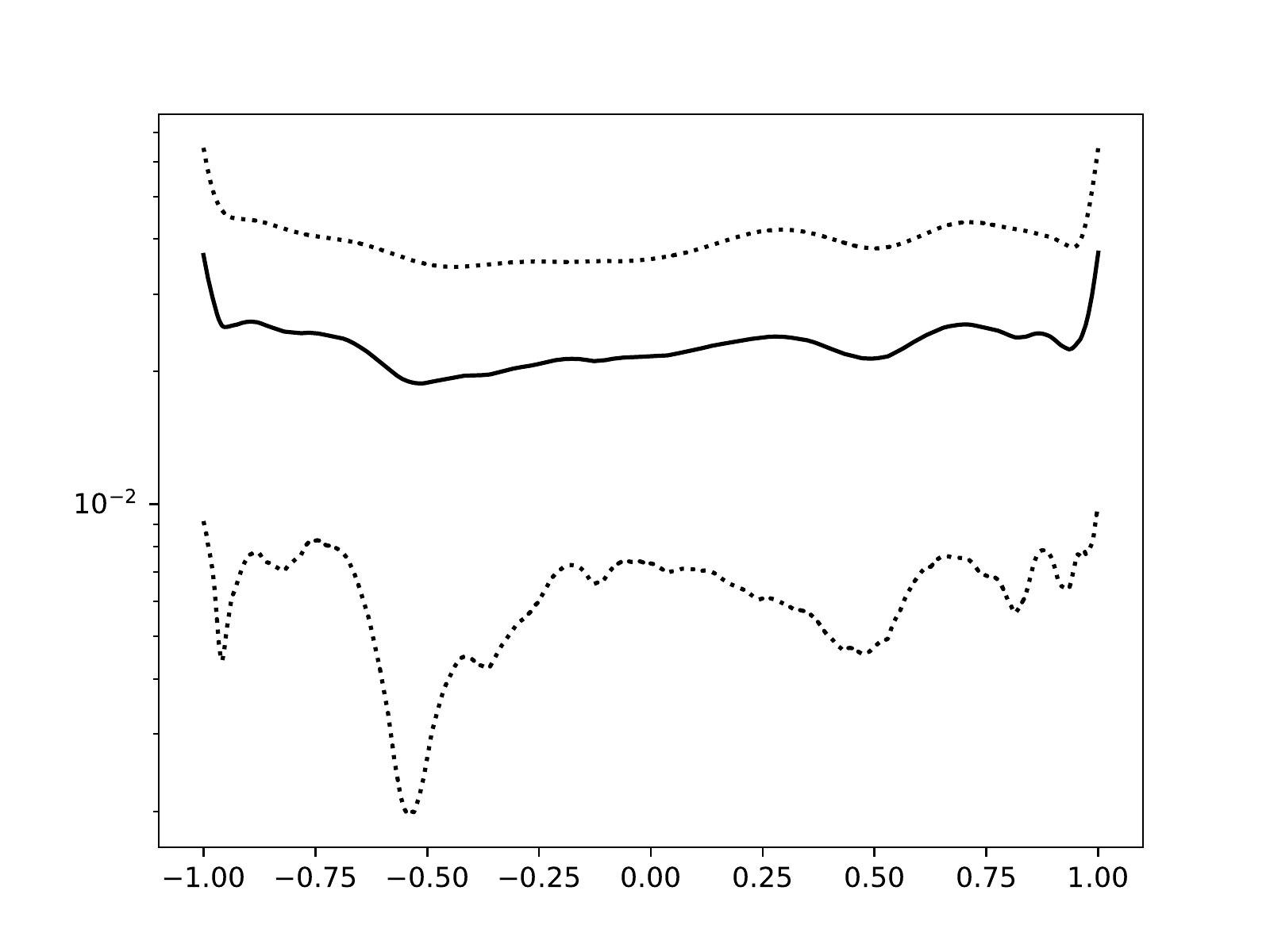}
          \includegraphics[width = 0.26\textwidth]{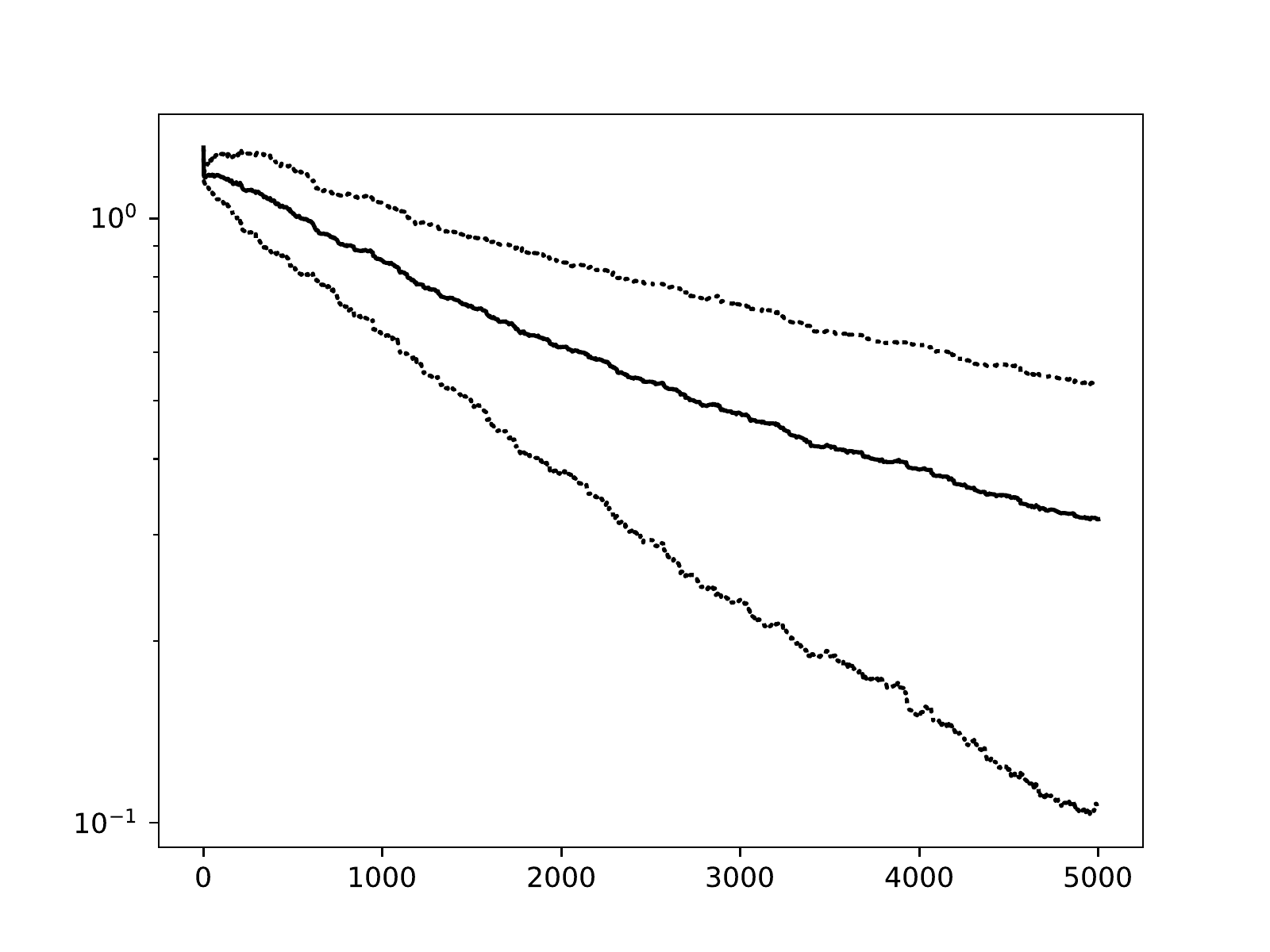}
     \includegraphics[width = 0.26\textwidth]{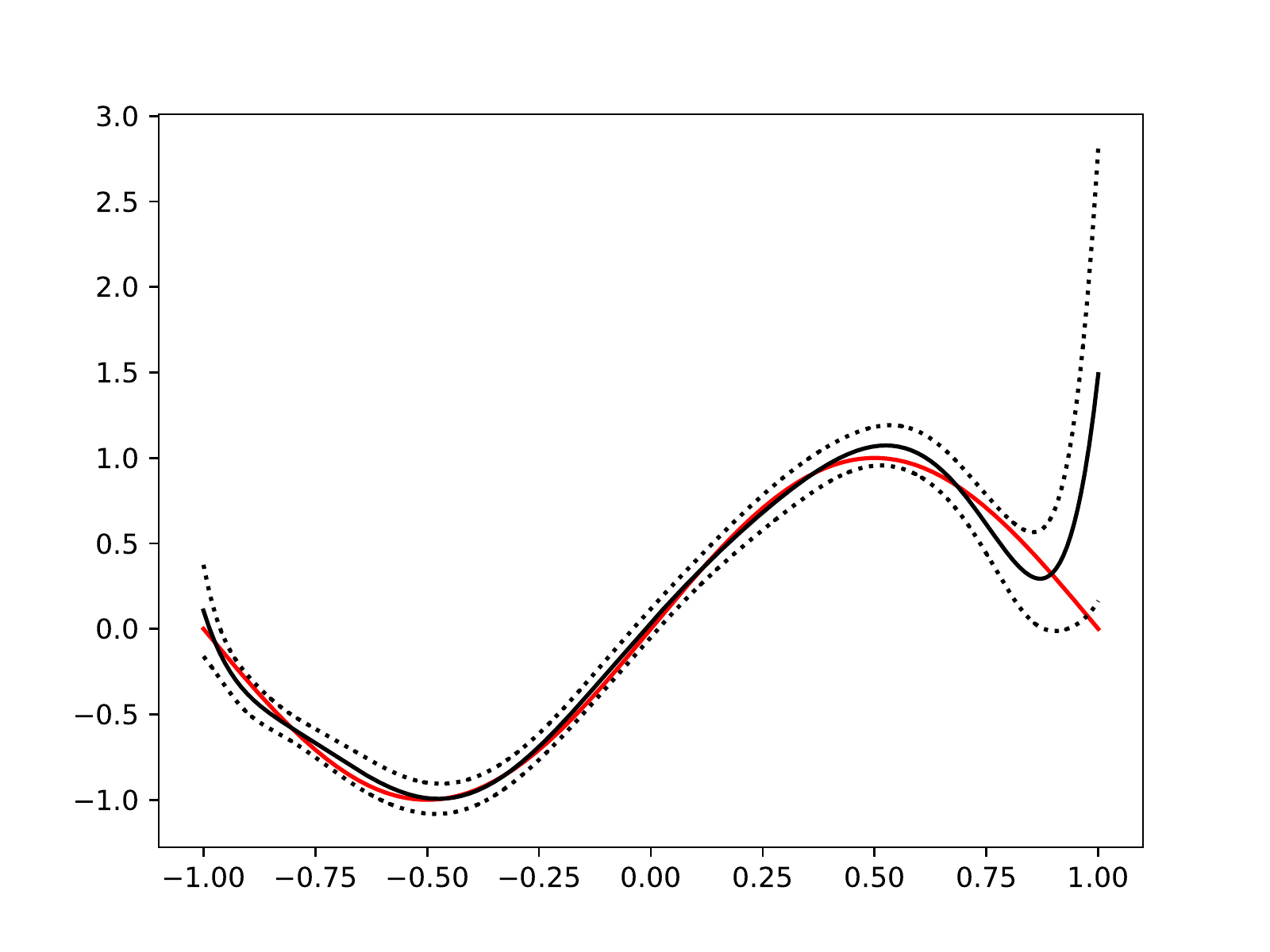}
      \includegraphics[width = 0.26\textwidth]{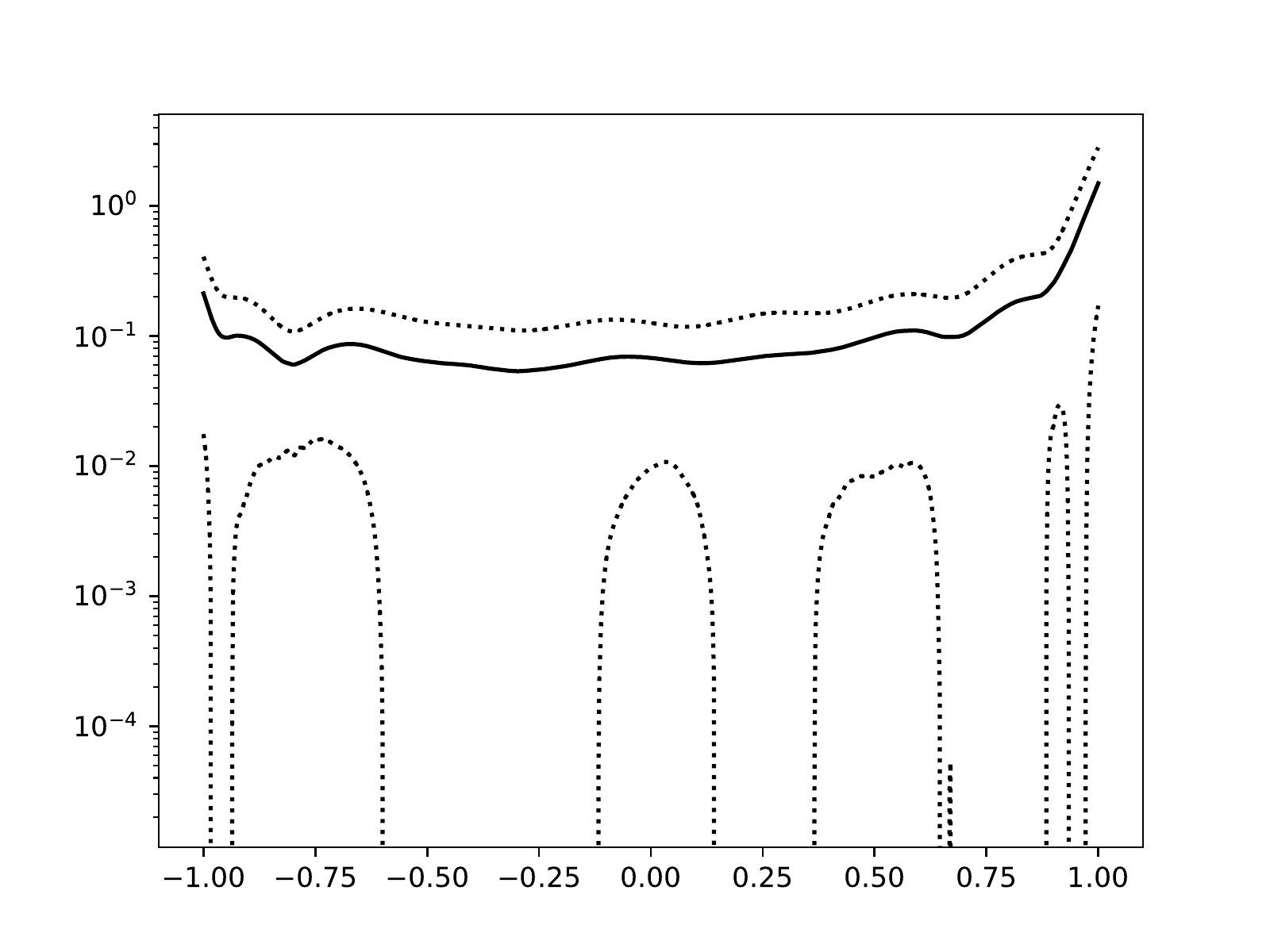}
    \caption{Estimation results of the polynomial regression problem using the stochastic gradient process with pure jump index process with $\lambda = 10$ (first row), $\lambda = 1$ (second row), $\lambda = 0.1$ (third row), and $\lambda = 0.01$ (fourth row).   The figures depict the mean over 100 runs (black solid line), mean $\pm$ standard deviation (black dotted line). Left column: trajectory of the rel\_err over time; centre column: comparison of $\Theta$ (solid red line) and estimated polynomial; right column: estimation error in terms of abs\_err.}
    \label{fig:MJP_results}
\end{figure}

\begin{table}[]
\begin{tabular}{l|l|ll}
\textbf{Method}                                                                                                     & \textbf{Parameters} & \textbf{Mean of $\mathrm{rel\_err}_{N,(\cdot)}$} & \textbf{$\pm$ StD} \\ \hline
{SGD}                                                                                                        &    $\eta_{(\cdot)} = 0.1$                 &     $1.844 \cdot 10^{-2}$                        &     $\pm 4.012 \cdot 10^{-3}$                       \\ \hline
{SGD implicit}                                                                                               &          $\eta_{(\cdot)} = 0.1$             &    $1.719 \cdot 10^{-2}$                         &            $\pm 3.939 \cdot 10^{-3}$                 \\ \hline
\multirow{3}{*}{{\begin{tabular}[c]{@{}l@{}}SGPC with \\ reflected diffusion \\ index process\end{tabular}}} &       $\sigma = 5 $            &    $1.586 \cdot 10^{-2}$                         &  $\pm 4.038 \cdot 10^{-3}$                          \\
                                                                                                                   &     $\sigma = 0.5 $                     &   $1.587 \cdot 10^{-2} $                         &      $\pm 2.979 \cdot 10^{-3}$                      \\
                                                                                                                      &     $\sigma = 0.05 $                  &   $ 4.637 \cdot 10^{-2} $                         &  $\pm 8.776 \cdot 10^{-2}$                          \\ \hline
\multirow{4}{*}{{\begin{tabular}[c]{@{}l@{}}SGPC with \\ Markov pure jump\\ index process\end{tabular}}}     &        $ \lambda = 10$            &       $2.100 \cdot 10^{-2}$                      &  $\pm 6.049 \cdot 10^{-3}$                          \\
                                                                                                                    &    $ \lambda = 1$                   &      $3.427 \cdot 10^{-2}$                       &    $\pm 1.105 \cdot 10^{-2}$                        \\
                                                                                                                    &     $ \lambda = 0.1$                  &        $3.866 \cdot 10^{-2}$                     &    $\pm 1.142 \cdot 10^{-2}$                        \\
                                                                                                                    &     $ \lambda = 0.01$                  &   $3.178 \cdot 10^{-1}$                          &    $\pm 2.124 \cdot 10^{-1}$                       
\end{tabular}
\caption{Accuracy of the estimation in the polynomial regression model. Mean and standard deviation of the relative error of the methods at the final point of their trajectory. In particular, sample mean and sample standard deviation of $ j \mapsto \mathrm{rel\_err}_{N,j}$, with $N = 5 \cdot 10^4$, computed over $100$ independent runs.   } \label{Table_Results_polynomial}
\end{table}
We learn several things from these results. Unsurprisingly, the index processes with a strong autocorrelation $(\lambda = 0.01, \sigma = 0.05)$ lead to  larger errors in the reconstruction: the processes move too slowly to capture the index spaces appropriately. In the other cases, we can assume that the processes have reached their stationary regime. 
Thus, in the figures and table, we should learn about the implicit regularization that is implicated by the different subsampling schemes, see \citet{Ali20a, smith2021on}. We especially see that the mean errors are reduced as $\sigma$ respectively $\lambda$ increases, which illustrates the approximation of the full gradient flow as shown in Theorem~\ref{wcovtheta}. Although, we should note that we compute the error to the truth $\Theta$, which is likely not the true minimizer of the full optimization problem \ref{eq:polyn_full_opt}.

\begin{figure}
    \centering
    \includegraphics[scale = 0.45]{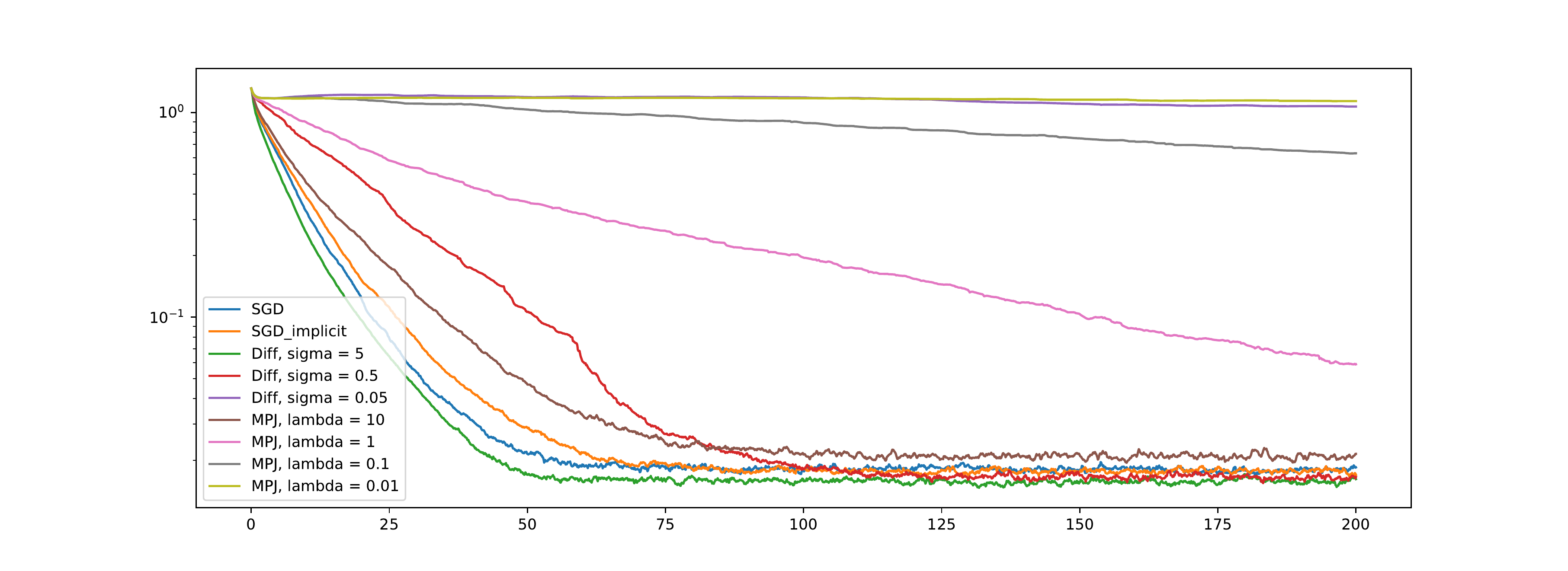}
    \caption{Comparison of the mean rel\_err of the stochastic methods for time $t \leq 200$.}
    \label{fig:error_comparison}
\end{figure}
It appears that the stochastic gradient processes with reflected diffusion index process $\sigma \in \{0.5, 5\}$ returns the best results. Looking at the error plots in the right column of Figure~\ref{fig:Diff_results}, we see that SGPC especially outperforms the other algorithms close to the boundary. For $\sigma = 5$ this could be seen as a numerical artefact due to the time step $t-t(\cdot-1)$ being too large. This though is likely not the case for $\sigma = 0.5$, where we see a similar effect, albeit a bit weaker.

In the convergence plot, Figure~\ref{fig:error_comparison}, we see for different methods different speeds of convergence to their respective stationary regime. Those speeds again depend on the autocorrelation of the processes. Interestingly, the SGPC with reflected diffusion index process and $\sigma = 5$ appears to be the best of the algorithms. 

\subsection{Solving partial differential equations using neural networks (NN)}
Partial differential equations (PDEs) are used in science and engineering to model systems and processes, such as: turbulent flow, biological growth, or elasticity. Due to the implicit nature of a PDE and its complexity, the model they represent usually needs to be approximated (`solved') numerically. Finite differences, elements, and volumes have been the state of the art for solving PDEs for the last decades. Recently, deep learning approaches have gained popularity for the approximation of PDE solutions. Here, deep learning is particularly successful in high-dimensional settings, where classical methods suffer from the curse of dimensionality. See for example \citet{PINN, DeepXDE} for physics-informed neural networks (PINN). Integrated PyTorch-based packages are available for example see \citet{NeuroDiffEq, nangs}. More recently, see \citet{FNO} for a state-of-the-art performance based on the Fourier neural operator.

Physics-informed neural networks are a very natural field of application of deep learning with continuous data. Below we introduce PINNs, the associated continuous-data optimization problem, and the state-of-the-art in the training of PINNs. Then we consider a particular PDE, showcase the applicability of SGP, and compare its performance with the standard SGD-type algorithm.

The basic idea of PINNs consists in representing the PDE solution by a deep neural network where the parameters of the network are chosen such that the PDE is optimally satisfied. Thus, the problem is reduced to an optimization problem with the loss function formulated from differential equations, boundary conditions, and initial conditions. More precisely, for PDE problems of Dirichlet type, we aim to solve a system of equations of type
\begin{equation}\label{eq:PDE}
\left\{ \begin{array}{rlll}
\mathcal{L}(u(t, x)) &= &s(t, x) \qquad &(t\in[0, \infty),\  x\in D) \\
u(0,x) &= &u_0(x) \qquad &(x\in D) \,\\
u(t, x) &= &b(t, x)  \qquad &(t\in[0, \infty),\  x\in D)
\end{array} \right.
\end{equation}
where $D\subset\mathbb{R}^d$ is an open, connected, and bounded set and $\mathcal{L}$ is a differential operator defined on a function space $V$ (e.g. $H^1(D)$). The unknown is $u:\bar{D}\to \mathbb{R}^n$. Functions $s(t, x)$, $b(t, x)$, and $u_0(x)$ are given. In numerical practice, we need to replace the infinite-dimensional space $V$ by a -- in some sense -- discrete representation. Traditionally, one employs a finite-dimensional subspace of $V$, say $\mathrm{span}\{\psi_1,\ldots,\psi_K\}$, where $\psi_1,\ldots,\psi_K$ are basis functions in a finite element method. To take advantage of the recent development of machine learning, one could solve the problem on a set of deep neural networks contained in $V$, say
\begin{align*}
    {\Big\lbrace} \psi(\cdot; \theta):\  &\psi(x; \theta) = (W^{(K)} \sigma(\cdot) + b^{(K)}) \circ \cdots \circ  (W^{(1)} \sigma(x) + b^{(1)}), x \in [0, \infty) \times D, \\
    &\theta = \left((W^{(K)}, b^{(K)}), \ldots, (W^{(1)}, b^{(1)})\right)\in \prod_{k=1}^K \left(\mathbb{R}^{n_{k} \times n_{k-1}} \times \mathbb{R}^{n_{k}}\right) =: X
    {\Big\rbrace},
\end{align*}
where $\sigma: \mathbb{R} \rightarrow \mathbb{R}$ is an activation function, applied component-wise, $n_0 = d+1$ and $n_K = 1$ to match input and output of the PDE's solution space, and $n_1, \ldots, n_{K-1}$ determine the network's architecture.

In simpler terms, let $u_{\rm}(\cdot; \theta) \in V$ be the output of a feedforward neural network (FNN) with  parameters (biases/weights) denoted by $\theta \in X$. The parameters can be learned by minimizing the mean squared error (MSE) loss
\begin{align*}
    \Phi(\theta; \mathcal{L}, s, u_0, b) :=& \int_0^\infty w(t) \int_D  \left(\mathcal{L}(u(t, x;\theta)) - s(x)\right)^2 \mathrm{d}x\mathrm{d}t+\int_{\partial D} \left(u(0, x;\theta) - u_0(x)\right)^2 \mathrm{d}x\\
    &\ +\int_0^\infty w(t) \int_{\partial D} \left(u(t, x;\theta) - b(t, x)\right)^2 \mathrm{d}x\mathrm{d}t,
\end{align*}
where the first term is the $L^2$ norm of the PDE residual, the second term is the $L^2$ norm of the residual for the initial condition, the third term is the $L^2$ norm of the residual for the boundary conditions, and $w: [0, \infty) \rightarrow [0, \infty)$ is an appropriate weight function. The FNN then represents the solution via solving the following minimization problem
\begin{equation} \label{EQ_Opt_PINNs_cont}
    \min_{\theta \in X} \Phi(\theta; \mathcal{L}, s, u_0, b).
\end{equation}
Note that in physics-informed neural networks, differential operators w.r.t. the input $x$ and the gradient w.r.t the parameter $\theta$ are both obtained using automatic differentiation. 

\subsubsection*{Training of physics-informed neural networks}
In practice, the optimization problem \eqref{EQ_Opt_PINNs_cont} is often replaced by an optimization problem with discrete potential
\begin{align*}
    \widehat{\Phi}(\theta; \mathcal{L}, s, u_0, b) :=& \sum_{k=1}^K  \left(\mathcal{L}(u(t_k, x_k;\theta)) - s(x_k)\right)^2 + \sum_{k'=1}^{K'}  \left(u(0, x'_{k'};\theta) - u_0(x'_{k'})\right)^2 \\
    &\ + \sum_{k''=1}^{K''} \left(u(t''_{k''}, x''_{k''};\theta) - b(t_{k''}'' , x_{k''}'')\right)^2,
\end{align*}
for appropriate continuous indices $$(x_{k}, t_k)_{k=1}^K \in [0, \infty)^K \times D^K, (x_{k'}')_{k'=1}^{K'} \in \partial D^K, (x_{k''}'', t_{k''}'')_{k''=1}^{K''} \in [0, \infty)^K \times \partial D^K$$ that may be chosen deterministically or randomly, see for example \citet{nangs, DeepXDE}. 

Focusing the training on a fixed set of samples can be problematic: fixing a set of random samples might be unreliable; a reliable cover of the domain will likely only be reached through tight meshing, which scales badly.   \citet{Sirignano} propose to use SGD on the continuous data space. They employ the discrete dynamic  in \eqref{Eq:SGD_discrete_time}. Naturally, we would like to follow \citet{Sirignano} and employ the SGP dynamic on the continuous index set.

To train the PINNs with SGP, we again choose the reflected Brownian motion as an index process, which we discretize with the Euler--Maruyama scheme in Algorithm~\ref{alg:RBM}. In addition, we employ mini-batching to reduce the variance in the estimator: We sample $M \in \mathbb{N}$ independent index processes $(V_t^{(1)})_{t \geq 0},\ldots, (V_t^{(M)})_{t \geq 0}$ and then employ the dynamical system
$$
\mathrm{d}\theta_t = - \frac{1}{M}\sum_{m=1}^M\nabla_\theta f(\theta_t, V_t^{(m)})\mathrm{d} t.
$$
Hence, rather than optimizing with respect to a single data set, we optimize with respect to $M$ different data sets in each iteration. While we only briefly mention the mini-batching throughout our analysis, one can easily see that it  is fully contained in our framework.

In preliminary experiments, we noticed that the Brownian motion for the sampling on the boundary is not very effective: possibly due to its localizing effect. Hence, we obtain training data on the boundary by sampling uniformly, which we consider justified as a mesh on the boundary scales more slowly as a mesh in the interior and as the boundary behavior of the considered PDE is rather predictable.

\subsubsection*{PDE and results} We now describe the partial differential equation that we aim to solve with our PINN model. After introducing  the PDE we immediately outline the PINN's architectures and show our estimation results.
We the train networks on Google Colab Pro using GPUs (often T4 and P100, sometimes K80). We are certain that a more efficient PDE solution could be obtained by classical methods, e.g., the finite element method. We do not compare the deep learning methods with classical methods, as we are mainly interested in SGP and SGD in non-convex continuous-data settings. Other methods that could approximate the PDE solution are not our focus.

The PDE we study is a transport equation; which is a linear first order, time-dependent model. One of the main advantages of studying this particular model is that we know an analytical solution that allows us to compute a precise test error.

\begin{example}[1D Transport equation] We solve the one-dimensional transport equation on the space $[0, 1]$ with periodic boundary condition:
\begin{equation}\label{eq:transport}
\left\{ \begin{array}{l}
u_t + u_x = 0, \ \ t\in[0, \infty),\ \  x\in [0, 1] \\
u(t=0) = \sin(2\pi x),\\
u(t,0) = u(t, 1).
\end{array} \right.
\end{equation}
\end{example} 
The neural network approximation of this PDE has already been studied by \citet{nangs}, our experiments partially use the code associated to this work.
The network architecture is defined by a three-layer deep neural network with 128 neurons per layer and a Rectified Linear Unit (ReLu) activation function. While theoretically the solution exists globally in time, we restrict $t$ to a compact domain and w.l.o.g, we assume $t\in[0, 1]$.
From the interior of the domain of time and space variables, i.e. $(0, 1)\times(0, 1)$, we use Algorithm~\ref{alg:RBM} with $\sigma=0.5$ to sample the train set of size $3 \cdot 10^4$ for SGPC and SGPD and we uniformly sample $600$ points for the train set of SGD. In addition, as a part of the train set for all three methods, we sample uniformly $20$ and $60$ points for the initial condition and periodic boundary condition, respectively.

The learning rate for SGD and SGPC is $0.01$. The learning rate for SGPD is defined as
$$\eta(t) = \frac{0.01}{\log(t+2)^{0.3}},$$
which is chosen such that the associated $\mu := 1/ \eta$ satisfies Assumption \ref{asmu}. 
For all three methods, we use Adam \citep[see][]{Adam} as the optimizer  to speed up the convergence; we use an $L^2$ regularizer with weight $0.1$ to avoid overfitting. 
Each model is trained over $600$ iterations with batch size $50$. The training process for SGPC and SGPD contains only one epoch, while we train $50$ epochs in the SGD case. We evaluate the models by testing on a uniformly sampled test set of size $2 \cdot 10^3$ and compare the predicted values with the theoretical solution
$$u(t, x) = \sin(2\pi (x-t)).$$
We obtain the losses, the predicted solutions, and the test errors by averaging over $30$ random experiments, i.e. $30$ independent runs of SGD, SGPC, and SGPD, respectively. We give the results in Figures \ref{fig:transport_loss}, \ref{fig:transport_sol}, and \ref{fig:transport_error}. Note that the timings are very similar for each of the algorithms, the fact that SGPC and SGPD require us to first sample reflected Brownian motions is negligible.

\begin{figure}[h!]
  \centering
    \includegraphics[width=0.4\textwidth]{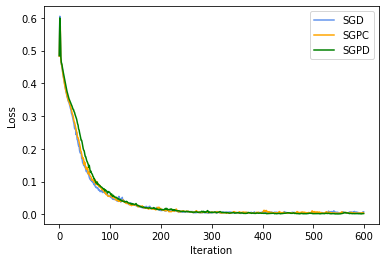}
    \includegraphics[width=0.4\textwidth]{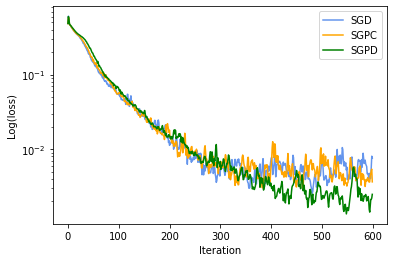}
\caption{The plots of the loss vs iteration and its log scale for SGD, SGPC, and SGPD. The losses are obtained by averaging over $30$ random experiments.}\label{fig:transport_loss}
\end{figure}

\begin{figure}[h!]
  \centering
    \includegraphics[width=0.3\textwidth]{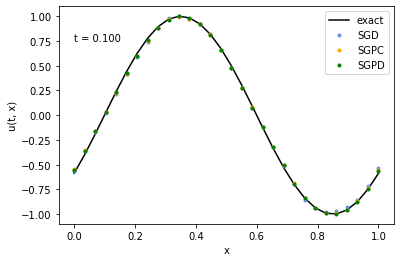}
    \includegraphics[width=0.3\textwidth]{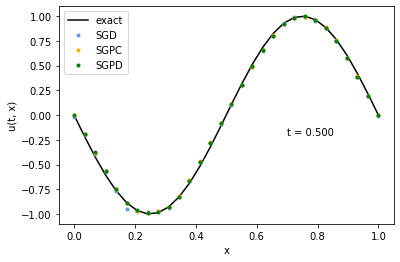}
    \includegraphics[width=0.3\textwidth]{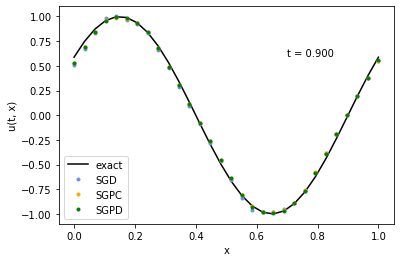}
\caption{The plots of the solutions at time $t=0.1,0.5,0.9$. We evaluate the models at $30$ uniformly sampled points. For each method, the predicted values are taken by averaging over the predicted values from the best models (the model that achieves the lowest training loss within the 600 iteration steps) in $30$ random experiments. The black curve is the theoretical solution.}\label{fig:transport_sol}
\end{figure}

\begin{figure}[h!]
  \centering
    \includegraphics[width=0.3\textwidth]{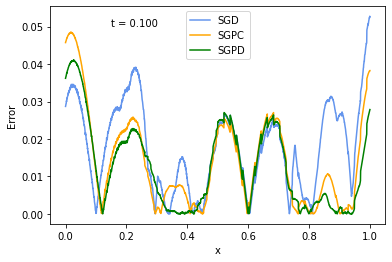}
    \includegraphics[width=0.3\textwidth]{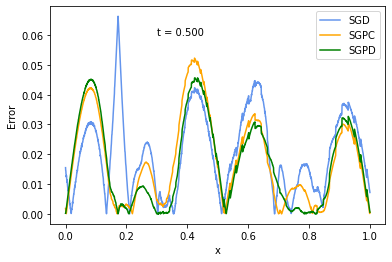}
    \includegraphics[width=0.3\textwidth]{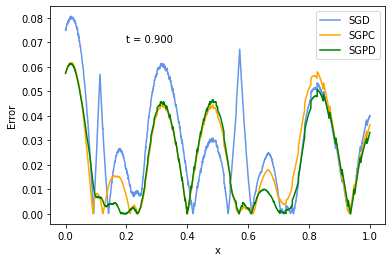}
\caption{The plots of the test error at time $t=0.1,0.5,0.9$. We evaluate the models at $2000$ uniformly sampled points. For each method, the predicted values are taken by averaging over the predicted values from the best models (the model that achieves the lowest training loss within the 600 iteration steps) in $30$ random experiments. At each point $x$, the error calculated by taking the absolute value of the difference between the predicted value and the true solution.}\label{fig:transport_error}
\end{figure}


From Figure \ref{fig:transport_loss}, we notice that while SGD and SGPC behave similarly, SGPD does converge faster. Here, Assumption \ref{asmu} provides a way of designing a non-constant learning rate in practice. On the test set, the mean squared errors for SGD, SGPC, and SGPD are  $4.5\cdot10^{-4}$, $3.5\cdot10^{-4}$, and $2.8\cdot10^{-4}$. These test errors refer to the averaged model output of the 30 models from independent experiments. Combined with Figure \ref{fig:transport_sol} and Figure \ref{fig:transport_error}, we observe that SGPC and SGPD generalize at least slightly better on the test set. This improved generalization error might be due to the additional test data generated by the Brownian motion, as compared to the fixed training set used in PINNs. The combination with the reduction of the learning rate in SGPD, appears to be especially effective.

\section{Conclusions and outlook} \label{Sec_conclusions}
  In this work we have proposed and analyzed a continuous-time stochastic gradient descent method for optimization with respect to continuous data. Our framework is very flexible: it allows for a whole range of random sampling patterns on the continuous data space, which is particularly useful when the data is streamed or simulated. Our analysis shows ergodicity of the dynamical system under convexity assumptions -- converging to a stationary measure when the learning rate is constant and to the minimizer when the learning rate decreases. In experiments we see the suitability of the method and the effect of different sampling patterns on its implicit regularization.
  
We end this work by now briefly listing some interesting problems for future research in this area.
First, we would like to learn how the SGP sampling patterns perform in large-scale (adversarially-)robust machine learning and in other applications we have mentioned but not studied here. Moreover, from a both practical and analytical perspective, it would be interesting to also consider non-compact index spaces $S$. Those appear especially in robust optimal control and variational Bayes. Finally, we consider the following generalization of the optimization problem \eqref{Eq:OptProb} to be of high interest:
\begin{equation*}
    \min_{\theta \in X} \int_S f(\theta, y) \Pi(\mathrm{d}y|\theta),
\end{equation*}
where $\Pi$ is now a Markov kernel from $X$ to $S$. Hence, in this case the probability distribution and the sampling pattern itself depend on the parameter $\theta$. Optimization problems of this form appear in the optimal control of random systems (e.g., \cite{Deqing}) and empirical Bayes (e.g., \cite{Casella}) but also in reinforcement learning (e.g., \cite{Sutton}).


\acks{JL and CBS acknowledge support from the EPSRC through grant EP/S026045/1 “PET++: Improving Localisation, Diagnosis and Quantification in Clinical and Medical PET Imaging with Randomised Optimisation”.}


\vskip 0.2in
\bibliography{sgd}

\end{document}